\theoremstyle{plain}
\newtheorem{theorem}{Theorem}[section]
\newtheorem{proposition}[theorem]{Proposition}
\newtheorem{lemma}[theorem]{Lemma}
\newtheorem{corollary}[theorem]{Corollary}
\theoremstyle{definition}
\theoremstyle{remark}
\title{Offline Action-Free Learning of Ex-BMDPs\\ by Comparing Diverse Datasets}
\author{Alexander Levine\textsuperscript{1}, Peter Stone\textsuperscript{1,2}, Amy Zhang\textsuperscript{1}}
\keywords{ representation learning, action-free RL, Ex-BMDP,  controllable state representations} 
\begin{document}

\maketitle  
\begin{abstract}
While sequential decision-making environments often involve high-dimensional observations, not all features of these observations are relevant for control. In particular, the observation space may capture factors of the environment which are not controllable by the agent, but which add complexity to the observation space. The need to ignore these ``noise'' features in order to operate in a tractably-small state space poses a challenge for efficient policy learning. Due to the abundance of video data available in many such environments, task-independent representation learning from action-free offline data offers an attractive solution. However, recent work has highlighted theoretical limitations in action-free learning under the Exogenous Block MDP (Ex-BMDP) model, where temporally-correlated noise features are present in the observations. To address these limitations, we identify a realistic setting where representation learning in Ex-BMDPs becomes tractable: when action-free video data from multiple agents with differing policies are available. Concretely, this paper introduces CRAFT (Comparison-based Representations from Action-Free Trajectories), a sample-efficient algorithm leveraging differences in controllable feature dynamics across agents to learn representations. We provide theoretical guarantees for CRAFT's performance and demonstrate its feasibility on a toy example, offering a foundation for practical methods in similar settings.
\end{abstract}

\section{Introduction}
Many sequential decision-making settings, such as robotic navigation environments, involve high-dimensional observations with many uncontrollable noise features. In order to efficiently learn policies for many downstream tasks, techniques for task-independent representation learning have been proposed. These techniques learn encoders that map the large observation space into a much smaller set of learned latent states, which can then be used to learn policies for downstream objectives more efficiently than learning from observations directly. 

In some such settings, such as social navigation, large amounts of offline \textit{video} data are available, collected either with similar robots or with human agents. In video data, observations are available, but \textit{action} labels are not. Past work has shown that this offline data can be used to learn encodings that can be leveraged for downstream tasks \citep{ma2023vip,nair2023r3m,seo2022reinforcement}.

However, in recent work, \citet{misra2024towards} has shown an important theoretical limitation to this approach: for some important classes of environments, efficient action-free representation learning is not possible. In particular, the Exogenous Block MDP (Ex-BMDP) model \citep{efroni2022provably} describes a class of environments where observations depend both on a deterministic, action-controlled latent state, and a potentially high-dimensional temporally-correlated noise factor, which is action-independent.
The goal of representation learning in the Ex-BMDP setting is to learn a mapping from the observation space to the much-smaller space of action-controlled latent states, while ignoring the noise factor.
\citet{misra2024towards} show that, even with high coverage over latent states, representation learning from  Ex-BMDPs is not possible in general. This property of Ex-BMDPs is in contrast to \textit{Block} MDPs, where the observation noise is not time-correlated, and which \citet{misra2024towards} demonstrates \textit{are} amenable to efficient action-free representation learning. At a high level, \citet{misra2024towards}'s hardness result stems from the fact that, without action labels, action-controllable features are indistinguishable from uncontrollable features. (For example, if the observations capture both the controllable ego-agent's state and other uncontrollable ``background'' agents' states, it is ambiguous what state should be encoded in the learned representation.)

In this work, we describe a realistic setting where representation learning for Ex-BMDPs is in fact tractable, and propose a provably sample-efficient algorithm for this setting. Specifically, we consider cases where videos are available of  \textit{multiple distinct agents} operating in the same environment. Intuitively, the idea is that \textit{controllable} latent features will \textit{differ} in their dynamics between datasets collected by different agents, while \textit{uncontrollable} features will have \textit{the same} dynamics in the two datasets. Our main result is that, if two agents' policies sufficiently differ at every latent state, then, under assumptions similar to those in \citet{misra2024towards} and \citet{efroni2022provably},  sample-efficient representation learning from offline action-free data collected by the two agents is possible. 

To show this fact, we propose a provably sample-efficient algorithm for Ex-BMDP representation learning without action labels, which we call \textbf{C}omparison-based \textbf{R}epresentations from \textbf{A}ction-\textbf{F}ree \textbf{T}rajectories, or \textbf{CRAFT}. CRAFT enjoys a sample complexity that depends only on the size of, and coverage assumptions on, the controllable latent states of the environment, and, logarithmically, on the size of the encoder hypothesis class. The sample complexity has no explicit dependence of the size of the space of exogenous noise. 
At a high level, CRAFT works by clustering sequential observation-pairs together based on how likely the pairs are to have been observed by each agent. 

In this work, we introduce CRAFT, prove its correctness and sample-complexity, and validate its use on a toy example problem. To our knowledge, this is the first work to propose a provably sample-efficient algorithm for action-free offline representation learning in the Ex-BMDP setting. While this work is theoretical in nature due to some restrictive assumptions on the setting (which are inherited from the prior work upon which we build; see discussion in Section \ref{sec:discussion}), we expect that the CRAFT algorithm can inspire practical methods that rely on the same principle of comparing action-free video datasets from diverse agents, in order to extract controllable feature representations.
\section{Background}
In this section, we define notation, formally introduce the action-free offline Ex-BMDP setting, and state our technical assumptions. 
\subsection{General Notation}
We use $[N]$ to denote the set $\{1,...,N\}$. For a sequence $x_1,..., x_N$, we use $x_{i:j}$ to denote the subsequence $x_i,x_{i+1}...,x_j$. For multisets $A$, $B$, we use $A \uplus B$ to denote the union of the two multisets, with multiplicities added.
\subsection{Ex-BMDP Framework}
The Ex-BMDP model describes a class of sequential decision-making environments where an agent's actions only operate on a hidden latent state, while the observations that the agent receives are also functions of a temporally-correlated exogenous ``noise'' factor, in addition to this controllable latent state.  Following \citet{efroni2022provably}, we consider the \textit{finite horizon} variant of this model, and also assume that the controllable latent dynamics are \textit{deterministic}.\footnote{\citet{efroni2022provably} presents an algorithm for efficient online representation learning of Ex-BMDPs with \textit{near-}deterministic latent dynamics: the controllable latent state deviates from deterministic behavior with frequency $\ll$ one time per episode. See Section \ref{sec:discussion} for further discussion.} Formally, a (reward-free) Ex-BMDP can be described as a tuple, 
    $\mathcal{M} = \langle H, \mathcal{A},\mathcal{X}_{1:H},\mathcal{S}^*_{1:H},\mathcal{E}_{1:H},\mathcal{Q}_{1:H},T_{2:H},\mathcal{T}^e_{2:H},s_1^*,P^e_1\rangle,$
where $H$ is the horizon (the number of steps per episode).
At each timestep $h \in [H]$, the observation $x_h \in \mathcal{X}_h$ is determined by two latent factors, $s_h^* \in \mathcal{S}_h^*$ and $e_h \in \mathcal{E}_h$. We assume that $\mathcal{S}_h^*$ is finite, while the $\mathcal{E}_h$ and the observation space $\mathcal{X}_h$ may be continuous.

The controllable latent state $s_h^*$ evolves deterministically, depending on the action $a_h$ taken by the agent: $s_{h+1}^* = T_{h+1}(s_h^*,a_h),$ where $T_{h+1} \in \mathcal{S}^*_h \times \mathcal{A} \rightarrow \mathcal{S}_{h+1}^*$ is a deterministic function, and $\mathcal{A}$ is the set of possible actions, which we assume to be finite. Note that $s_1^*$ is a constant. (Each episode starts at the same controllable latent state, so $\mathcal{S}_1^* = \{s_1^*\}.$) 

By contrast, the exogenous (noise) state evolves stochastically as a Markov chain, independent of actions. The initial exogenous state $e_1$ is sampled from the distribution $P^e_1 \in \mathcal{P}(\mathcal{E}_1)$, and subsequent observations are sampled as $e_{h+1} \sim \mathcal{T}^e_{h+1}(e_h)$, where $\mathcal{T}^e_{h+1} \in \mathcal{E}_h \rightarrow \mathcal{P}(\mathcal{E}_{h+1})$. We can refer to the distribution of exogenous states $e_h$ at time $h$ as  $P^e_h = \mathcal{T}^e_{h}( \mathcal{T}^e_{h-1}( ...\mathcal{T}^e_{2}(P^e_1)...))$, and the \textit{joint} distribution of exogenous states $e_h$ and $e_{h+1}$ as $P^e_{h:h+1}$.\footnote{ Note that in general, $P^e_{h:h+1} \neq P^e_h \times P^e_{h+1}$. }

The observation $x_h$ is then sampled as $x_h \sim \mathcal{Q}_h(s_h^*,e_h)$, where $\mathcal{Q}_h \in \mathcal{S}^*_h \times \mathcal{E}_h \rightarrow \mathcal{P}(\mathcal{X}_h)$ is the \textit{emission function}.  Under the Ex-BMDP model, we assume that the latent variables $s_h^*$ and $e_h$ can always be inferred from $x_h$: that is, $\mathcal{Q}_h$ has a deterministic  inverses $\phi^*_h$ and $\phi^{e}_h$, such that if $x_h$ is sampled from $\mathcal{Q}_h(s_h^*,e_h)$, then $\phi^*_h(x_h) = s_h^*$ and $\phi^{e}_h(x_h) = e_h$. (This assumption is the \textit{block} assumption referred to in the name ``Exogenous Block MDP.'')

The agent does not have access to $s_h^*$, $e_h$, or the ``ground-truth'' encoders $\phi^*_h, \phi^{e}_h$; instead, it only has access to the observations $x_h$. The goal of representation learning is to learn an encoder $\phi_h: \mathcal{X}_h \rightarrow \mathbb{N}$ for each timestep $h$, that approximates $\phi^*_h$, up to label permutation. (We are \textit{not} interested in learning the exogenous encoder $\phi^{e}_h$, because it is assumed that this noise factor is irrelevant for control, and may be very large.)

\subsection{Action-Free, Offline Setting} \label{sec:action_free}
In this work, we consider a setting where the learner has access to multiple sets of  offline trajectories collected by different agents, but where only the observations $x_h$, and \textit{not} the actions $a_t$, are available. For simplicity, in this work we assume that there are only datasets from two distinct agents, but the proposed method could be straightforwardly generalized to support more agents.

We refer to the two trajectory datasets as $\tau_A$ and $\tau_B$.
Each trajectory in $\tau_A$ (or $\tau_B$) is a sequence of observations $x_{1:H}$. We use $(\tau_A)_{h:h+i}$ to refer to the multiset of tuples of observations $(x_h,x_{h+1},...,x_{h+i})$ for each trajectory in $\tau_A$, and use $\tau_A[\{i,i',i''\}]$ to refer to the subset consisting of three \textit{trajectories} (indexed $i,i'$ and $i''$) in $\tau_A$.  We use $\mathcal{D}_A^*(s^*_{h},s^*_{h+1})$ to refer to the multiset consisting of all observation pairs $(x_h,x_{h+1}) \in (\tau_A)_{h:h+1}$ such that $\phi^*_h(x_h) =s^*_{h} $ and $\phi^*_{h+1}(x_{h+1})=s^*_{h+1} $; and  $\mathcal{D}^*(s^*_{h},s^*_{h+1}) := \mathcal{D}_A^*(s^*_{h},s^*_{h+1}) \uplus \mathcal{D}_B^*(s^*_{h},s^*_{h+1})$. We also define $\mathcal{D}^*(s^*_{h})$ as the multiset of observations in $x_h \in (\tau_A)_{h} \uplus (\tau_B)_{h} $ such that $\phi^*_h(x_h) =s^*_{h}$.

\subsection{Technical Assumptions: Data Collection Method}
As in previous works in offline representation learning in the Ex-BMDP setting \citep{misra2024towards,islam2023principled,levine2024multistep,lamb2023guaranteed}, we assume that the agents' actions $a_h$ are chosen \textit{independently} of the observations $x_{1:h}$, given the controllable latent states $s^*_{1:h}$. In other words, roughly speaking, we assume that the agents used to collect the offline data choose actions based \textit{only} on the controllable latent state, not on the full observation. \citet{misra2024towards} justifies this assumption by positing that the offline data are likely collected by expert agents which ``would not make decisions based on noise.'' We discuss this assumption further in Section \ref{sec:discussion}.

Beyond sharing this noise-independence assumption, our technical assumptions on the data-collection policies are otherwise significantly weaker that those in \citet{misra2024towards}. While \citet{misra2024towards} assumes that each trajectory is generated by a \textit{Markovian} policy (i.e, that $a_h  \sim \pi(s^*_h)$, for some $\pi \in \mathcal{S}^*_h \rightarrow \mathcal{P}(\mathcal{A})$), and furthermore that the policies used to generate each trajectory are chosen i.i.d., we make neither such assumption. In other words, we allow for both non-Markovian  behavioral policies -- for example, we could have a policy in the form $a_h \sim \pi(s^*_{1:h})$ -- and non-i.i.d. sampling of behavioral policies between episodes -- for example: the data collector could evolve over time between episodes, in order to, for instance, maximize the diversity of visited latent states. 

Explicitly, our \textit{only} assumption on the data-collection mechanism is that the process which generates action sequences $a_{1:H}$ -- and therefore, equivalently, controllable latent-state sequences $s^*_{1:H}$ -- is independent from observation noise over \textit{both entire datasets}. Formally, for a dataset $\tau$ that consists of trajectories $x_{1:H}$, let $\phi^*(\tau)$ denote the corresponding set of controllable latent state trajectories $s^*_{1:H}$, and $\phi^{e}(\tau)$ denote the corresponding set of exogenous state trajectories $e_{1:H}$. Then, our only requirement on the data collection mechanism is that it ensures:
\begin{equation}
\begin{split}
&\Pr(\tau_A, \tau_B) = \Pr(\phi^*(\tau_A), \phi^*(\tau_B))\\
&\cdot \Pr_{P_1^e,\mathcal{T}^e}(\phi^{e}(\tau_A))\cdot \Pr_{P_1^e,\mathcal{T}^e}( \phi^{e}(\tau_B))
\cdot  \Pr_Q(\tau_A|\phi^*(\tau_A),\phi^{e}(\tau_A) 
) \cdot \Pr_Q(\tau_B|\phi^*(\tau_B),\phi^{e}(\tau_B) 
). \label{eq:noise_free_policy_property}
\end{split}
\end{equation}
To see why Equation \ref{eq:noise_free_policy_property} is a sufficiently strong assumption to allow for useful analysis despite its apparent generality, fix any two latent states $s^*_{h},s^*_{h+1} \in \mathcal{S}_{h:h+1}^*$, and  consider the multiset $\mathcal{D}_A^*(s^*_{h},s^*_{h+1})$ as defined in Section \ref{sec:action_free}; also let $n := |\mathcal{D}_A^*(s^*_{h},s^*_{h+1})|$. Then the marginal distribution of $\mathcal{D}_A^*(s^*_{h},s^*_{h+1})$ can be described as:
\begin{equation}
        \mathcal{D}_A^*(s^*_{h},s^*_{h+1}) \sim [(\mathcal{Q}(s_h^*,e_h), \mathcal{Q}(s_{h+1}^*,e_{h+1})) | e_h, e_{h+1} \sim P^e_{h:h+1} ]^n.
\end{equation}
We see that $  \mathcal{D}_A^*(s^*_{h},s^*_{h+1})$ consists of i.i.d. samples from a fixed, policy-independent distribution. Consequently, this property will frequently allow us to use standard concentration bounds in our analysis, while still allowing for a wide class of non-Markovian, non-i.i.d. behavioral policies.

While we do not require the behavioral policies to be Markovian, it will be useful to refer to the ``empirical policies'' $\pi^{emp.}_A$ and  $\pi^{emp.}_B$, defined as:
\begin{equation}
    \pi^{emp.}_A(s^*_{h+1} | s^*_{h}) := \frac{|\mathcal{D}_A^*(s^*_{h},s^*_{h+1})|}{\sum_{s'\in \mathcal{S}^*_{h+1}} |\mathcal{D}_A^*(s^*_{h},s')|},
\end{equation}
and likewise for $\pi^{emp.}_B$. This is the empirical likelihood in the provided data that agent A (respectively, B) chooses an action that results in a transition from $s^*_{h}$ to $s^*_{h+1}$.

\subsection{Technical Assumptions: Coverage, Policy Diversity, and Realizability} \label{sec:cpdr}
In order to learn accurate latent state encoders $\phi_{1:H}$, we need to ensure adequate coverage over all latent states $s_h$.  For all timesteps $h$, and all pairs of latent states $(s^*_{h},s^*_{h+1}) \in \mathcal{S}_h^* \times \mathcal{S}_{h+1}^*$ such that $s^*_{h+1} = T_h(s^*_h,a)$ for some action $a$,  we require that 
    \begin{equation}
       \frac{ | \mathcal{D}^*(s^*_{h}, s^*_{h+1})|}{|\tau_A|+|\tau_B|} \geq \nu, \label{eq:pair_coverage}
    \end{equation}

for some known lower-bound $\nu$. This coverage assumption is presented in terms of the \textit{actually realized offline datasets} $\tau_A$ and $\tau_B$. By contrast, \cite{misra2024towards} assumes that trajectories in the offline dataset are sampled i.i.d., and makes coverage assumptions on the \textit{policies} used sample them.

We also require that the two agents, which produced datasets $\tau_A$ and $\tau_B$, behaved sufficiently differently so that we can infer the latent dynamics from their differences. In particular, for some known lower bound $\alpha > 0$, we require that, $\forall h \in [H-1], \forall s^*_h \in \mathcal{S}^*_h$, and for any two successor states $s_{h+1}^*, s_{h+1}'^* \in \mathcal{S}^*_{h+1}$, such that $s^*_h$ can transition to either $s_{h+1}^*$  or $s_{h+1}'^*$ under $T_h$, we have, either:

 \begin{equation}
       e^\alpha \cdot \frac{  \pi^{emp.}_B(s'^*_{h+1}| s^*_{h})}{\pi^{emp.}_B(s^*_{h+1}| s^*_{h})} \leq \frac{ \pi^{emp.}_A(s'^*_{h+1}| s^*_{h})}{\pi^{emp.}_A(s^*_{h+1}| s^*_{h}) }, \text{ or,  }   e^\alpha \cdot \frac{  \pi^{emp.}_A(s'^*_{h+1}| s^*_{h})}{\pi^{emp.}_A(s^*_{h+1}| s^*_{h})} \leq \frac{ \pi^{emp.}_B(s'^*_{h+1}| s^*_{h})}{\pi^{emp.}_B(s^*_{h+1}| s^*_{h}) }. \label{eq:alpha_seperation}
    \end{equation}

In other words, the relative likelihood of transitioning to $s'^*_{h+1}$, versus transitioning to $s^*_{h+1}$, is different in $\tau_A$ and $\tau_B$ by a multiplicative factor of at least $e^\alpha$.

Finally, we also require that the difference in $\textit{total}$ state coverage between $\tau_A$ and $\tau_B$ for any pair of sequential latent states $(s^*_h,s^*_{h+1})$ is not \textit{too} extreme. We require that, for a known lower-bound $\eta$:
\begin{equation}
    \frac{|\mathcal{D}_A^*(s^*_{h},s^*_{h+1})|}{|\mathcal{D}^*(s^*_{h},s^*_{h+1})|} \geq \eta, \label{eq:eta_coverage}
\end{equation}
and likewise for $\mathcal{D}_B^*(s^*_{h},s^*_{h+1})$. 
Our sample-complexity bound also depends on an additional parameter $\nu'$, which does \textit{not} need to be known a priori.  This is the minimum single-state coverage~ratio:

    \begin{equation}
      \nu' := \min_{h\in [H],s_h^*\in \mathcal{S}_h^*} \frac{ | \mathcal{D}^*(s^*_{h})|}{|\tau_A|+|\tau_B|} .
    \end{equation}
\textbf{Function Approximation Assumptions.} We assume access to hypothesis classes of encoder functions $\Phi_{1:H}$, as well as binary classification functions $\mathcal{G}_h \subseteq \mathcal{X}_h \rightarrow \{0,1\}$. We make standard~realizability assumptions (in brief, $\phi^*_h \in \Phi_h$, and $\forall s^*_h,s'^*_h, \exists g \in  \mathcal{G}_h$ such that $g$ can perfectly distinguish between observations of $s^*_h$ and those of $s'^*_h$ ) and assume access to training oracles. See Appendix \ref{sec:aaprealizability} for further information about these assumptions.
We use $|\Phi|$ to denote $\max_h |\Phi_h|$. We also assume a known upper-bound $N_s$ on $\max_h |\mathcal{S}_h|$; that is, the maximum output range of any encoder in $\Phi_h$.  

\section{Method}
In this section, we describe the CRAFT algorithm. First, however, we motivate its design by examining a simpler version of the problem setting and of the algorithm.
\subsection{Intuition: Single-step, Binary Action Case} \label{sec:draft_two_step}
In this section, we present a toy algorithm for an extremely simplified version of the  Ex-BMDP representation learning problem: an explanation of the toy algorithm captures some of the intuitions of CRAFT, while the limitations of the toy algorithm will motivate some of the less-intuitive algorithmic details. We can call this naive, ``first draft'' form of the CRAFT algorithm ``DRAFT.''

We first consider the Ex-BMDP model with $H = 2$ and $|\mathcal{A}| = |\mathcal{S}^*_2| = 2$. In this setting, the representation learning problem reduces to the task of learning to distinguish the two latent states $s_2^*$ and $s_2'^* \in \mathcal{S}^*_2$ that can occur at  $h = 2$. (See Figure \ref{fig:2_state_diagram}.)
\begin{figure}[h!]
    \centering

\begin{subfigure}[t]{0.49\textwidth}
\centering
    {\center
\begin{tikzpicture}[>=stealth, ->, node distance=2cm]
    \tikzstyle{state} = [circle, draw, minimum size=0.8cm, inner sep=0pt]
    \node[state] (s1) at (0,0) {$s_1^*$};
    \node[state] (s2) at (3,.6) {$s_2^*$};
    \node[state] (s2p) at (3,-0.6) {$s_2'^*$};
    \draw (s1) -- (s2);
    \draw (s1) -- (s2p);

\end{tikzpicture}

}
\caption{Latent-state transition graph.}
    \label{fig:2_state_diagram}
\end{subfigure}
\begin{subfigure}[t]{0.49\textwidth}
    \centering
\includegraphics[width=\linewidth]{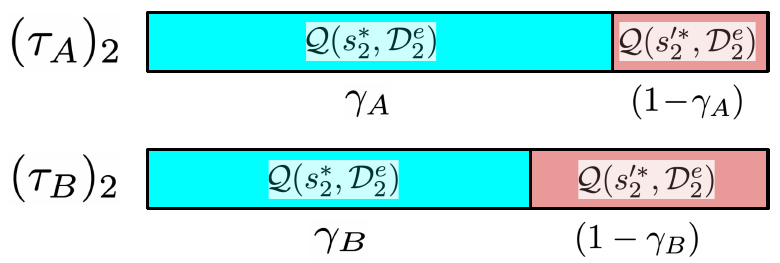}
    \caption{Composition of the datasets $(\tau_A)_2$ and $(\tau_B)_2$.}
    \label{fig:draft_two_step}
\end{subfigure}
\caption{Dynamics and composition of the two-step Ex-BMDP example in Section \ref{sec:draft_two_step}.}
\end{figure}

We will also assume that $|\tau_A| = |\tau_B|  =m$. Then, $(\tau_A)_1$ and $(\tau_B)_1$ both consist entirely of $m$ i.i.d. samples of the same distribution $\mathcal{Q}(s_1^*,P^e_1 )$. The structures of $(\tau_A)_2$ and $(\tau_B)_2$ are (slightly) more complex. If we let $\gamma_A := \pi^{emp.}_A(s^*_2|s^*_1)$ and $\gamma_B := \pi^{emp.}_B(s^*_2|s^*_1)$, then we see that the dataset  $(\tau_A)_2$ consists of $m\cdot\gamma_A$ i.i.d. samples of the distribution $\mathcal{Q}(s_2^*,P^e_2)$ and $m\cdot(1-\gamma_A)$ i.i.d. samples of the distribution $\mathcal{Q}(s_2'^*,P^e_2)$, while $(\tau_B)_2$ consists of $m\cdot\gamma_B$ i.i.d. samples of the distribution $\mathcal{Q}(s_2^*,P^e_2)$ and $m\cdot(1-\gamma_B)$ i.i.d. samples of the distribution $\mathcal{Q}(s_2'^*,P^e_2)$. (See Figure \ref{fig:draft_two_step}.)

Now, by assumption, the agents generating datasets $\tau_A$ and $\tau_B$ are not behaviorally identical: this means that $\gamma_A \neq \gamma_B$. Without loss of generality, assume that $\gamma_A > \gamma_B$. Our key insight is that, in the limit as $m \rightarrow \infty$, the Bayes optimal classifier to distinguish a sample $x_2$ selected from $(\tau_A)_2$ from a sample $x_2$ selected from $(\tau_B)_2$ is in fact (up to label permutation) the latent state encoder $\phi_2^*(x_2)$. Concretely, consider a classifier $\phi_2'$ trained to minimize the 0-1 classification loss between $(\tau_A)_2$ and $(\tau_B)_2$. In the limit of infinite data, we can define this classification loss function as:
\begin{equation}
    \mathcal{L}_{pop}(\phi_2) := \lim_{m \rightarrow\infty}\mathop{\mathbb{E}}_{x \in (\tau_A)_2}\hspace{-.3em} \phi_2(x) +  \mathop{\mathbb{E}}_{x \in (\tau_B)_2} \hspace{-.3em}1-\phi_2(x),  \label{eq:draft_inf}
\end{equation}

From Equation \ref{eq:draft_inf} and the composition of the datasets:
\begin{equation}
\begin{split}
        \mathcal{L}_{pop}(\phi_2) &= \gamma_A \mathop{\mathbb{E}}_{x \sim \mathcal{Q}(s_2^*,P^e_2)} \phi_2(x) + (1-\gamma_A)\mathop{\mathbb{E}}_{x \sim \mathcal{Q}(s_2'^*,P^e_2)}  \phi_2(x) \\ &+\gamma_B\mathop{\mathbb{E}}_{x \sim \mathcal{Q}(s_2^*,P^e_2)}(1- \phi_2(x)) + (1-\gamma_B)\mathop{\mathbb{E}}_{x \sim \mathcal{Q}(s_2'^*,P^e_2)} (1-\phi_2(x))\\
    &= (\gamma_A- \gamma_B)[\mathop{\mathbb{E}}_{x \sim \mathcal{Q} (s_2^*,P^e_2)}\phi_2(x) + \mathop{\mathbb{E}}_{x \sim \mathcal{Q} (s_2'^*,P^e_2)}1- \phi_2(x) ]  + C\\
    &= - (\gamma_A- \gamma_B)(\Pr(\phi_2(x)= 0|\phi^*(x) = s_2^*) + \Pr(\phi_2(x)= 1|\phi^*(x) = s_2'^*)) + C
\end{split} \label{eq:draft_asymptotic_loss}
\end{equation}
where $C$ is independent of $\phi_2$. Under the mapping $(0 \rightarrow s^*_2, 1\ \rightarrow s_2'^*),$  we see that $\mathcal{L}_{pop}(\phi_2)$ scales linearly with the rate that $\phi_2$ produces incorrect encodings, with a $\delta$-increase in $\mathcal{L}_{pop}$ corresponding to an $\mathcal{O}((\gamma_A-\gamma_b)\cdot \delta)$ increase in encoder failure. In particular, $\mathcal{L}_{pop}$ is uniquely minimized by the ground-truth encoder $\phi^*_2$.
We now examine how this simple algorithm functions with finite datasets:
\begin{algorithm}[h!]
    \begin{algorithmic}
    \REQUIRE{Datasets $\tau_A$, $\tau_B$ with $H=2$, hypothesis class $\Phi_2 \in \mathcal{X}_2 \rightarrow \{0,1\}$.}
    \STATE{Let $\phi_1' := \mathcal{X}_1 \rightarrow 0$, and $\phi'_2 := \arg \min_{\phi_2 \in \Phi_2} \mathcal{L}(\phi_2)$, where:
    \vspace{-9pt}
    \begin{equation}
        \mathcal{L}(\phi_2) := \frac{1}{|\tau_A|}\hspace{-3pt}\sum_{x\in(\tau_A)_2} \hspace{-0.6em}\phi_2(x) + \frac{1}{|\tau_B|}\hspace{-3pt}\sum_{x\in(\tau_B)_2}\hspace{-0.6em}1-\phi_2(x).
    \end{equation}}
    \vspace{-9pt}
\STATE{\hspace{-1em}\textbf{Return: } $\phi'_1,\phi'_2$}
    \end{algorithmic}
    \caption{``DRAFT'' algorithm for $H=2$ Ex-BMDPs.} \label{alg:draft}
\end{algorithm}

With finite $m$, our main concern is overfitting: if $\Phi_2$ is large enough such that some  $\phi_2' \in \Phi_2$ can perfectly distinguish the samples that happen to fall into $(\tau_A)_2$ from those in $(\tau_B)_2$, then this $\phi_2'$ will attain a lower empirical loss than $\phi^*_2$, while being bad at distinguishing  $\mathcal{Q}(s_2^*,P^e_2)$ from $\mathcal{Q}(s_2'^*,P^e_2)$ in general. However, as long as $|\Phi_2|$  is controlled, we can use standard concentration inequalities to limit this overfitting. In particular, 
\begin{equation}
    | \mathcal{L}(\phi_2)-\mathcal{L}_{pop}(\phi_2)| \approx \mathcal{O}(1/\sqrt{m}). \label{eq:draft_concent}
\end{equation}
By combining Equations \ref{eq:draft_concent} and \ref{eq:draft_asymptotic_loss}, we can determine how quickly $\phi'_2 =\arg \min \mathcal{L}(\cdot)$ will approach  $\phi^*_2 = \arg \min \mathcal{L}_{pop}(\cdot)$ as $m$ increases, in terms of \textit{accuracy as a latent state encoder}. To ensure $\phi_2'$ approximates $\phi_2^*$ with a failure rate of at most $\epsilon$, we need $m \approx \mathcal{O}\left(\frac{1}{(\gamma_A-\gamma_b)^2\epsilon^2} \right) \text{ samples.}$
Intuitively, the smaller the difference between behavior policies of the two agents $(\gamma_A - \gamma_B)$, the more samples are required to attain a given accuracy of encoder.
\subsubsection{``DRAFT'' doesn't generalize easily to the long-horizon setting}
A naive first attempt to extend ``DRAFT'' to the $H>2$ case might be to apply it \textit{recursively}. That is, once the two distinct latent states at $h=2$ can be decoded, we can extract from $(\tau)_A$ and $(\tau)_B$ the trajectories which contain (say) $s^*_2$, and then run DRAFT again on these samples, to obtain an encoder that can separate the two states into which $s^*_2$ can transition.\footnote{Here, we are continuing to assume $|\mathcal{A}| =2$, and that the two actions have different effects from each other in each state.} We can then repeat this procedure for $s'^*_2$. If $s'^*_2$ and $s^*_2$ both transition to the same latent state (say $s'^*_3$), we can easily detect this situation by attempting to learn binary classifiers between the observations of each state that succeeds  $s^*_2$ and each state that succeeds  $s'^*_2$: if it is impossible distinguish these observations better than by random chance, then the two successor states are the same:
\begin{figure}[h!]
    \centering
\includegraphics[width=\linewidth]{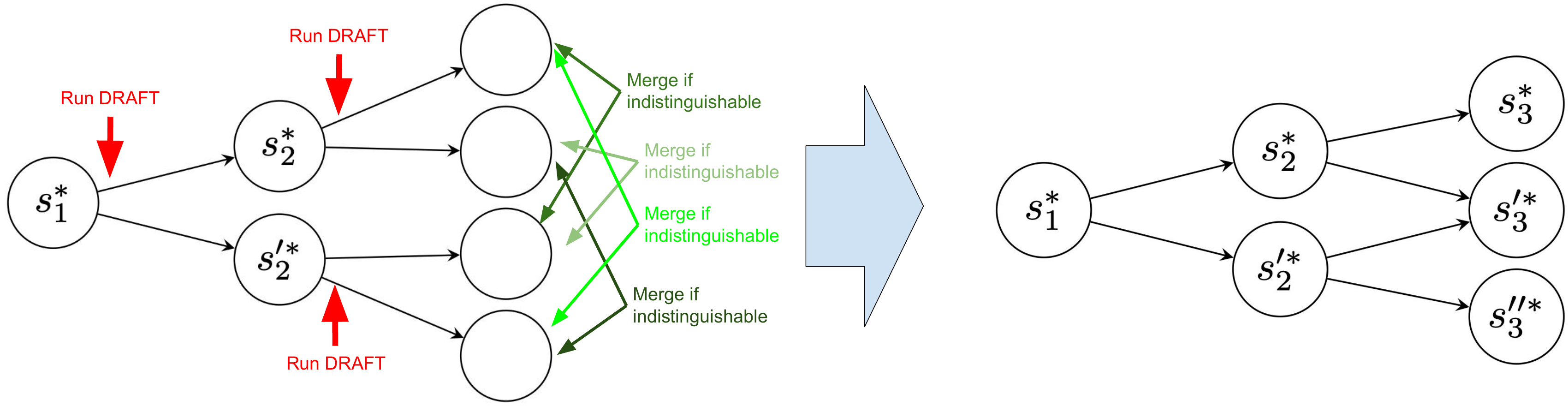}
\caption{Illustration of recursive use of the ``DRAFT'' algorithm.}
\label{fig:recursive_draft}
\end{figure}

However, it turns out that it is difficult (and may be impossible) to prove an efficient sample-complexity bound for this recursive algorithm. This is for two reasons:
\begin{enumerate}
    \item After the first timestep, the input datasets to subsequent applications of DRAFT are corrupted by mis-classified samples, such that the datasets are no longer mixtures of i.i.d. samples from distributions $\mathcal{Q}(s^*,P^e_h)$ for multiple values of $s^*\in\mathcal{S}_h^*$.
    \item DRAFT is highly sensitive to small changes in its input dataset.
\end{enumerate}
To see (1), note that the encoder $\phi_2'$ returned by the first application of DRAFT will misclassify some $\mathcal{O}(\sqrt{m}/(\gamma_A-\gamma_B))$ samples. Moreover, these misclassified samples will \textit{not} be chosen uniformly: 
the encoder $\phi'_2$ may rely on some spurious features of the observations $x_2$, which depend on $e_2$, to classify these observations. Consequently, because $e_3$ \textit{also} depends on $e_2$, the exogenous noise distributions of the observations $x_3$ of state $s'^*_3$  (for instance, in the dynamics example in Figure~\ref{fig:recursive_draft}) present in the datasets for the recursive DRAFT instances associated with $s^*_2$ and $s'^*_2$  will differ from each other, and each will differ from $\mathcal{Q}(s'^*_3,P^e_h)$, in a way that depends on the choice of $\phi'_2$.
Moreover, because $\phi'_2$ is trained to distinguish $\tau_A$ from $\tau_B$, this distribution shift may have different effects on the distributions of observations from the two agents.

For (2), consider the (single step) DRAFT algorithm with some small number $\epsilon_{bad}\cdot m$ of samples from $\mathcal{X}_2$ arbitrarily introduced to the datasets $(\tau_A)_2$ and $(\tau_B)_2$. For concreteness, we will replace some of the samples in $(\tau_A)_2$ for which $\phi_2^*(x) = s'^*_2$ with ``bad'' samples for which it \textit{still} holds that $\phi_2^*(x'_{bad}) = s'^*_2$, but which are not drawn i.i.d. from $\mathcal{Q}(s'^*_2,P^e_2)$. Instead, we assume that these samples belong to some part of the support of $\mathcal{Q}(s'^*_2,P^e_2)$ which is typically sampled with negligible probability; we can call their distribution $\mathcal{D}'_{bad}$. Similarly, we replace $\epsilon_{bad}\cdot m$ samples of $(\tau_B)_2$ for which $\phi_2^*(x) = s^*_2$ with ``bad'' samples for which $\phi_2^*(x_{bad}) = s^*_2$, but which are drawn from $\mathcal{D}_{bad}$. 
We consider the infinite-dataset limit. From Equation \ref{eq:draft_inf} and the composition of the datasets:
\begin{equation*}
\begin{split}
        &\mathcal{L}_{pop}(\phi_2) = \gamma_A \hspace{-.8em}\mathop{\mathbb{E}}_{x \sim \mathcal{Q}(s_2^*,P^e_2)}\hspace{-.8em} \phi_2(x)+  (1-\gamma_A - \epsilon_{bad})\hspace{-.6em}\mathop{\mathbb{E}}_{x \sim \mathcal{Q}(s_2'^*,P^e_2)} \hspace{-.5em} \phi_2(x)  + \epsilon_{bad}\hspace{-.8em}\mathop{\mathbb{E}}_{x \sim \mathcal{D}_{bad}'}\hspace{-.7em}(\phi_2(x)) 
        \\ & +(1-\gamma_B)\hspace{-.8em}\mathop{\mathbb{E}}_{x \sim \mathcal{Q}(s_2'^*,P^e_2)}\hspace{-.8em} (1-\phi_2(x)) +(\gamma_B-\epsilon_{bad})\hspace{-.8em}\mathop{\mathbb{E}}_{x \sim \mathcal{Q}(s_2^*,P^e_2)}\hspace{-.8em}(1- \phi_2(x)) +\epsilon_{bad}\hspace{-.8em}\mathop{\mathbb{E}}_{x \sim \mathcal{D}_{bad}}\hspace{-.7em}(1- \phi_2(x)) 
        \\
    &= - (\gamma_A- \gamma_B+\epsilon_{bad})\big(\Pr(\phi_2(x)= 0|x \sim \mathcal{Q}(s_2^*,P^e_2))  + \Pr(\phi_2(x)= 1|x \sim \mathcal{Q}(s_2'^*,P^e_2))\big) \\
    &\hspace{6.5em}-\epsilon_{bad}\big(\Pr(\phi_2(x)= 1|x \sim \mathcal{D}_{bad})  + \Pr(\phi_2(x)= 0|x \sim  \mathcal{D}'_{bad})\big) +C
\end{split}
\end{equation*}

Note that the ground-truth encoder $\phi^*_2$ has loss $\mathcal{L}_{pop}(\phi^*_2) = -(\gamma_A-\gamma_B+\epsilon_{bad}) + C$. However, we can construct an encoder $\phi_2'$ that incorrectly encodes all samples in $\mathcal{D}_{bad}$ as belonging to $s_2'^*$ (i.e., returns 1 on these samples), and incorrectly encodes all samples in $\mathcal{D}'_{bad}$ as belonging to $s_2^*$. Furthermore, we can construct this $\phi_2'$ to also have an accuracy of only $1- \epsilon_{bad}/(\gamma_A-\gamma_B + \epsilon_{bad})$ on the samples in the ``natural'' distributions $\mathcal{Q}(s_2^*,P^e_2)$ and $\mathcal{Q}(s_2'^*,P^e_2)$. Surprisingly, by the above expression for $\mathcal{L}_{pop}$, we see that this less-accurate encoder \textit{also} has loss $\mathcal{L}_{pop}(\phi'_2) = -(\gamma_A-\gamma_B+\epsilon_{bad}) + C$. Therefore, if this encoder $\phi'_2$ is included in the hypothesis class $\Phi_2$, then the ERM step of Algorithm \ref{alg:draft} may just as easily return   $\phi'_2$ rather than $\phi^*_2$. (Furthermore, the realizability assumption does not guarantee that any lower-loss ``third option'' encoders exist.)  Consequently,  the misclassification rate  can ``blow up'' from $\epsilon_{bad}$ to $\epsilon_{bad}/(\gamma_A-\gamma_B + \epsilon_{bad})$, that is, by a  \textit{multiplicative} factor of  $1/(\gamma_A-\gamma_B + \epsilon_{bad})$ -- even before accounting for finite datasets.

We can now see that DRAFT  both (1) can make non-uniformly-distributed encoding errors, and (2) given as input a dataset with (non-uniform) errors, can produce output ``next-state'' datasets with a \textit{multiplicatively-increased} error rate. Therefore, it seems difficult to derive a sample-complexity analysis of ``recursive DRAFT'' that does not require a number of samples  exponential in the Ex-BMDP horizon $H$.\footnote{We are not claiming that ``Recursive DRAFT'' \textit{actually does} require exponential samples in $H$, simply that there are clear obstacles to proving that it \textit{does not.} } In the next section, we present our CRAFT algorithm, which is intentionally designed to solve the multiple-agent  action-free Ex-BMDP representation learning problem while \textit{avoiding} recursively training state classifiers on datasets derived from the output of previous-timestep state classifiers. We then can sidestep the issues with ``Recursive DRAFT'' shown here.

Note that, for Ex-BMDPs with deterministic latent dynamics, the issues with recursion seen here are unique to the offline, action-free setting. In the online setting, as in \citet{efroni2022provably}, once the dynamics up to timestep $h$ have been learned, ``fresh'' samples of any given latent state $s^*_h$ can then be constructed via closed-loop planning: there are no issues with compounding error.\footnote{Even with \textit{offline} data, if action labels are available and the latent dynamics up to timestep $h$ are known perfectly (which is achievable if the latent dynamics are deterministic), then ``error-free'' datasets can still be constructed  for timestep $h+1$.} The action-free offline setting thus presents a new set of issues requiring a novel algorithmic solution.

\subsection{CRAFT: High-Level Description of Method} \label{sec:craft_description}
\begin{figure}[t]
    \centering
\includegraphics[width=0.945\linewidth]{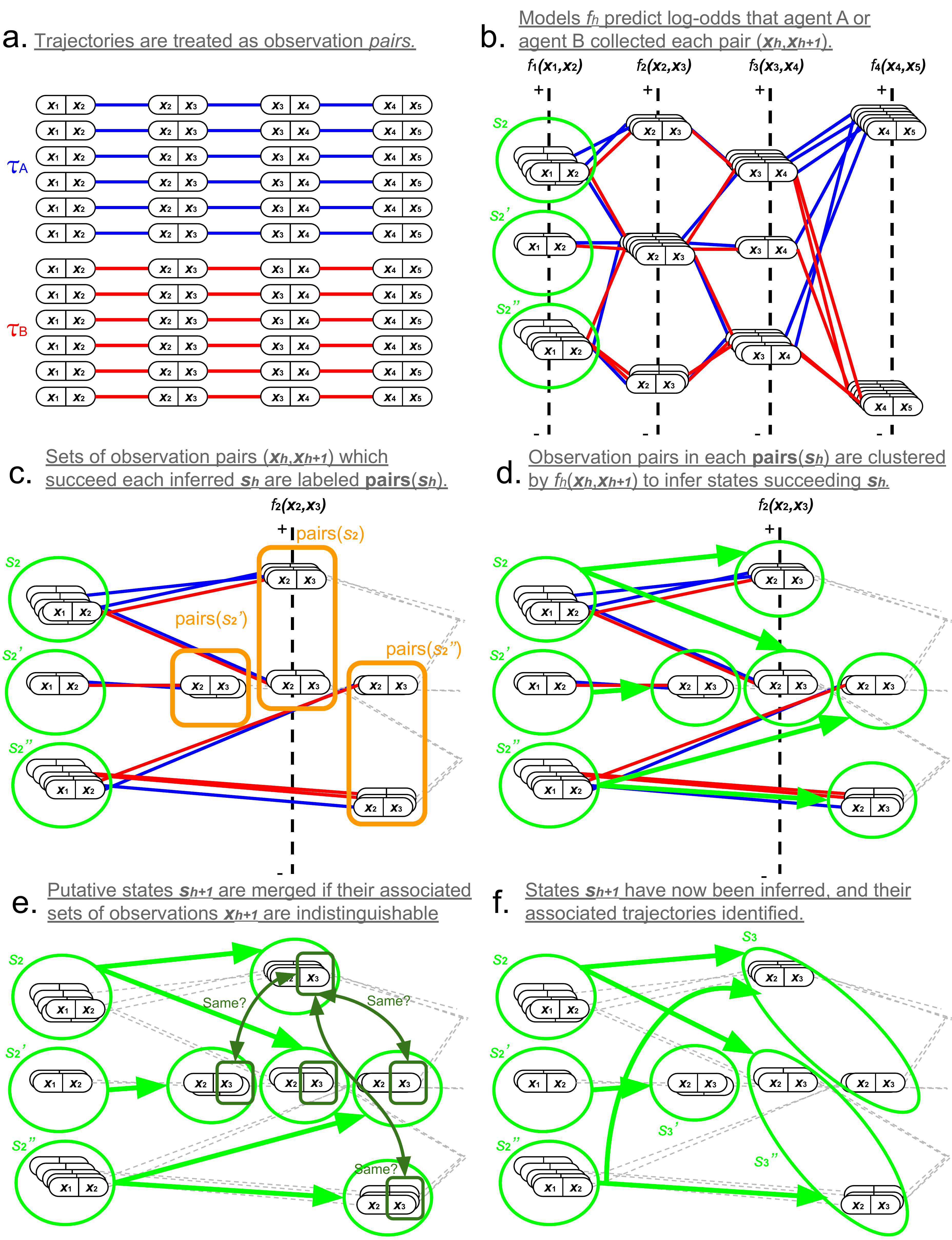}
    \caption{Schematic of the CRAFT algorithm. See text of Section \ref{sec:craft_description}.}
    \label{fig:CRAFT_diagram}
\end{figure}
Here, we give a high-level overview of the CRAFT algorithm. The complete algorithm is presented as Algorithm \ref{alg:main_alg} in Appendix \ref{sec:proofs}. See also Figure \ref{fig:CRAFT_diagram} for a pictorial overview of the approach.

In CRAFT, we initially treat each trajectory as a sequence of observation \textit{pairs}: $(x_1,x_2)$, $(x_2,x_3)$, ... $(x_{H-1},x_H)$. (See Figure \ref{fig:CRAFT_diagram}-a.) For each timestep-pair $(h,h+1)$, we train a model $f_h$ to predict, given a sample $(x_{h},x_{h+1})$, whether the pair was collected by agent $A$ or agent $B$: that is, whether $(x_{h},x_{h+1})$ was selected from $(\tau_A)_{h,h+1}$ or $(\tau_B)_{h,h+1}$. However, unlike in ``DRAFT'', we do not treat this problem as hard binary classification. Instead, we train $f_h(x_{h},x_{h+1})$ to predict the \textit{log-odds ratio} between the two possibilities, for a given $(x_{h},x_{h+1})$. That is, we train $f_h$ to predict 
\begin{equation}
  \ln\left(\frac{\Pr[(x_{h},x_{h+1}) \in (\tau_A)_{h,h+1} |(x_{h},x_{h+1}) \in (\tau_A \uplus \tau_B)_{h,h+1} ]}{\Pr[(x_{h},x_{h+1}) \in (\tau_B)_{h,h+1} |(x_{h},x_{h+1}) \in (\tau_A \uplus \tau_B)_{h,h+1} ]} \right).
\end{equation}

To accomplish this task, we can train $f_h$ to minimize the following loss function:

\begin{equation}
    \mathcal{L}(f_h) := \sum_{(x_h,x_{h+1}) \in (\tau_A)_{h,h+1}} \ln(1 + e^{-f(x_h,x_{h+1})}) + \sum_{(x_h,x_{h+1}) \in (\tau_B)_{h,h+1}} \ln(1 + e^{f(x_h,x_{h+1})}). \label{eq:cluster_loss_main}
\end{equation}
Note that in the limit of infinite data, the $f_h$ that minimizes this loss will return 
\begin{equation}
    f_h^*(x_h,x_{h+1}) \rightarrow \ln\left(\frac{|\mathcal{D}_A^*(\phi^*_h(x_h),\phi^*_{h+1}(x_{h+1}))|}{|\mathcal{D}_B^*(\phi^*_h(x_h),\phi^*_{h+1}(x_{h+1}))|}\right). \label{eq:optimal_clusterer_main}
\end{equation}
Consequently (for sufficiently-large datasets) we expect the values of $f_h(x_h,x_{h+1})$ of all observation-pairs $(x_h,x_{h+1})$ corresponding to the same  latent-state pair $ (\phi^*_h(x_h),\phi^*_{h+1}(x_{h+1})) =(s^*_h,s^*_{h+1})$ to ``cluster together''  around the same value (See Figure \ref{fig:CRAFT_diagram}-b): this effect can be quantified using standard concentration arguments. Note that the training of models $f_h$ and resulting ``clustering'' of observation-pairs can be carried out \textit{simultaneously and independently} for all time-steps $h$: there is no ``recursion'' here, and so each model $f_h$ is trained on an ``untainted'' dataset. 
\begin{tcolorbox}[noparskip]
\hrule
\textbf{Side note on realizability and discretization:} To ensure that an $f_h$ can be found that minimizes Equation \ref{eq:cluster_loss_main}, we need $f_h$ to be chosen from a sufficiently-expressive hypothesis class $\mathcal{F}_h$. We can construct such an $\mathcal{F}_h$ as $\Phi_h \times \Phi_{h+1} \times (N_s^2 \rightarrow \mathbb{R})$: by the realizability assumptions on $\Phi_h$ and $\Phi_{h+1}$, we are ensured that this class contains the optimal predictor in Equation \ref{eq:optimal_clusterer_main}. However, the $(N_s^2 \rightarrow \mathbb{R})$ component of this hypothesis class makes it non-finite. In order to allow for a simple finite-hypothesis analysis, we instead construct $\mathcal{F}_h$ as $\Phi_h \times \Phi_{h+1} \times (N_s^2 \rightarrow \Xi)$, where $\Xi$ is a discrete space (roughly, every $(\alpha/4)$-th interval on a range determined by $\eta$). It turns out that this discretization still ensures that a function ``close enough'' to $f^*_h$ will always exist, and additionally greatly simplifies the identification of ``clusters'' in the output distribution of $f_h$ on finite data.
\hrule
\end{tcolorbox}

While we expect the values of $f_h(x_h,x_{h+1})$ to cluster for sets of  observations-pairs with the \textit{same} latent-state-pair, it does not immediately follow that the values of $f_h(x_h,x_{h+1})$ and $f_h(x_h',x_{h+1}')$ will \textit{differ} if $(s_h,s_{h+1}) \neq (s_h',s_{h+1}')$. In fact, this is not true in general: two distinct ``clusters'' may overlap entirely. This is where the assumption given in Equation \ref{eq:alpha_seperation} becomes useful: from Equation \ref{eq:alpha_seperation} and algebra, we can see that for any fixed $s_h^*$ and distinct  $s_{h+1}^*, s_{h+1}'^*$  which can both follow $s_h^*$ that:
        \begin{equation}
      \left| \ln \left(\frac{ |\mathcal{D}_A^*(s^*_{h}, s^*_{h+1})|}{|\mathcal{D}_B^*(s^*_{h}, s^*_{h+1})| }\right) -  \ln \left(\frac{ |\mathcal{D}_A^*(s^*_{h}, s'^*_{h+1})|}{|\mathcal{D}_B^*(s^*_{h}, s'^*_{h+1})| }\right)\right| \geq \alpha.
    \end{equation}
In other words, the ``clusters'' associated with two pairs $(s^*_{h}, s^*_{h+1})$ and $(s'^*_{h}, s'^*_{h+1})$ are guaranteed to be distinct if $s^*_{h} =s'^*_{h}$. One consequence is that the observation-pairs $(x_1,x_2)$ associated with each possible latent-state pair $(s_1^*,s_2^*)$ must form distinct, well-separated clusters (because these pairs all share the same initial latent state $s_1^*$). Therefore, datasets of observations associated with each latent state $s_2$ can be immediately identified (as shown in green in Figure \ref{fig:CRAFT_diagram}-b; note that we omit the asterisk, to indicate that these are \textit{inferred}, rather than \textit{ground-truth}, latent states.).

CRAFT then continues ``recursively'': once the trajectories which contain a particular $s_{2}' \in \mathcal{S}$ are known, we can then examine the spectrum of values of $f_2(x_2,x_3)$ for \textit{only} observation pairs $(x_2,x_3)$ from this \textit{subset} of trajectories (referred to in Algorithm \ref{alg:main_alg} as pairs($s_2'$)). Because these observation-pairs all (up to an error factor) share the same initial state $s_2'^*$, we expect to see well-separated clusters for each latent state which can succeed $s_2'$. Note that we \textit{do not} retrain $f_2$ on only these samples in pairs($s_2'$). Therefore, any errors (missing or extra trajectories) in the construction of pairs($s_2'$) can only substantially affect the outcome of this step by compromising CRAFT's ability to recognize distinct clusters in the \textit{precomputed values} of $f_2(x_2,x_3)$. Due to the discretization of the range of $\mathcal{F}_h$, this ``cluster identification'' is robust to even adversarial errors affecting a bounded number of trajectories. The total number of misclassified trajectories then grows only linearly with~$H$. (In Figure~\ref{fig:CRAFT_diagram}-c, we show the spectrum of values of $f_2(x_2,x_3)$ for each subset pairs($s_2$), pairs($s_2'$), and pairs($s_2''$);  Figure \ref{fig:CRAFT_diagram}-d shows the result of the cluster identification: the observation-pairs $(x_2,x_3)$ corresponding to each state in $\mathcal{S}_3$ which can succeed each of $s_2,s_2'$, and $s_2''$ have been identified).

Once we identify each latent state that can succeed each $s_2' \in \mathcal{S}_2$ \textit{individually}, we now determine whether or not any of these successor states to distinct states $s_2$, $s_2'$ are in fact \textit{the same} latent state $s_3  \in \mathcal{S}_3$. This can be accomplished easily, by attempting to learn binary classifiers between the observations $x_3$ which are part of the observation-pair sets. If these observations are indistinguishable, then the sets of observation-pairs represent the same latent state; if they are perfectly distinguishable, then they represent different latent states. Figure \ref{fig:CRAFT_diagram}-e illustrates this process. Note that while there may be errors in these observation sets, each binary-classifier training ultimately produces a boolean result (either the sets are distinguishable, or they are not) with a substantial allowance for error in the input sets: there is (with high probability) no accumulation of errors due to this process. 

Finally, the observations corresponding to each unique latent state $\mathcal{S}_3$ have been identified. (See Figure \ref{fig:CRAFT_diagram}-f). We can then continue to timestep $h=4$, and so on. As mentioned above, both the cluster-identification and state-merging processes are robust to bounded errors in their input data, so the total number of misclassified states grows only linearly in $H$. As a final step, the encoders $\phi'_h$ are trained on the assembled datasets for each timestep $h$.

\subsection{Guarantees}
We prove the following polynomial sample-complexity guarantee for CRAFT in Appendix \ref{sec:proofs}:

\begin{restatable}{theorem}{mainthm}
    Assume that CRAFT (Algorithm \ref{alg:main_alg} in the Appendix) is given datasets $\tau_A$ and $\tau_B$ such that the assumptions given in Equations \ref{eq:noise_free_policy_property}, \ref{eq:pair_coverage},\ref{eq:alpha_seperation}, and \ref{eq:eta_coverage} all hold.  
    Then there exists an 
    \begin{equation}
    f\left(H, |\Phi|, N_s,\frac 1\delta, \frac 1{\epsilon_0},\frac 1\nu,\frac 1{\nu'},\frac 1\eta,\frac 1\alpha\right) \in \mathcal{O}^*\left(\frac{H^2(\ln(|\Phi|/\delta) +N_s^2)}{\nu\eta^2\alpha^{4}} \cdot \max\left(\frac{1}{\nu^2}, \frac{1}{\epsilon_0^2\nu'^2}\right)\right), \label{eq:asymptotic_complexity}
        \end{equation} where $\mathcal{O}^*(f(x)) := \mathcal{O}(f(x) \log^k(f(x)))$, such that  for any given $\delta, \epsilon_0 \geq 0$, if 
$\forall s_{h}^*, s_{h+1}^*$ such that  $s_{h}^*$ can transition to $s_{h+1}^*$, 
$|\mathcal{D}^*(s_{h}^*,s_{h+1}^*)| \geq  f\left(H, |\Phi|, N_s,\tfrac 1\delta, \tfrac 1{\epsilon_0},\tfrac 1\nu,\tfrac 1{\nu'},\tfrac 1\eta,\tfrac 1\alpha\right),$ 
then, with probability at least $1-\delta$, the encoders $\phi'_h$ returned by the algorithm will each have accuracy on at least $1-\epsilon_0$, in the sense that, under some bijective mapping $\sigma_h : \mathcal{S}_{h} \rightarrow \mathcal{S}_{h}^*$, 
\begin{equation}
    \forall s^*\in \mathcal{S}_h^*,\,\,\,\Pr_{x\sim \mathcal{Q}(s^*,P^e_h)} (\phi'_h(x) = \sigma_h^{-1}(\phi^*_h(x))) \geq 1-\epsilon_0. \label{eq:conclusion}
\end{equation}
  \label{thm:main_thm}
\end{restatable}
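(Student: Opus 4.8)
The plan is to establish the theorem through five linked stages, mirroring the structure of the CRAFT algorithm: (i) per-timestep concentration of the clustering model $f_h$ toward its population optimum $f_h^*$; (ii) identification of well-separated clusters via the $\alpha$-separation assumption; (iii) a robust, linearly-accumulating error analysis for the recursive subdivision of trajectories; (iv) correctness of the binary-classifier state-merging test; and (v) the final ERM training of the encoders $\phi'_h$. The governing principle throughout is that at each timestep only a \emph{bounded number} of trajectories are misclassified, and that every downstream test is robust to a bounded (even adversarial) number of such errors, so that the total error grows only linearly in $H$ rather than multiplicatively as for DRAFT.

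For stages (i)--(ii), I would first invoke the noise-independence assumption (Equation \ref{eq:noise_free_policy_property}), which guarantees that each $\mathcal{D}_A^*(s_h^*,s_{h+1}^*)$ (resp. $B$) consists of i.i.d. draws from a fixed, policy-independent distribution. Using this, a uniform-convergence argument over the \emph{finite, discretized} hypothesis class $\mathcal{F}_h = \Phi_h\times\Phi_{h+1}\times(N_s^2\to\Xi)$ shows that minimizing the logistic loss of Equation \ref{eq:cluster_loss_main} yields an $f_h$ whose value agrees with the population log-odds $f_h^*$ of Equation \ref{eq:optimal_clusterer_main} to within one discretization bin (width $\approx\alpha/4$) for all but an $\epsilon$-fraction of the observation-pairs in each latent-state-pair; the $\ln(|\Phi|/\delta)$ and $N_s^2$ terms and the $1/\nu^2$ scaling enter here through the hypothesis-class cardinality and the per-pair sample size guaranteed by Equation \ref{eq:pair_coverage}. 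Combining this with the $\geq\alpha$ gap between the population log-odds of distinct successors of a common state (the inequality displayed just before the theorem, derived from Equation \ref{eq:alpha_seperation}) and the $\eta$-bound on the range of log-odds (Equation \ref{eq:eta_coverage}, which caps the number of bins in $\Xi$), I would conclude that observation-pairs sharing an initial state $s_h^*$ fall into clusters whose true centers are $\alpha$-separated while the within-cluster binned spread is below $\alpha/2$, so the clusters are recoverable by gap-detection on the binned $f_h$-values, up to the $\epsilon$-fraction of outliers.

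Stage (iii) is the crux. I would proceed by induction on $h$, maintaining the invariant that at most $O(h)$ times a per-step error count of trajectories carry an incorrect latent label. The key observations are that $f_h$ is \emph{precomputed once} and never retrained on subdivided data, so restricting attention to pairs($s_h'$) cannot corrupt the values $f_h(x_h,x_{h+1})$ themselves, and that the discretization makes cluster identification \emph{combinatorially robust}: a bounded number of contaminating or missing trajectories can neither bridge two genuinely-separated clusters (a gap of $\gtrsim\alpha/2$ empty bins persists) nor fracture a genuine cluster, provided the contamination stays below a threshold set by the per-pair coverage. This is precisely what lets the analysis avoid the multiplicative blow-up diagnosed for recursive DRAFT: I only ever bound the \emph{count} of misclassified trajectories, never relying on their distribution over exogenous noise. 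Stage (iv) then handles merging: by realizability of $\mathcal{G}_h$ and the block assumption, two candidate successor sets corresponding to the \emph{same} latent state have identical marginal observation law $\mathcal{Q}(s^*,P^e_{h+1})$ (the exogenous marginal being path-independent by Equation \ref{eq:noise_free_policy_property}) and hence are indistinguishable beyond chance, whereas genuinely distinct states admit a perfect classifier; a fixed accuracy threshold separates the two cases and tolerates a bounded fraction of contaminated inputs.

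Finally, for stage (v), once the datasets $\mathcal{D}^*(s_h^*)$ are assembled with at most a small error fraction, ERM over $\Phi_h$ together with $\phi_h^*\in\Phi_h$ and a standard generalization bound yields the encoder accuracy $1-\epsilon_0$ of Equation \ref{eq:conclusion}; the $\max(1/\nu^2,\,1/(\epsilon_0^2\nu'^2))$ factor arises because achieving $\epsilon_0$-accuracy on the \emph{rarest} state (coverage $\nu'$) is the binding constraint here, whereas the clustering steps bind at $1/\nu^2$. A union bound over the $O(H)$ timesteps, the $O(N_s)$ states, and the $O(N_s^2)$ state-pairs produces the $H^2$ and $N_s^2$ dependence and absorbs $\delta$. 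I expect the main obstacle to be making the stage-(iii) robustness argument fully rigorous: one must carefully couple the probabilistic outlier bound from stage (i) with the combinatorial guarantee that bounded contamination preserves the cluster structure, quantify the exact per-step error increment, and verify that the chosen discretization width and gap-detection threshold simultaneously satisfy the separation, robustness, and bin-count requirements --- all while ensuring the accumulated error after $H$ steps remains below the tolerance demanded by stages (iv) and (v).
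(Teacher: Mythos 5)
Your proposal is correct and follows essentially the same route as the paper's proof: your stages (i)--(v) correspond directly to the paper's Lemma 3.1/Corollary 3.2 (concentration of the discretized log-odds predictor over $\mathcal{F}_h = \Phi_h\times\Phi_{h+1}\times(N_s^2\to\Xi)$), the gap-detection argument from the $\alpha$-separation, the inductive recursive lemma with per-step error budget $\epsilon = \min\bigl(\tfrac{\nu}{8H},\tfrac{\nu'\epsilon_0}{8H}\bigr)$ and linear accumulation $\beta_h=(h-1)\epsilon$, the corruption-robust binary-classification merging lemma, and the final corruption-robust multi-class ERM lemma. One small attribution slip: the $H^2$ factor arises not from the union bound over timesteps (which contributes only $\ln H$) but from the per-step error tolerance $\epsilon\propto 1/H$ entering \emph{squared} in the concentration bound, and the paper's proof also needs a final technical step you omit --- rounding $\eta$ to the form $(1+e^{n_\Xi\alpha/8})^{-1}$ and capping $\alpha'=\min(1,\alpha)$ to avoid an $e^{\alpha/4}$ blowup for large $\alpha$.
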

\section{Simulation Results} 
\label{sec:results}
\begin{table}[h!]
    \centering
 \caption{Results of toy environment simulation, with $H=30,M=128$, averaged over 20 random seeds. See text of Section \ref{sec:results}, and Appendix \ref{sec:experiment_details} for further details.}
    \begin{tabular}{|c|c|c|c|}
            \hline
        Technique & Avg. Encoder Acc. $(|\tau_A|  = |\tau_B| =500)$&" " 1000 & " " 5000\\
        \hline
        CRAFT &86.4\%&\textbf{97.7\%}&\textbf{>99.9\%}\\
        Single-obs. classification&67.8\% &68.7\%&69.7\%\\
    Paired-obs. classification&\textbf{87.4\%} &86.1\%&82.1\%\\
        \hline
    \end{tabular}
    \label{tab:results}
\end{table}

We test CRAFT on a toy environment which captures CRAFT's ability to distinguish controllable features in the observation space from time-correlated uncontrollable features.
In the environment, $s_1^* = 0, \mathcal{A} =\mathcal{S}_{h>1}^* = \{0,1\}$ and $s_{h+1}^* = a_h$; in other words, the agent can simply set the next latent state using the action. The exogenous state consists of $M-1$ factors: $e = (e^1,e^2,...,e^{M-1})$. Each exogenous factor is a two-state Markov Chain: for $e^2,...e^{M-1}$, the   initial state distribution and state transition probabilities are arbitrary parameters chosen uniformly at random for each chain, while $e^1$ has $\Pr(e^1_1 = 0) = 0.5$ and transition probabilities of zero.  
The observation $x_h \in \{0,1\}^M$ consists of $s_h^*$ concatenated with $(s_h^* \text{ XOR } e^i_h) $, for each $i\in[H-1]$. Additionally, at each timestep, the order of $s_h^*$ and the other factors is \textit{permuted} by some arbitrary permutation which depends on $h$. The hypothesis classes are $\Phi_h := \{(x_h) \rightarrow (x_h)_i | i \in [M]\}$. The representation learning problem is then to determine, for each $h$, which of the $M$ components of the observation $x_h$ is the controllable factor $s_h^*$ (or, failing at that, to find a component corresponding to a $(s_h^* \text{ XOR } e^i_h)$ where $e^i_h$ is low-entropy, so the encoder imperfect but still useful). Agent $A$ selects actions uniformly at random, while for agent $B$,  $\Pr(a_h=s_h^*) = 3/4$. 

Results are shown in Table \ref{tab:results}. The setting is designed to prevent various ``shortcuts'' to learning an encoder from working. Simply choosing the component of $x_h$ that best predicts the policy ("Single-observation classification" in Table \ref{tab:results}) will not work, because at any sufficiently large timestep $h$, the latent state distributions of the two policies are essentially identical (with a total-variation gap of $2^{-h}$). Furthermore, given observations of a \textit{pair} of sequential timesteps $(x_h,x_{h+1})$, choosing the components of $x_h$ and $x_{h+1}$, respectively, that \textit{together} best predict the agent also will not work ("Paired-observation classification").  In particular, the ``distractor'' features $(s_h^* \text{ XOR } e^1_h)$ and $(s_{h+1} \text{ XOR } e^1_{h+1})$ are, taken together, about as informative about the \textit{agent's identity} as $s_h^*$ and $s_{h+1}^*$, but provide \textit{no} information about the latent state $s_h^*$~or~$s_{h+1}^*$. In Table \ref{tab:results}, we see that, given sufficient data ($\geq 1000$ trajectories for each agent), CRAFT is capable of learning highly-accurate encoders in this setting, while these two ``shortcut'' techniques are not. In particular, while the ``Paired-observation classification" shortcut is about as effective as CRAFT in the very-low data regime, its performance plateaus (and even seems to drop) as more data becomes available. (The \textit{drop} in performance is likely because the adversarially-designed ``distractor'' features $(s_h^* \text{ XOR } e^1_h)$ and $(s_{h+1}^* \text{ XOR } e^1_{h+1})$ are more likely to be chosen by this method as more data becomes available.)

\section{Related Works} 
\label{sec:related_works}
\textbf{Action-free representation learning} Many prior works have tackled action-free representation learning in practical scenarios, demonstrating empirically-validated methods. Common approaches utilize observation reconstruction losses \citep{seo2022reinforcement} or temporal contrastive losses \citep{nair2023r3m}. Some of these works infer ``latent actions'' by finding a compact representation that is highly informative for predicting forward dynamics \citep{edwards2019imitating,menapace2021playable,ye2023become,schmidt2024learning} Another line of work augments large action-free offline datasets with significantly smaller action-labeled datasets. For example, an offline dataset action-free dataset can be used to train a goal-conditioned value function \citep{xu2022a,ma2023vip,ghosh2023reinforcement,park2023hiql}. Alternatively, an inverse-dynamics model can be learned from the action-labelled data to ``fill in'' missing actions \citep{pmlr-v155-schmeckpeper21a,zheng2023semi,baker2022video}. By contrast, in this work we are interested in provable sample-efficiency of representation learning, and assume no access to action-labeled data during pretraining. 

\textbf{Learning in Ex-BMDPs.} As discussed throughout this work, numerous prior works consider the Ex-BMDP model \citep{efroni2022provably, mhammedi2024the}, including in the offline setting \citep{islam2023principled,lamb2023guaranteed,levine2024multistep}. \cite{misra2024towards} in particular demonstrates a hardness result: that Ex-BMDP latent representations cannot be learned in general from offline action-free data. In this work, we demonstrate a special case where this representation learning problem is in fact tractable: the case where offline data from multiple diverse agents are available.
\section{Discussion and Limitations} \label{sec:discussion}
One major assumption of this work (as well as \citet{misra2024towards,islam2023principled} and other prior works) is that offline data are collected by a policy which acts independently of observation noise.
This assumption stems from the fact that, if noise features influence the behavioral policy, they (indirectly) influence the latent-state dynamics of the agent: these noise features may then be erroneously captured in the learned representation. However, in real-world settings, it may actually be beneficial to capture such features in the learned representation: if ``expert'' agents are relying on some uncontrollable feature, this feature may be relevant to the expert agents' reward functions, and may therefore also be relevant to downstream tasks for which our learned representations will be used. Therefore,  the noise-independent policy assumption might not be necessary in practice.

An additional restrictive assumption of this work is that the latent dynamics are deterministic, and that each episode starts at the same latent state $s_1^*$. However, this assumption is also essentially present even in the best-known result for provably sample-efficient Ex-BMDP representation learning in the online setting \citep{efroni2022provably} -- that work does allow for \textit{rare} departures from deterministic dynamics, however, and it may be possible to adapt the analysis of CRAFT to that setting as well, although we have focused on the strictly-deterministic case here for ease of presentation. \cite{mhammedi2024the} proposes an online algorithm for learning  Ex-BMDPs with nondeterministic latent dynamics, but that work assumes ``simulator access'': the ability to reset the environment to any previously-visited observation. Several works \citep{lamb2023guaranteed,levine2024multistep,islam2023principled} consider learning Ex-BMDPs from offline data (with action labels) without assuming restarts to $s_1^*$: these works are ``practical'' algorithms that do not provide sample-complexity guarantees. A similar ``practical'' algorithm for the action-free, multiple-agent setting based on the ideas presented in this work may also be possible.

The assumption that two policies differ substantially at \textit{every} latent state may also be impractical. One direction for future work may be to leverage data from \textit{several} agents, such that it is more likely that \textit{some} agent has a distinct behavior at each latent state.

While access to training oracles is a common assumption in representation learning \citep{agarwal2020flambe,efroni2022provably,uehara2022representation}, the optimization of Equation \ref{eq:cluster_loss_main}, on a \textit{discretized} domain, may be  troublesome in practice. Additionally, the sample complexity bounds in Equation \ref{eq:asymptotic_complexity}, while polynomial, may not be optimal: these issues are potential directions for future work.
\section*{Acknowledgments}
A portion of this research has taken place in the Learning Agents Research Group (LARG) at the Artificial Intelligence Laboratory, The University of Texas at Austin.  LARG research is supported in part by the National Science Foundation (FAIN-2019844, NRT-2125858), the Office of Naval Research (N00014-18-2243), Army Research Office (W911NF-23-2-0004, W911NF-17-2-0181), DARPA (Cooperative Agreement HR00112520004 on Ad Hoc Teamwork), Lockheed Martin, and Good Systems, a research grand challenge at the University of Texas at Austin.  The views and conclusions contained in this document are those of the authors alone.  Peter Stone serves as the Executive Director of Sony AI America and receives financial compensation for this work. The terms of this arrangement have been reviewed and approved by the University of Texas at Austin in accordance with its policy on objectivity in research. Alexander Levine is supported by the NSF Institute for Foundations of Machine Learning (FAIN-2019844). Amy Zhang and Alexander Levine are supported by National Science Foundation (2340651) and Army Research Office (W911NF-24-1-0193).
\bibliography{main}
\bibliographystyle{rlj}

\appendix
\onecolumn
\section{Hypothesis Classes and Realizability Assumptions} \label{sec:aaprealizability}
As mentioned in Section \ref{sec:cpdr}, we assume access to hypothesis classes of encoder functions $\Phi_{1:H}$.  We make a realizability assumption: that is, the true encoder $\phi^*_h \in \Phi_h$. Moreover, we assume that for any arbitrary permutation $\sigma$, $\sigma(\phi^*_h(x)) \in \Phi_h$ -- this allows us to train an encoder in $\Phi_h$ on datasets of observations representing each latent state without knowing the ``correct'' ordering of the latent states.  Similar realizability assumptions are common in representation learning literature for structured MDPs \citep{du2019provably,efroni2022provably,misra2020kinematic,misra2024towards,uehara2022representation,agarwal2020flambe}. 

We use a second set of  hypothesis classes,  $\mathcal{G}_h \subseteq \mathcal{X}_h \rightarrow \{0,1\}$, for which, for any \textit{pair} of latent states $s^*_h,s'^*_h$, there exists some $g \in \mathcal{G}_h$ that can perfectly distinguish observations of $s^*_h$ from observations of $s'^*_h$. In our sample-complexity results, we assume that $|\mathcal{G}_h| \leq |\Phi_h|.$
In this work, we are chiefly concerned with sample-complexity: we make use of training oracles for a variety of loss functions which may not be tractable to optimize in practice. See Section \ref{sec:discussion} for further discussion.
\section{Algorithm} \label{sec:appendix_alg}

The full CRAFT algorithm is presented as Algorithm \ref{alg:main_alg}.
\begin{algorithm}
\begin{algorithmic}[1]
{\small
   \REQUIRE Trajectory datasets $\tau_A$, $\tau_B$, known lower-bounds $\alpha$, $\eta$, and $\nu$, encoder function classes $\Phi_h \subseteq \mathcal{X}_h \rightarrow N_s$ and  
   classification function class $\mathcal{G}_h\subseteq \mathcal{X}_h \rightarrow \{0,1\}$.
    \STATE{ $\alpha \leftarrow \min(1,\alpha)$.}
    \STATE{Let $\xi := \alpha/4$; $n_{\Xi}:=   \lceil 8\ln(\eta^{-1}-1)/\alpha\rceil $.}
    \STATE{ $\eta \leftarrow 1/(1+e^{n_{\Xi}\alpha/8})$.}
    \STATE{Let $\Xi := \{-\frac{n_{\Xi}\xi}{2}, -\frac{n_{\Xi}\xi}{2} + \xi, -\frac{n_{\Xi}\xi}{2} + 2\xi, ..., \frac{n_{\Xi}\xi}{2} \}.$}
   
   \STATE Initialize $\mathcal{S}_1 := \{s_1\}$,
   $D_{A,1}(s_1) := [|\tau_A|]$, $D_{B,1}(s_1) := [|\tau_B|]$.
\texttt{\hspace{1em}\textbackslash\textbackslash ~First timestep should have a single state rep., associated with every trajectory index.}
  \STATE{Let $\phi_1' := \mathcal{X}_1 \rightarrow 0$}.
   \FOR{$h \in \{1,2,...,H-1\}$} 
    \STATE Initialize $\mathcal{S}_{h+1} := \{\}$.
     \STATE {Let the inverse-actor-prediction function class $\mathcal{F}_h\subseteq \mathcal{X} \times \mathcal{X} \rightarrow \Xi$ be composed as 
   $\mathcal{F}_h = \Phi_h \times  \Phi_{h+1}\times ( N_s^2 \rightarrow \Xi)$.} 
   \STATE Find the $f_h \in \mathcal{F}_h$ which minimizes Equation \ref{eq:cluster_loss_main}.
   \STATE{Let $q_{thresh.} := \frac{h\nu}{8H} $.}
   \FOR{$s_{pred} \in \mathcal{S}_h$}
   \STATE{Initialize $\text{merged\_already}(s) := \text{False}$ for all $s \in \mathcal{S}_{h+1}$.}
   \STATE{Initialize $\mathcal{S}_{new} := \{\}$}
   \STATE Let $\text{pairs}(s_{pred}) := (\tau_A)_{h:\,h+1}[D_{A,h} (s_{pred})]\cup (\tau_B)_{h:\,h+1}[D_{B,h}(s_{pred})]$. \texttt{\hspace{1em}\textbackslash\textbackslash ~ All transitions which start at $s_{prev}$.}
    \STATE  $\forall j \in \{0,...,n_\Xi\}$, Let $\text{pred\_succ}[j] := \{(x_{h},x_{h+1}) \in \text{pairs}(s_{pred})|f(x_{h},x_{h+1}) = j\xi - \frac{n_\Xi\xi}{2}\}$.
   \STATE Initialize $j := 0$
  \WHILE{$j \leq n_\Xi$}
 
 \IF{$|\text{pred\_succ}[j] |\geq q_{thresh.} (|\tau_A| + |\tau_B|)$}
  \STATE{Let $j'$ be the minimum integer $> j$ such that $|\text{pred\_succ}[j'] | < q_{thresh.}(|\tau_A| + |\tau_B|)$, or $n_\Xi$ if no such integer exists.}
  \STATE{Let $\mathcal{D}_{new\_pairs} := \{x'| (x,x') \in \biguplus_{k = \max(0,j-1)}^{j'} \text{pred\_succ}[k]\},
  \,\mathcal{D}_{new} := \{x'| (x,x') \in \mathcal{D}_{new\_pairs}\}$}
  \STATE{Initialize new\_state? $\leftarrow $ True.}
  \FOR{$s \in \mathcal{S}_{h+1}$, such that merged\_already($s$) $==$ False}
  \STATE{Let $\mathcal{D}_s := (\tau_A)_{h+1}[D_{A,h+1} (s)]\uplus (\tau_B)_{h+1}[D_{B,h+1}(s)]$}
  \STATE{Train a classifier $g \in \mathcal{G}$ to distinguish $\mathcal{D}_{new}$ and  $\mathcal{D}_s$, with loss $\mathcal{L}(g)$ given in Equation \ref{eq:binary_lassification}. }
 \IF{the loss $\mathcal{L}(g)$ on $\mathcal{D}_{new}$ and $\mathcal{D}_s$ is  $> 0.5$ }
  \STATE{Append to $D_{A,h+1}(s)$ indices of trajectories in $\tau_A$ that observations in $\mathcal{D}_{new}$ are from.}
  \STATE{Append to $D_{B,h+1}(s)$ indices of trajectories in $\tau_B$ that observations in $\mathcal{D}_{new}$ are from.}
  \STATE{merged\_already?$(s)\leftarrow$ True}
  \STATE{new\_state? $\leftarrow $ False}
  \STATE{\textbf{break.}}
 \ENDIF
 \ENDFOR
 \IF{new\_state?}
  \STATE{Add new state $s_{new}$ to $\mathcal{S}_{new}$}
  \STATE{Initialize $D_{A,h+1}(s_{new})$ as indices of trajectories in $\tau_A$ that observations in $\mathcal{D}_{new}$ are from.}
  \STATE{Initialize $D_{B,h+1}(s_{new})$ as indices of trajectories in $\tau_B$ that observations in $\mathcal{D}_{new}$ are from.}
 \ENDIF
  \STATE{$j \leftarrow j' +2$}
 \ELSE
 \STATE{$j \leftarrow j+1$}
 \ENDIF
  \ENDWHILE
  \STATE{$\mathcal{S}_{h+1} := \mathcal{S}_{h+1} \cup  \mathcal{S}_{new}$}
  \ENDFOR
 \STATE{$ \phi_{h+1}' := \arg\min_{\phi\in \Phi_{h+1}}  \sum_{s\in \mathcal{S}_{h+1}}\left[\frac{1}{|\mathcal{D}_s|}\sum_{x\in \mathcal{D}_s} (1-\mathbbm{1}_{(\phi(x)=s)}) \right]$, where $\mathcal{D}_s := (\tau_A)_{h+1}[D_{A,{h+1}} (s)]\uplus (\tau_B)_{h+1}[D_{B,{h+1}}(s)]$} 
  \ENDFOR
  \STATE{ \textbf{Return: }$\phi_1', ...\phi_H'$ }
  }
\end{algorithmic}
\caption{CRAFT} \label{alg:main_alg}
\end{algorithm}
\section{Proofs}
\label{sec:proofs}
In this section, we prove the correctness and sample complexity bounds of CRAFT presented in Theorem \ref{thm:main_thm}. First, though, we prove various lemmas the will be helpful in proving the final result.
\subsection{Preliminary Note}
Recall Equation \ref{eq:noise_free_policy_property} in the main text:
\begin{equation*}
\begin{split}
  \Pr(\tau_A, \tau_B) = &\Pr(\phi^*(\tau_A), \phi^*(\tau_B)) \cdot  \Pr_{P_1^e,\mathcal{T}^e}(\phi^{e}(\tau_A))\cdot \Pr_{P_1^e,\mathcal{T}^e}( \phi^{e}(\tau_B))\\ 
\cdot & \Pr_Q(\tau_A|\phi^*(\tau_A),\phi^{e}(\tau_A) 
) \cdot \Pr_Q(\tau_B|\phi^*(\tau_B),\phi^{e}(\tau_B) 
)
\end{split}
\end{equation*}
Throughout our proofs, we will make  use of this assumption in  the following way: we will treat the controllable latent state trajectories $\phi^*(\tau_A), \phi^*(\tau_B)$ as \textit{fixed but arbitrary}, not as random variables, and treat the exogenous noise Markov chains $\phi^{e}(\tau_A), \phi^{e}(\tau_B)$ and the emission function $\mathcal{Q}$ as the only random variables. Then, if the algorithm succeeds with high probability for any such \textit{fixed, arbitrary} $\phi^*(\tau_A), \phi^*(\tau_B)$, we can conclude by the independence assumption that it also succeeds with high probability under any data-generating process for which Equation \ref{eq:noise_free_policy_property} holds. 
\subsection{Concentration Lemmas}
In this section, we present concentration bounds on the loss functions used in Algorithm \ref{alg:main_alg}. We start with the log-odds loss given in Equation \ref{eq:cluster_loss_main}:
\begin{lemma} \label{lem:spread_log_prob_2}. 
Given $m$ distributions $\mathcal{D}_1,...,\mathcal{D}_m \in \mathcal{P}(\mathcal{X})$, each with two corresponding positive integers $a_i,b_i$, for $i \in [m]$, let $A_i \sim \mathcal{D}^{a_i}$ and $B_i \sim \mathcal{D}^{b_i}$ be two multi-sets consisting of $a_i$ and $b_i$ i.i.d. samples from $\mathcal{D}_i$, respectively.
Then, for any $\xi > 0$ and $n_\Xi \in \mathbb{N}_+$ such that $\forall i, \,|\ln(a_i/b_i)| \leq \frac{n_\Xi\xi}2$, let $\Xi = \{-\frac{n_\Xi\xi}2, -\frac{n_\Xi\xi}2 +\xi ,  -\frac{n_\Xi\xi}2 +2\xi, ..., \frac{n_\Xi\xi}2 \}$.  Further, let $\bar{c}_i \in \Xi$ be the smallest value in $\Xi$ greater than or equal to $\ln(a_i/b_i)$, and $\underline{c}_i\in \Xi$ be the largest value in $\Xi$ less than or equal to $\ln(a_i/b_i)$. Also, assume that $\forall i, \frac{a_i+b_i}{\sum_{i'=1}^m a_{i'} + b_{i'}} \geq \nu$.

Given any function $f \in \mathcal{X} \rightarrow \Xi$, define: 
\begin{equation}
    \mathcal{L}(f) := \sum_{i = 1}^m \left[\sum_{x \in A_i}\ln(e^{-f(x)} +1) + \sum_{x \in B_i} \ln(e^{f(x)} +1)\right].\label{eq:def_loss_v2}
\end{equation}
Further, define:
\begin{equation}
    \mathcal{L}_{\text{ref}} :=  \sum_{i = 1}^m \left[\min_{c_i\in \{\bar{c}_i, \underline{c}_i\}} a_i\ln\left(e^{-c_i} + 1\right)  +b_i\ln\left(e^{c_i}+1\right)\right],
\end{equation}
and let $\eta := (e^{n_\Xi \xi /2} +1)^{-1}$. For any $\epsilon$ and $\delta$, if:
\begin{equation}
  \forall i \in [m] ,\,\,\,a_i + b_i \geq \frac{50 \ln(2/\delta) \ln^2(1/\eta)}{\nu\epsilon^2\eta^2 \xi^4} \label{eq:lemm_num_samples_v2}
\end{equation}
then the probability that \textbf{both}  $ \mathcal{L} \leq \mathcal{L}_{\text{ref}} $, \textbf{and}:
\begin{equation}
    \exists i \in [m]:\,\,\,\Big|\{x \in A_i \uplus B_i| f(x)\not \in \{\underline{c}_i,\bar{c}_i\}\Big| > \epsilon (a_i+b_i)\label{eq:spectrum_concentrarion_conclusion_v2}
\end{equation}
is at most $\delta.$
\end{lemma}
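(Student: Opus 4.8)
The plan is to establish the statement by a union bound over the $m$ indices, controlling each term $i$ separately. For a fixed $i$, I would first argue that the optimal per-index predictor—the one whose contribution to $\mathcal{L}$ is closest to the ideal $\mathcal{L}_{\text{ref}}$—assigns the value $\ln(a_i/b_i)$ to every sample in $A_i \uplus B_i$, since the function $c \mapsto a_i\ln(e^{-c}+1) + b_i\ln(e^{c}+1)$ is strictly convex in $c$ and minimized at exactly $c = \ln(a_i/b_i)$. Because $f$ is restricted to the discrete grid $\Xi$, the best achievable value on index $i$ is the minimum over the two neighboring grid points $\{\underline{c}_i, \bar{c}_i\}$, which is precisely how $\mathcal{L}_{\text{ref}}$ is defined. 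The key quantitative step is then a \emph{curvature} estimate: if $f$ places an $\epsilon(a_i+b_i)$-fraction of the samples of index $i$ on grid values \emph{outside} $\{\underline{c}_i,\bar{c}_i\}$, those misplaced samples each incur an excess loss of order $\xi^2$ relative to the optimum (since the two nearest grid points are distance $\xi$ away, and the loss curves up quadratically with a curvature controlled by $\eta$). This forces the index-$i$ contribution to $\mathcal{L}$ to exceed its minimum by at least something like $\Omega(\epsilon(a_i+b_i)\eta\xi^2)$.

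The heart of the argument is to show that this deterministic excess-loss gap cannot be masked by statistical fluctuations. To do this I would fix the grid value $c$ that $f$ outputs on each sample and treat $\mathcal{L}(f)$ as a sum of independent bounded random variables (the per-sample losses $\ln(e^{-f(x)}+1)$ and $\ln(e^{f(x)}+1)$ for samples drawn i.i.d.\ from $\mathcal{D}_i$). The losses take values in a range of width controlled by $n_\Xi\xi = 2\ln(1/\eta)^{\pm}$, so a Hoeffding/Bernstein bound gives that the empirical loss concentrates around its expectation at rate $\mathcal{O}(\sqrt{(a_i+b_i)}\cdot \ln(1/\eta))$ with high probability. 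Setting the sample-size threshold in Equation~\ref{eq:lemm_num_samples_v2} so that this fluctuation scale is dominated by the deterministic gap $\epsilon(a_i+b_i)\eta\xi^2$ yields, for each $i$, a failure probability at most $\delta/m$-scale; the coverage assumption $a_i+b_i \geq \nu\sum_{i'}(a_{i'}+b_{i'})$ is what lets me convert the per-index threshold into a bound expressed solely in terms of $a_i+b_i$ while absorbing the $\ln(m)$ from the union bound into $\ln(1/\delta)$. Combining the deterministic excess-loss lower bound with the concentration inequality shows that whenever event~\eqref{eq:spectrum_concentrarion_conclusion_v2} holds for some $i$, we must have $\mathcal{L}(f) > \mathcal{L}_{\text{ref}}$ with probability at least $1-\delta$, which is the contrapositive of the claim.

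The main obstacle I anticipate is bookkeeping the interplay between the discretization width $\xi$, the extremal-odds parameter $\eta$, and the concentration rate so that the explicit constant $50$ and the precise powers $\epsilon^{-2}\eta^{-2}\xi^{-4}$ in Equation~\ref{eq:lemm_num_samples_v2} come out correctly. In particular, the quadratic curvature of the logistic loss degrades near the boundary of the grid—where $|\ln(a_i/b_i)|$ approaches $n_\Xi\xi/2$ and hence $a_i/b_i$ is extreme—so the effective curvature is only $\Theta(\eta)$ rather than $\Theta(1)$, which is exactly why the bound carries the $\eta^{-2}$ factor. Care is also needed because a single $f$ may distribute its ``errors'' across many grid points rather than concentrating them, so the deterministic lower bound must hold uniformly over \emph{how} the $\epsilon$-fraction of misplaced mass is spread; this I would handle by noting that every grid point other than $\{\underline{c}_i,\bar{c}_i\}$ is at distance at least $\xi$ from the optimum, making the per-sample excess loss uniformly bounded below regardless of the specific wrong value chosen. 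Finally, since $f$ ranges over a hypothesis class, I would need a covering argument over $\mathcal{F}_h$; but as the lemma is stated for a \emph{fixed} $f$, I defer the union over the class to the later application and keep the present bound pointwise in $f$.
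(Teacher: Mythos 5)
Your overall strategy mirrors the paper's proof closely: convexity of $c \mapsto a_i\ln(e^{-c}+1)+b_i\ln(e^{c}+1)$ with minimizer at the log-odds $\ln(a_i/b_i)$, a curvature lower bound of order $\eta$ at the boundary of the grid giving an excess of order $\eta\xi^2$ per unit of misplaced mass, Hoeffding concentration of the loss with per-sample range $\ln(1/\eta)$, absorbing the $\ln m$ from the union bound via the coverage assumption (the paper uses $m \le 1/\nu$), and deferring the union over the hypothesis class to the corollary — this is exactly where the paper's $\epsilon^{-2}\eta^{-2}\xi^{-4}$ rate comes from. However, there is one genuine flaw in the middle of your sketch: you assert a \emph{deterministic} excess-loss bound, i.e., that if an $\epsilon$-fraction of the samples of index $i$ is mapped outside $\{\underline{c}_i,\bar{c}_i\}$, then the empirical index-$i$ contribution to $\mathcal{L}$ exceeds its minimum by $\Omega(\epsilon(a_i+b_i)\eta\xi^2)$. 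This is false at the empirical level: a sample $x \in A_i$ mapped to a grid value larger than $\bar{c}_i$ has its individual loss term $\ln(e^{-f(x)}+1)$ \emph{decrease}, and in the extreme, an $f$ that happens to send the realized $A_i$ samples to $+\frac{n_\Xi\xi}{2}$ and the realized $B_i$ samples to $-\frac{n_\Xi\xi}{2}$ attains empirical loss far \emph{below} $\mathcal{L}_{\text{ref}}$ even though every sample is misplaced. The curvature gap holds only in expectation, and only because $A_i$ and $B_i$ are drawn from the \emph{same} distribution $\mathcal{D}_i$, so that the same weights $p_\zeta = \Pr_{x\sim\mathcal{D}_i}(f(x)=\zeta)$ multiply both the $a_i$-term and the $b_i$-term, giving $\mathcal{L}^i_{pop}(f) = a_i \sum_{\zeta\in\Xi} p_\zeta\, h_{a_i/b_i}(\zeta)$ with $h$ convex — this is the paper's pivotal identity, and your sketch never isolates it.

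Because of this, a single application of concentration (to the loss) cannot close the argument: the lemma's conclusion concerns the \emph{empirical count} of misplaced samples in $A_i \uplus B_i$, while the convexity gap controls only the \emph{population} misplacement probability $\Pr_{x\sim\mathcal{D}_i}(f(x)\notin\{\underline{c}_i,\bar{c}_i\})$. The paper therefore uses two concentration steps, and you are missing the second: after deducing from $\mathcal{L}(f) \le \mathcal{L}_{\text{ref}}$ plus a single global Hoeffding bound on the loss that $(a_i+b_i)\Pr_{x\sim\mathcal{D}_i}(f(x)\notin\{\underline{c}_i,\bar{c}_i\})$ is small for \emph{every} $i$ (this also needs the observation that each per-index population gap $\mathcal{L}^i_{pop}-\mathcal{L}^i_{\text{ref}}$ is nonnegative, so the global condition controls each index individually — your per-index decomposition glosses over this, since $\mathcal{L}\le\mathcal{L}_{\text{ref}}$ is a condition on the sum, not on each summand), it applies Hoeffding once more, per index, to the indicators $\mathbbm{1}[f(x)\notin\{\underline{c}_i,\bar{c}_i\}]$, union-bounding over $i \in [m]$ at cost $\sqrt{(a_i+b_i)\ln(2m/\delta)/2}$ and absorbing $\ln m \le \ln(1/\nu)$ into the $\sqrt{\ln(2/\delta)/\nu}$ term. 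This repair is routine and your anticipated constants and exponents survive it, but as written, the ``deterministic gap'' step is where your proof breaks.
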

\begin{proof}
We can define
\begin{equation}
    \mathcal{L}_{\text{ref}}^i :=  \min_{c_i\in \{\bar{c}_i, \underline{c}_i\}} a_i\ln\left(e^{-c_i} + 1\right)  +b_i\ln\left(e^{c_i}+1\right), \label{eq:c_und_c_bar_v2}
\end{equation}
So that  $\mathcal{L}_{\text{ref}} = \sum_{i = 1}^m \mathcal{L}_{\text{ref}}^i$, and also define 
\begin{equation}
           \mathcal{L}_{pop}^i(f) :=a_i\left(  \mathop{\mathbb{E}}_{x\sim \mathcal{D}_i}\ln(e^{-f(x)} +1)\right) + b_i\left(  \mathop{\mathbb{E}}_{x\sim \mathcal{D}_i}\ln(e^{f(x)} +1)\right) \label{eq:def_pop_loss_v2}
\end{equation}
and $\mathcal{L}_{\text{pop}} := \sum_{i = 1}^m \mathcal{L}_{\text{pop}}^i$.

First, we consider the      ``population loss'' for each distribution:
\begin{equation}
\begin{split}
    \mathcal{L}_{pop}^i(f) &=a_i\left(  \mathop{\mathbb{E}}_{x\sim \mathcal{D}_i}\ln(e^{-f(x)} +1)\right) + b_i\left(  \mathop{\mathbb{E}}_{x\sim \mathcal{D}_i}\ln(e^{f(x)} +1)\right) \\
       &=a_i\left(  \sum_{\zeta\in \Xi} \Pr_{x\sim \mathcal{D}_i}(f(x) = \zeta)\cdot\ln(e^{-\zeta} +1)\right) \\
       &\hspace{1em}+ b_i\left(  \sum_{\zeta\in \Xi} \Pr_{x\sim \mathcal{D}_i}(f(x) = \zeta)\cdot\ln(e^{\zeta} +1)\right)\\
       &= \sum_{\zeta\in \Xi} \Pr_{x\sim \mathcal{D}_i}(f(x) = \zeta)\cdot
       \left( a_i\ln(e^{-\zeta} +1) +  b_i\ln(e^{\zeta} +1)\right)\\
       &= \sum_{\zeta\in \Xi} \Pr_{x\sim \mathcal{D}_i}(f(x) = \zeta)\cdot
       a_i\left(  \left(1 + \frac {b_i}{a_i} \right)\ln(e^\zeta +1) - \zeta\right) \label{eq:def_pop_loss}
\end{split}
\end{equation}
We can define $h_\gamma(\zeta):= \left(1 + \gamma^{-1} \right)\ln(e^\zeta +1) - \zeta $, so that 
\begin{equation}
       \mathcal{L}_{pop}^i(f) = a_i\sum_{\zeta\in \Xi} \Pr_{x\sim \mathcal{D}_i}(f(x) = \zeta)\cdot  h_{^{a_i}\!/\!_{b_i}}(\zeta) \label{eq:l_pop}
\end{equation}
Now, note that:
\begin{equation}
    h_\gamma'(\zeta)= \left(1 + \gamma^{-1} \right)\frac{e^\zeta}{e^\zeta + 1} - 1
\end{equation}
and
\begin{equation}
    h_\gamma''(\zeta)= \left(1 + \gamma^{-1} \right)\frac{e^\zeta}{(e^\zeta + 1)^2}.
\end{equation}
Then, we see that $h_\gamma(\zeta)$ is a convex function, with a global minimum at $\zeta = \ln(\gamma)$, and second-derivative at least 
\begin{equation}
    \left(1+\gamma^{-1}\right)\frac{e^{n_\Xi\xi/2}}{(e^{n_\Xi\xi/2}+1)^2} \,\,\left(= \left(1+\gamma^{-1}\right)\frac{e^{-n_\Xi\xi/2}}{(e^{-n_\Xi\xi/2}+1)^2}\right)
\end{equation}
everywhere on the interval $[-n_\Xi\xi/2,n_\Xi\xi/2]$.

Due to the convexity of $h_{^{a_i}\!/\!_{b_i}}(\zeta)$, we have, $\forall j >0$,
\begin{equation*}
    a_i \cdot h_{^{a_i}\!/\!_{b_i}}(\ln(a_i/b_i)) \leq \min( a_i \cdot  h_{^{a_i}\!/\!_{b_i}}(\bar{c}_i),  a_i \cdot  h_{^{a_i}\!/\!_{b_i}}(\underline{c}_i))\, (= \mathcal{L}_{ref}^i) \leq  a_i \cdot  h_{^{a_i}\!/\!_{b_i}}(\bar{c}_i) <  a_i \cdot h_{^{a_i}\!/\!_{b_i}}(\bar{c}_i +j\xi)
\end{equation*}
and, similarly, $\forall j>0$:
\begin{equation*}
    a_i \cdot h_{^{a_i}\!/\!_{b_i}}(\ln(a_i/b_i)) \leq \min( a_i \cdot  h_{^{a_i}\!/\!_{b_i}}(\bar{c}_i),  a_i \cdot  h_{^{a_i}\!/\!_{b_i}}(\underline{c}_i))\, (= \mathcal{L}_{ref}^i) \leq  a_i \cdot  h_{^{a_i}\!/\!_{b_i}}(\underline{c}_i) <  a_i \cdot h_{^{a_i}\!/\!_{b_i}}(\underline{c}_i -j\xi).
\end{equation*}
In particular, by a second-order Taylor bound, we have that, for $j > 0$:
\begin{equation}
\begin{split}
      \mathcal{L}_{ref}^i &\leq a_i \cdot h_{^{a_i}\!/\!_{b_i}}(\bar{c}_i +j\xi) - a_i\cdot \left(1+(a_i/b_i)^{-1}\right)\frac{e^{n_\Xi\xi/2}}{(e^{n_\Xi\xi/2}+1)^2} \cdot \frac{(j\xi)^2}2 \\
     & \leq   a_i \cdot h_{^{a_i}\!/\!_{b_i}}(\bar{c}_i +j\xi) - \left(a_i+b_i\right)\frac{e^{n_\Xi\xi/2}}{(e^{n_\Xi\xi/2}+1)^2} \cdot \frac{\xi^2}2  
\end{split}
\end{equation}
and similarly for $ a _i\cdot h_{^{a_i}\!/\!_{b_i}}(\underline{c}_i -j\xi)$:
\begin{equation}
    \mathcal{L}_{ref}^i  \leq   a_i \cdot h_{^{a_i}\!/\!_{b_i}}(\underline{c}_i -j\xi) - \left(a_i+b_i\right)\frac{e^{n_\Xi\xi/2}}{(e^{n_\Xi\xi/2}+1)^2} \cdot \frac{\xi^2}2.
\end{equation}
In particular, by Equation \ref{eq:l_pop},
\begin{equation}
     \mathcal{L}_{pop}^i(f) \geq \mathcal{L}_{ref}^i +  \Pr_{x\sim \mathcal{D}_i}(f(x) \not \in \{\underline{c}_i, \bar c_i\}) \cdot\left(a_i+b_i\right)\frac{e^{n_\Xi\xi/2}}{(e^{n_\Xi\xi/2}+1)^2} \cdot \frac{\xi^2}2.
\end{equation}
In terms of $\eta$, this is:
\begin{equation}
\begin{split}
       \mathcal{L}_{pop}^i(f) &\geq \mathcal{L}_{ref}^i +  \Pr_{x\sim \mathcal{D}_i}(f(x) \not \in \{\underline{c}_i, \bar c_i\}) \cdot\left(a_i+b_i\right) (\eta-\eta^2) \cdot \frac{\xi^2}2 \\
       &\geq \mathcal{L}_{ref}^i +  \Pr_{x\sim \mathcal{D}_i}(f(x) \not \in \{\underline{c}_i, \bar c_i\}) \cdot\left(a_i+b_i\right) \cdot \frac{\eta\xi^2}4  
\end{split}
\end{equation}
where we use the fact that $\eta \leq 1/2$ in the last inequality.
This gives us:
\begin{equation}
\Pr_{x\sim \mathcal{D}_i}(f(x) \not \in \{\underline{c}_i, \bar c_i\})  \leq  \frac{4( \mathcal{L}_{pop}^i(f)-\mathcal{L}_{ref}^i) }{(a_i+b_i)\eta \xi^2} \label{eq:pop_ref_gap}
\end{equation}
Because $\mathcal{L}_{pop}^i - \mathcal{L}_{ref}^i \geq 0$, this implies:
\begin{equation}
\begin{split}
   &\forall i \in [m],\\&(a_i+b_i) \cdot \Pr_{x\sim \mathcal{D}_i}(f(x) \not \in \{\underline{c}_i, \bar c_i\})  \leq  \frac{4( \mathcal{L}_{pop}^i(f)-\mathcal{L}_{ref}^i) }{\eta \xi^2} \leq \frac{4\sum_{i=1}^m ( \mathcal{L}_{pop}^i(f)-\mathcal{L}_{ref}^i) }{\eta \xi^2} \\
   &\hspace{13em}= \frac{4( \mathcal{L}_{pop}(f)-\mathcal{L}_{ref}) }{\eta \xi^2}  \label{eq:pop_ref_gap_v2}
\end{split}
\end{equation}

Meanwhile, from Equations \ref{eq:def_loss_v2} and \ref{eq:def_pop_loss_v2} applying (one-sided) Hoeffding's lemma  gives us, with probability at least $1-\delta/2$:

\begin{equation}
    \mathcal{L}(f)    -\mathcal{L}_{pop}(f) +  \sqrt{\sum_{i=1}^m(a_i+b_i)} \ln(1/\eta)\sqrt{2\ln(2/\delta)} \geq 0.
\end{equation}

which implies, by assumption:
\begin{equation}
   \forall i,\,\, \mathcal{L}(f)    -\mathcal{L}_{pop}(f) +  \sqrt{a_i+b_i} \sqrt{1/\nu}\ln(1/\eta)\sqrt{2\ln(2/\delta)} \geq 0. \label{eq:l_pop_gap_v2}
\end{equation}

Combining Equations \ref{eq:pop_ref_gap_v2} and \ref{eq:l_pop_gap_v2} gives, with probability at least $1-\delta/2$, we have that $\forall i \in [m],$
\begin{equation}
(a_i+b_i)\Pr_{x\sim \mathcal{D}_i}(f(x) \not \in \{\underline{c}_i, \bar c_i\})  \leq  \frac{4( \mathcal{L}(f)-\mathcal{L}_{ref}) }{\eta \xi^2} + \frac{4(\sqrt{a_i+b_i})\ln(1/\eta)\sqrt{2\ln(2/\delta)}}{\sqrt{\nu}\eta \xi^2}.
\end{equation}
Then, with probability at least $1-\delta/2$, the condition $\mathcal{L}(f)\leq \mathcal{L}_{ref}$ implies:
\begin{equation}
\forall i \in [m],\,\,\,
(a_i+b_i)\Pr_{x\sim \mathcal{D}_i}(f(x) \not \in \{\underline{c}_i, \bar c_i\})  \leq  \frac{4\sqrt{a_i +b_i} \ln(1/\eta)\sqrt{2\ln(2/\delta)}}{\sqrt{\nu}\eta \xi^2}.
\end{equation}
We can apply Hoeffding's lemma \textit{once for each} $i \in [m]$, to the binary variable of whether on not $f(x) \in \{\underline{c}_i,\bar{c}_i\}$, where $x$ is sampled $(a_i+b_i)$ times to produce the dataset $A_i \cup B_i$. By union bound, we have, with probability at least $1-\delta$, $\mathcal{L}(f)\leq \mathcal{L}_{ref}$ implies:
\begin{equation}
\begin{split}
     \forall i \in [m],\,\,\, \Big|\{x \in A_i \cup B_i| f(x)\not \in \{\underline{c}_i,\bar{c}_i\}\Big| &\leq \\\sqrt{(a_i+b_i)\ln(2m/\delta)/2} + \frac{4\sqrt{a_i+b_i}\ln(1/\eta)\sqrt{2\ln(2/\delta)}}{\sqrt{\nu}\eta \xi^2}
       &\leq \\\sqrt{(a_i+b_i)}\sqrt{2}\left(\frac{\sqrt{\ln(2m/\delta)}}{2} + \frac{4 \sqrt{\ln(2/\delta)}\ln(1/\eta)}{\sqrt{\nu}\eta \xi^2}\right)
       &\leq \\\sqrt{(a_i+b_i)}\sqrt{2}\frac{ \ln(1/\eta)}{\eta \xi^2}\left(\frac{\sqrt{\ln(2/\delta)}}{2} + \frac{\sqrt{\ln(m)}}{2} + \frac{4 \sqrt{\ln(2/\delta)}}{\sqrt{\nu}}\right),\label{eq:multi_lemma_xi_nu_1}
\end{split}
\end{equation}

where in the last line, we used triangle inequality and the fact that  $\eta  = 1/(e^{n_{\Xi}\xi/2} + 1) \leq 1/(e^{\xi/2} + 1) $, which in turn implies:
\begin{equation}
    \frac{\ln(1/\eta)}{\eta \xi^2} \geq \frac{(e^{\xi/2} + 1)\ln(e^{\xi/2} + 1)}{\xi^2} > 1 \,\,\,\,\,(\forall \xi > 0). \label{eq:xi_eta_1_bound}
\end{equation}

Note that, because \text{each} $a_i + b_i$ contains at least a $\nu$-fraction of the total $\sum_i^m a_i +b_i$, we must have $m \leq 1/\nu$. Then:
\begin{equation}
    \frac{\sqrt{\ln(m)}}{2} \leq   \frac{\sqrt{\ln(1/\nu)}}{2} \leq \frac{1}{2\sqrt{e}\sqrt\nu} \leq \frac{1}{2\sqrt{e}\sqrt{\ln(2)}}\frac{\sqrt{\ln(2/\delta)}}{\sqrt\nu} \leq \frac{\sqrt{\ln(2/\delta)}}{2\sqrt\nu}
\end{equation}
Therefore (and noting $\nu < 1)$, we can combine terms in Equation \ref{eq:multi_lemma_xi_nu_1} to conclude:
\begin{equation}
     \forall i \in [m],\,\,\, \Big|\{x \in A_i \cup B_i| f(x)\not \in \{\underline{c}_i,\bar{c}_i\}\Big| 
       \leq 
  \frac{5 \sqrt{2(a_i+b_i)\ln(2/\delta)} \ln(1/\eta)}{\sqrt{\nu}\eta \xi^2}.
\end{equation}

Now, to ensure $  \forall i \in [m],\,\,\,\Big|\{x \in A_i \cup B_i| f(x)\not \in \{\underline{c}_i,\bar{c}_i\}\Big| \leq \epsilon(a_i+b_i)$, we need,
\begin{equation}
 \forall i \in [m],\,\,\,
\frac{5 \sqrt{2(a_i+b_i)\ln(2/\delta)} \ln(1/\eta)}{\sqrt{\nu}\eta \xi^2}\leq \epsilon(a_i+b_i)
\end{equation}
or:
\begin{equation}
 \forall i \in [m],\,\,\,
\frac{50 \ln(2/\delta) \ln^2(1/\eta)}{\nu\epsilon^2\eta^2 \xi^4} \leq a_i+b_i
\end{equation}
as provided by Equation \ref{eq:lemm_num_samples_v2}. Note that because the implication 
\begin{equation}
  \mathcal{L}(f)\leq \mathcal{L}_{ref} \rightarrow \forall i \in [m],\,\,\, \Big|\{x \in A_i \cup B_i| f(x)\not \in \{\underline{c}_i,\bar{c}_i\}\Big| \leq \epsilon(a_i+b_i)  
\end{equation}
 holds with probability at least $(1-\delta)$, this implication can only be \textit{broken}, by the case that 
\begin{equation}
  \mathcal{L}(f)\leq \mathcal{L}_{ref} \land  \exists i \in [m]:\,\Big|\{x \in A_i \cup B_i| f(x)\not \in \{\underline{c}_i,\bar{c}_i\}\Big| > \epsilon (a_i+b_i)
\end{equation}
with probability at most $\delta$.
\end{proof}
\begin{corollary}
Let $\mathcal{D}^*(s_{h}^*,s_{h+1}^*)$ be the multiset of observation pairs $(x_{h},x_{h+1})$ from both $\tau_A$ and $\tau_B$ in Algorithm 1, such that $\phi^*_h(x_{h}) = s_{h}^*$ and $\phi^*_{h+1}(x_{h+1}) = s_{h+1}^*$, and let $\mathcal{D}_A^*(s_{h}^*,s_{h+1}^*)$ and $\mathcal{D}_B^*(s_{h}^*,s_{h+1}^*)$ be the elements of $\mathcal{D}^*(s_{h}^*,s_{h+1}^*)$ originating from $\tau_A$ and $\tau_B$ respectively.  Further, let $\bar{c}_{s^*_{h},s^*_{h+1}} \in \Xi$ be the smallest value in $\Xi$ greater than or equal to $\ln(|\mathcal{D}_A^*(s_{h}^*,s_{h+1}^*)|/|\mathcal{D}_B^*(s_{h}^*,s_{h+1}^*)|)$, and $\underline{c}_{s^*_{h},s^*_{h+1}}\in \Xi$ be the largest value in $\Xi$ less than or equal to $\ln(|\mathcal{D}_A^*(s_{h}^*,s_{h+1}^*)|/|\mathcal{D}_B^*(s_{h}^*,s_{h+1}^*)|)$. 

Further, assume the   realizability condition that $ \phi^*_h \in \Phi_h$ and  $ \phi^*_{h+1} \in \Phi_{h+1}$.

If $\forall s_{h}^*, s_{h+1}^*, \text{such that } s_{h}^* \text{ can transition to }s_{h+1}^*,$
\begin{equation}
       |\mathcal{D}^*(s_{h}^*,s_{h+1}^*)| \geq \frac{50 (\ln(2|\Phi|^2/\delta) + N_s^2 \ln(n_\Xi+1)) \ln^2(1/\eta)}{\nu\epsilon^2\eta^2 \xi^4}
\end{equation}
then with probability at least $1-\delta$, the function $f(\cdot)$ found in Line 10 of Algorithm \ref{alg:main_alg} will be such that, $\forall s_{h}^*, s_{h+1}^*, \text{such that } s_{h}^* \text{ can transition to }s_{h+1}^*,$
\begin{equation}
    \Big|\{x \in \mathcal{D}^*(s_{h}^*,s_{h+1}^*)| f(x)\not \in \{\underline{c}_{s^*_{h},s^*_{h+1}},\bar{c}_{s^*_{h},s^*_{h+1}}\}\Big| \leq \epsilon  \Big|\mathcal{D}^*(s_{h}^*,s_{h+1}^*) \Big| .\label{eq:spread_bound_corr}
\end{equation} \label{cor:spread_bound}
\end{corollary}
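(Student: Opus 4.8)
The plan is to derive this corollary as a direct instantiation of Lemma~\ref{lem:spread_log_prob_2}, identifying the abstract index $i$ with the latent-state pairs $(s_h^*, s_{h+1}^*)$ for which $s_h^*$ can transition to $s_{h+1}^*$. Concretely, for each such pair I would set $a_i := |\mathcal{D}_A^*(s_h^*, s_{h+1}^*)|$ and $b_i := |\mathcal{D}_B^*(s_h^*, s_{h+1}^*)|$, and take $\mathcal{D}_i$ to be the single emission-pair distribution $[(\mathcal{Q}(s_h^*, e_h), \mathcal{Q}(s_{h+1}^*, e_{h+1})) \mid e_h, e_{h+1} \sim P^e_{h:h+1}]$. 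Because the loss in Equation~\ref{eq:cluster_loss_main} is additive over observation pairs, and every pair belongs to exactly one latent-state pair, it coincides termwise with the loss $\mathcal{L}(f)$ of Equation~\ref{eq:def_loss_v2}; likewise $\bar c_{s_h^*,s_{h+1}^*}$ and $\underline c_{s_h^*,s_{h+1}^*}$ are exactly the $\bar c_i, \underline c_i$ of the Lemma.

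Next I would verify the Lemma's hypotheses one at a time. For the i.i.d. structure, I would invoke the Preliminary Note: fixing the latent trajectories $\phi^*(\tau_A), \phi^*(\tau_B)$ as arbitrary-but-fixed, Equation~\ref{eq:noise_free_policy_property} makes the observation pairs within a given latent-state pair (from both $\tau_A$ and $\tau_B$) into i.i.d. draws from the common distribution $\mathcal{D}_i$, so that $A_i$ and $B_i$ are indeed $a_i$ and $b_i$ samples from the same $\mathcal{D}_i$. The coverage hypothesis $\tfrac{a_i+b_i}{\sum_{i'}(a_{i'}+b_{i'})} \geq \nu$ follows from Equation~\ref{eq:pair_coverage}, since the total transition count $\sum_{i'}(a_{i'}+b_{i'})$ equals $|\tau_A| + |\tau_B|$. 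The range condition $|\ln(a_i/b_i)| \leq n_\Xi \xi / 2$ is where the $\eta$-bookkeeping in Lines 1--4 of Algorithm~\ref{alg:main_alg} enters: Equation~\ref{eq:eta_coverage} gives $\tfrac{a_i}{a_i+b_i}, \tfrac{b_i}{a_i+b_i} \geq \eta$ for the input $\eta$, hence $|\ln(a_i/b_i)| \leq \ln(\eta^{-1}-1)$, while the choices $\xi = \alpha/4$ and $n_\Xi = \lceil 8\ln(\eta^{-1}-1)/\alpha \rceil$ ensure $n_\Xi \xi / 2 = n_\Xi \alpha / 8 \geq \ln(\eta^{-1}-1)$. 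I would also note that the redefinition $\eta \leftarrow (1 + e^{n_\Xi \alpha/8})^{-1} = (e^{n_\Xi \xi/2}+1)^{-1}$ in Line 3 makes the algorithm's $\eta$ agree exactly with the quantity $\eta = (e^{n_\Xi \xi/2}+1)^{-1}$ appearing in the Lemma, in terms of which the final bound is stated.

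The realizability step supplies the premise $\mathcal{L}(f) \leq \mathcal{L}_{\mathrm{ref}}$ that the Lemma requires. I would exhibit a reference function $f_{\mathrm{ref}} = (\phi_h^*, \phi_{h+1}^*, g) \in \mathcal{F}_h = \Phi_h \times \Phi_{h+1} \times (N_s^2 \rightarrow \Xi)$, where $g$ maps each latent-state pair to whichever of $\bar c_i, \underline c_i$ attains the minimum defining $\mathcal{L}_{\mathrm{ref}}^i$; this $f_{\mathrm{ref}}$ lies in $\mathcal{F}_h$ by the realizability of $\phi_h^*, \phi_{h+1}^*$ and the freedom of the lookup component, and by construction its empirical loss equals $\mathcal{L}_{\mathrm{ref}}$. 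Since the $f$ found in Line 10 minimizes Equation~\ref{eq:cluster_loss_main} over $\mathcal{F}_h$, we obtain $\mathcal{L}(f) \leq \mathcal{L}(f_{\mathrm{ref}}) = \mathcal{L}_{\mathrm{ref}}$ deterministically.

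The final ingredient, and the main obstacle, is converting the Lemma---which bounds the bad event for a \emph{fixed} $f$---into a statement about the \emph{data-dependent} empirical minimizer. Since (under the fixed latent trajectories) $a_i, b_i, \mathcal{L}_{\mathrm{ref}}$ and the $\bar c_i, \underline c_i$ are all deterministic, I would apply the Lemma with failure probability $\delta' := \delta / |\mathcal{F}_h|$ to each $f \in \mathcal{F}_h$ and union-bound. On the resulting good event, for \emph{every} $f$ the implication $\mathcal{L}(f) \leq \mathcal{L}_{\mathrm{ref}} \Rightarrow [\text{good spectrum}]$ holds; instantiating it at the empirical minimizer, whose loss we already bounded by $\mathcal{L}_{\mathrm{ref}}$, yields Equation~\ref{eq:spread_bound_corr}. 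Plugging $\ln(2/\delta') = \ln(2|\mathcal{F}_h|/\delta)$ into Equation~\ref{eq:lemm_num_samples_v2} and bounding $|\mathcal{F}_h| \leq |\Phi|^2 (n_\Xi+1)^{N_s^2}$ gives $\ln(2/\delta') \leq \ln(2|\Phi|^2/\delta) + N_s^2 \ln(n_\Xi+1)$, which reproduces exactly the sample-complexity threshold stated in the corollary.
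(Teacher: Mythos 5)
Your proposal is correct and follows essentially the same route as the paper's proof: instantiate Lemma~\ref{lem:spread_log_prob_2} with $\delta' = \delta/|\mathcal{F}_h| \geq \delta/(|\Phi|^2 (n_\Xi+1)^{N_s^2})$, union-bound over the finite class $\mathcal{F}_h$, and use realizability to exhibit $f^* = (\phi_h^*, \phi_{h+1}^*)$ composed with the minimizing lookup into $\{\underline{c}_i, \bar{c}_i\}$, which achieves $\mathcal{L}(f^*) = \mathcal{L}_{\mathrm{ref}}$ and forces the empirical minimizer onto the good event. Your write-up is in fact more careful than the paper's in explicitly verifying the lemma's hypotheses (the i.i.d.\ structure via the fixed-latent-trajectory argument of the Preliminary Note, the $\nu$-coverage from Equation~\ref{eq:pair_coverage}, and the range condition $|\ln(a_i/b_i)| \leq n_\Xi\xi/2$ from Equation~\ref{eq:eta_coverage} and Lines 1--4 of the algorithm), all of which the paper leaves implicit.
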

\begin{proof}
By application of Lemma \ref{lem:spread_log_prob_2} with $\delta' :=  \delta/(|\Phi|^2\cdot (n_\Xi+1)^{(N_s^2)} ) \leq \delta/|\mathcal{F}_h| $, we have that, for any fixed hypothesis $f'$, the probability that $\mathcal{L}(f') \leq \mathcal{L}_{ref}$ and Equation \ref{eq:spread_bound_corr} is violated is at most $\delta/|\mathcal{F}_h|$. Then by union bound, the probability that any such $f'$ exists in $\mathcal{F}_h$ is at most $1-\delta$. However, by the realizability assumption, we know that an $f^*$ exists in $\mathcal{F}$ which achieves loss  $\mathcal{L}(f^*) = \mathcal{L}_{ref}$ and also that respects Equation \ref{eq:spread_bound_corr}. (In particular, this $f^*$ is simply $(\phi^*_h, \phi^*_{h+1})$ composed with a mapping from the representations corresponding to each $(s^*_{h},s^*_{h+1})$ to the corresponding $\underline{c}_{s^*_{h},s^*_{h+1}}$ or $\bar{c}_{s^*_{h},s^*_{h+1}}$ which minimizes Equation \ref{eq:c_und_c_bar_v2}.) Therefore with probability at least $1-\delta$, the $f \in \mathcal{F}_h$ which minimizes $\mathcal{L}(f)$ must respect Equation \ref{eq:spread_bound_corr}.
\end{proof}
We now give two simple results for classification under corrupted data. First though, we prove a minor claim, which is simply some ``deferred algebra'' for the lemmas which follow:

\begin{proposition} \label{prop:algebra}
    Consider a multiset $\mathcal{Z} = \{z_1,...,z_m\}$ of items $z_i\in [0,1]$, and a modified multiset $\mathcal{Z}'$, also consisting of items in $[0,1]$, such that the symmetric difference between $\mathcal{Z} $ and $\mathcal{Z}'$ has size at most $k$ (that is, $\mathcal{Z}'$ can be constructed from $\mathcal{Z} $ by inserting and/or removing a total of at most $k$ items). Then \begin{equation}
        \left|\sum_{\mathcal{Z}} \frac{z}{|\mathcal{Z}|} -\sum_{\mathcal{Z'}} \frac{z'}{|\mathcal{Z'}|}  \right| \leq \frac{k}{m}
    \end{equation}
\end{proposition}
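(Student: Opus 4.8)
The quantity to bound is simply $\lvert \bar z - \bar z'\rvert$, where $\bar z := \frac1m\sum_{z\in\mathcal Z} z$ and $\bar z' := \frac1{m'}\sum_{z'\in\mathcal Z'} z'$, with $m=|\mathcal Z|$ and $m':=|\mathcal Z'|$. The plan is to \emph{avoid} the naive maneuver of writing $\bar z-\bar z'$ over the common denominator $m m'$: that route yields only $\lvert\bar z-\bar z'\rvert\le k/m'$, which is too weak when there are more deletions than insertions (so that $m'<m$). Instead I would measure deviations against $\bar z'$ and sum them over $\mathcal Z$, which produces the target denominator $m$ directly.

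First I would record the multiset decomposition $\mathcal Z=\mathcal C\uplus R$ and $\mathcal Z'=\mathcal C\uplus I$, where $\mathcal C$ is the multiset intersection of $\mathcal Z$ and $\mathcal Z'$, and $R$ (respectively $I$) is the multiset of removed (respectively inserted) items. The hypothesis that the symmetric difference has size at most $k$ gives $|R|+|I|\le k$. Next, I would invoke the exact (and essentially trivial) identity
\[
\bar z-\bar z'=\frac1m\sum_{z\in\mathcal Z}\bigl(z-\bar z'\bigr),
\]
and split the sum over $\mathcal Z=\mathcal C\uplus R$ into its $\mathcal C$-part and its $R$-part.

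The key step is then to eliminate the (potentially large) contribution of the common part $\mathcal C$. Since $\bar z'$ is the mean of $\mathcal Z'=\mathcal C\uplus I$, we have $\sum_{z'\in\mathcal Z'}(z'-\bar z')=0$, and therefore $\sum_{z\in\mathcal C}(z-\bar z')=-\sum_{z\in I}(z-\bar z')$. Substituting this into the identity above yields
\[
\bar z-\bar z'=\frac1m\Bigl(\sum_{z\in R}(z-\bar z')-\sum_{z\in I}(z-\bar z')\Bigr).
\]
Finally, since every item lies in $[0,1]$ and $\bar z'\in[0,1]$, each summand obeys $\lvert z-\bar z'\rvert\le 1$, so the two sums are bounded in absolute value by $|R|$ and $|I|$ respectively; the triangle inequality then gives $\lvert\bar z-\bar z'\rvert\le (|R|+|I|)/m\le k/m$, as claimed.

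The only real obstacle is the one flagged at the start: securing $m$ rather than $m'$ in the denominator. The centering-and-cancellation trick resolves it — choosing to center on $\bar z'$, the mean of the \emph{final} multiset, is exactly what makes the unknown common part $\mathcal C$ cancel, leaving only the $O(k)$-sized symmetric-difference terms divided by the size $m$ of $\mathcal Z$. Everything else is a one-line bound, so I do not anticipate further difficulty.
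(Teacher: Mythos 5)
Your proof is correct, and it takes a genuinely different---and arguably cleaner---route than the paper's. The paper decomposes $\mathcal{Z} = \mathcal{Z}_{kept} \uplus \mathcal{Z}_{removed}$ and $\mathcal{Z}' = \mathcal{Z}_{kept} \uplus \mathcal{Z}_{added}$, assumes without loss of generality that $|\mathcal{Z}| \geq |\mathcal{Z}'|$, and passes through an intermediate multiset $\mathcal{Z}''$ of the same cardinality as $\mathcal{Z}$ (obtained by a size-matched swap of $|\mathcal{Z}_{added}|$ of the removed items for the added items, followed by a pure deletion), bounding the two stages by $|\mathcal{Z}_{added}|/|\mathcal{Z}|$ and $|\mathcal{Z}_{removed}''|/|\mathcal{Z}|$ respectively and finishing with the triangle inequality; the opposite case $|\mathcal{Z}| < |\mathcal{Z}'|$ is only asserted to follow by ``a similar argument.'' Your centering identity $\bar z - \bar z' = \frac{1}{m}\sum_{z \in \mathcal{Z}}(z - \bar z')$, combined with $\sum_{z' \in \mathcal{Z}'}(z' - \bar z') = 0$ to cancel the contribution of the common part $\mathcal{C}$, collapses the whole argument into a single computation with the target denominator $m$ built in from the start, and---unlike the paper's proof---treats the deletion-heavy and insertion-heavy cases uniformly, so no case split (and no deferred sketch) is needed. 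What the paper's route buys is a very explicit, mechanical accounting of how each individual edit perturbs the mean, with each stage an elementary one-operation bound; what yours buys is brevity, case-freeness, and the isolation of the single analytic input $|z - \bar z'| \leq 1$, which holds because all items, and hence the mean $\bar z'$, lie in $[0,1]$ (with $\mathcal{Z}'$ nonempty, as the statement implicitly requires for $\bar z'$ to be defined). Both arguments are sound, but yours is complete in both cases where the paper leaves one as a sketch.
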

\begin{proof}
 Define $\mathcal{Z}_{removed}, \mathcal{Z}_{added},$ and  $\mathcal{Z}_{kept}$ such that $\mathcal{Z} = \mathcal{Z}_{kept} + \mathcal{Z}_{removed}$, and $\mathcal{Z}' = \mathcal{Z}_{kept} + \mathcal{Z}_{added}$. Note that $k = |\mathcal{Z}_{removed}| + |\mathcal{Z}_{added}|$ and  $m = |\mathcal{Z}_{removed}| + |\mathcal{Z}_{kept}|$.
We first assume that $|\mathcal{Z}| \geq |\mathcal{Z}'| $. In other words, we assume $|\mathcal{Z}_{removed}| \geq |\mathcal{Z}_{added}|$. Then we can construct $\mathcal{Z}' $ from $\mathcal{Z}$ by (1) removing some arbitrary subset $\mathcal{Z}_{removed}' \subseteq \mathcal{Z}_{removed}$ from $\mathcal{Z}$, such that $|\mathcal{Z}_{removed}'|$ = $|\mathcal{Z}_{added}|$; then (2) inserting the samples $\mathcal{Z}_{added}$; and then finally (3) removing the multiset $\mathcal{Z}_{removed}'' = \mathcal{Z}_{removed} \setminus \mathcal{Z}_{removed}'$. Let the intermediate set constructed after  step (2)  be $\mathcal{Z}'' := (\mathcal{Z}\setminus \mathcal{Z}_{removed}') \uplus \mathcal{Z}_{added} = \mathcal{Z}' \uplus \mathcal{Z}_{removed}''$. Note that $|\mathcal{Z}| = |\mathcal{Z}''|$, and 
  \begin{equation}
    \begin{split}
&\left|\sum_{\mathcal{Z}} \frac{z}{|\mathcal{Z}|} -\sum_{\mathcal{Z}''} \frac{z''}{|\mathcal{Z}''|}  \right| = \left|\sum_{\mathcal{Z}} \frac{z}{|\mathcal{Z}|} -\sum_{\mathcal{Z}''} \frac{z''}{|\mathcal{Z}|}  \right| = \\
&\frac{1}{|\mathcal{Z}|} \left|\sum_{\mathcal{Z}} z -\sum_{\mathcal{Z}''} z'' \right| =  \frac{1}{|\mathcal{Z}|} \left|\sum_{\mathcal{Z'}_{removed}} z -\sum_{\mathcal{Z}_{added}} z \right| \leq \frac{|\mathcal{Z}_{added}|}{|\mathcal{Z}|} 
\end{split}
\end{equation}
 Additionally,  note that:
    \begin{equation}
    \begin{split}
&\left|\sum_{\mathcal{Z}''} \frac{z''}{|\mathcal{Z}''|} -\sum_{\mathcal{Z}'} \frac{z'}{|\mathcal{Z}'|}  \right| =
        \frac{1}{|\mathcal{Z}''|}\left|\sum_{\mathcal{Z}''} z'' -\frac{|\mathcal{Z}''|\sum_{\mathcal{Z}'} z'}{|\mathcal{Z}'|}  \right| = \\& \frac{1}{|\mathcal{Z}|}\left|\frac{|\mathcal{Z}'|\sum_{\mathcal{Z}'} z'}{|\mathcal{Z}'|} +\frac{|\mathcal{Z}_{removed}''|\sum_{\mathcal{Z}_{removed}''} z_r}{|\mathcal{Z}_{removed}''|} -\frac{|\mathcal{Z}''|\sum_{\mathcal{Z}'} z'}{|\mathcal{Z}'|}  \right|  =\\
        & \frac{1}{|\mathcal{Z}|}\left|\frac{|\mathcal{Z}_{removed}''|\sum_{\mathcal{Z}_{removed}''} z_r}{|\mathcal{Z}_{removed}''|} -\frac{|\mathcal{Z}_{removed}''|\sum_{\mathcal{Z}'} z'}{|\mathcal{Z}'|}  \right|  =\\
&\frac{|\mathcal{Z}_{removed}''|}{|\mathcal{Z}|}\left|\frac{\sum_{\mathcal{Z}_{removed}''} z_r}{|\mathcal{Z}_{removed}''|} -\frac{\sum_{\mathcal{Z}'} z'}{|\mathcal{Z}'|}  \right|  \leq \frac{|\mathcal{Z}_{removed}''|}{|\mathcal{Z}|}
    \end{split}
    \end{equation}
    Finally, by triangle inequality, we have that 
    \begin{equation}
    \begin{split}
&\left|\sum_{\mathcal{Z}} \frac{z}{|\mathcal{Z}|} -\sum_{\mathcal{Z}'} \frac{z'}{|\mathcal{Z}'|}  \right|    \leq \left|\sum_{\mathcal{Z}} \frac{z}{|\mathcal{Z}|} -\sum_{\mathcal{Z}''} \frac{z''}{|\mathcal{Z}''|}  \right| +\left|\sum_{\mathcal{Z}''} \frac{z''}{|\mathcal{Z}''|} -\sum_{\mathcal{Z}'} \frac{z'}{|\mathcal{Z}'|}  \right| \\
&\leq \frac{|\mathcal{Z}_{added}|}{|\mathcal{Z}|} +\frac{|\mathcal{Z}_{removed}''|}{|\mathcal{Z}|} \leq \frac{|\mathcal{Z}_{added}|}{|\mathcal{Z}|} +\frac{|\mathcal{Z}_{removed}|}{|\mathcal{Z}|} \leq \frac{k}{m}
    \end{split}
    \end{equation}
 as desired. A similar argument can be made for the case of $|\mathcal{Z}| <|\mathcal{Z}'| $.
\end{proof}

We now give the classification lemmas:
\begin{lemma} \label{lemma:binary_classification}
Given a finite hypothesis class $\mathcal{G}\subseteq \mathcal{X} \rightarrow \{0,1\}$, and two datasets (multisets) $A, B \subset \mathcal{X}$, let:
\begin{equation}
\begin{split}
       g' &:= \arg\min_{g\in \mathcal{G}} \mathcal{L}(g)\\
    \mathcal{L}(g)&:=\frac{1}{|A|}\sum_{a\in A} (1-g(a))\,+\,\frac{1}{|B|}\sum_{b\in B} g(b). \label{eq:binary_lassification}
\end{split}
\end{equation}
Let $n = \min(|A|,|B|)$, and assume that $A$ and $B$ are constructed as follows:
\begin{itemize}
    \item $A' \sim \mathcal{D}_A^{|A'|}$
    \item $B' \sim \mathcal{D}_B^{|B'|}$
    \item At most a total of $m$ arbitrary (non-i.i.d.) samples are either added to or removed from $A'$ or $B'$, or moved from $A'$ to $B'$ or vice-versa, to create $A$ and $B$.
\end{itemize}
Then, if
\begin{equation}
    n \geq 8m\text{ and } n \geq \frac{128}{7} \ln(2|\mathcal{G}|/\delta)
\end{equation}
then with probability at least $1-\delta$,
\begin{itemize}
    \item If $\mathcal{D}_A = \mathcal{D}_B$, then $\mathcal{L}(g') > 1/2$
    \item Conversely, if $\mathcal{D}_A$ and $\mathcal{D}_B$ have disjoint support, such that some $g^*\in \mathcal{G}$ maps all elements in the support of $\mathcal{D}_A$ to 1 and all elements in the support of $\mathcal{D}_B$ to 0, then $\mathcal{L}(g') \leq 1/2$.
\end{itemize}
\end{lemma}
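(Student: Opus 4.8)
The plan is to reduce both conclusions to three decoupled ingredients: (i) a computation of the \emph{population} loss in each regime, (ii) a worst-case bound on how much the adversarial corruption can perturb the empirical loss, read off directly from Proposition \ref{prop:algebra}, and (iii) a single concentration argument on the clean i.i.d.\ draws $A' \sim \mathcal{D}_A^{|A'|}$, $B' \sim \mathcal{D}_B^{|B'|}$. The decoupling is what makes this work: since the perturbation bound is worst-case over \emph{all} modifications of size at most $m$, it holds for any (even data-dependent) corruption and can be applied \emph{after} establishing concentration on the untouched i.i.d.\ sets. Concretely, writing $\mathcal{L}'(g)$ for the loss of Equation \ref{eq:binary_lassification} evaluated on $A',B'$, and applying Proposition \ref{prop:algebra} to each of the two averages with the corrupted set taken as the base multiset (whose size is $\ge n$), the fact that each summand lies in $[0,1]$ and that at most $m$ samples of $A$ and at most $m$ of $B$ change gives $|\mathcal{L}(g)-\mathcal{L}'(g)| \le m/|A| + m/|B| \le 2m/n$ for every $g$; the hypothesis $n \ge 8m$ makes this at most $1/4$.

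Next I would compute the population loss. Its expectation is $\mathbb{E}_{\mathcal{D}_A}[1-g] + \mathbb{E}_{\mathcal{D}_B}[g]$. In Case 1 ($\mathcal{D}_A = \mathcal{D}_B$) this is $\Pr[g=0]+\Pr[g=1]=1$ for \emph{every} $g \in \mathcal{G}$. In Case 2, the realizing classifier $g^*$ has $g^*=1$ on $\mathrm{supp}(\mathcal{D}_A)$ and $g^*=0$ on $\mathrm{supp}(\mathcal{D}_B)$, so every clean sample is classified correctly and $\mathcal{L}'(g^*)=0$ \emph{deterministically}, requiring no concentration. Case 2 is then immediate: since $g'$ is the empirical minimizer, $\mathcal{L}(g') \le \mathcal{L}(g^*) \le \mathcal{L}'(g^*) + 2m/n \le 1/4 \le 1/2$.

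For Case 1 the lower bound must hold \emph{uniformly} over $\mathcal{G}$, since $g'$ is chosen adversarially. Fixing $g$, I would view $\mathcal{L}'(g)$ as a sum of $|A'|+|B'|$ independent bounded variables, the $A'$-summands ranging in $[0,1/|A'|]$ and the $B'$-summands in $[0,1/|B'|]$, and apply a one-sided Hoeffding bound to this \emph{pooled} sum (rather than two separate applications, which would cost a factor of two). With deviation $u=1/4$, this gives tail $\exp(-2u^2/(1/|A'|+1/|B'|))$; using $|A'|,|B'| \ge n-m \ge 7n/8$, a union bound over $|\mathcal{G}|$, and $n \ge \tfrac{128}{7}\ln(2|\mathcal{G}|/\delta)$, with probability at least $1-\delta$ we get $\mathcal{L}'(g) > 3/4$ for all $g$. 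Combining with the corruption bound yields $\mathcal{L}(g) > 3/4 - 1/4 = 1/2$ for all $g$, hence $\mathcal{L}(g') > 1/2$.

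The main obstacle is the interaction between the \emph{adversarial} corruption and the \emph{uniform}, worst-case-over-$\mathcal{G}$ guarantee needed in Case 1: one cannot concentrate the corrupted loss directly, because the corrupted samples are not i.i.d. The crucial move is to quarantine the corruption inside the deterministic perturbation bound of Proposition \ref{prop:algebra} and run concentration purely on the clean data. Two secondary subtleties are to Hoeffding the pooled sum to obtain the stated $\tfrac{128}{7}$ constant, and to keep both budget allocations (corruption $\le 1/4$ and deviation $< 1/4$) strictly below $1/2$ so that the final inequality $\mathcal{L}(g')>1/2$ is strict.
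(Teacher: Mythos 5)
Your proposal is correct and follows essentially the same route as the paper's proof: isolate the clean loss $\mathcal{L}_{clean}$ on $A',B'$, bound the corruption's effect by $2m/n$ via Proposition \ref{prop:algebra}, apply Hoeffding to the pooled sum with a union bound over $\mathcal{G}$, and split the $1/2$ budget as $1/4$ corruption plus $1/4$ deviation to recover the constant $\tfrac{128}{7}$. Your one refinement — observing that $\mathcal{L}_{clean}(g^*)=0$ holds deterministically in the disjoint-support case, so that branch needs no concentration at all — is a valid minor simplification of the paper's argument, which instead routes $g^*$ through the same uniform Hoeffding bound.
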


\begin{proof}
    Define 
\begin{equation}
    \mathcal{L}_{clean}(g):=\frac{1}{|A'|}\sum_{a\in A'} (1-g(a))\,+\,\frac{1}{|B'|}\sum_{b\in B'} g(b). 
\end{equation}
    Then, fix any $g \in \mathcal{G}$. From some algebra (see Proposition \ref{prop:algebra}), it can be shown that
    \begin{equation}
         \mathcal{L}_{clean}(g) -\frac{2m}{n} \leq  \mathcal{L}(g) \leq  \mathcal{L}_{clean}(g)  +\frac{2m}{n} \label{eq:binary_lemma_dirty}
    \end{equation}
Now, note that   $\mathcal{L}_{clean}$ is the sum of $|A'|$ random variables bounded on $[0,1/|A'|]$, and $|B'|$ random variables bounded on $[0,1/|B'|]$, all of which are i.i.d. Then, by Hoeffding's Lemma and Equation \ref{eq:binary_lemma_dirty}, with probability $1-\delta/|\mathcal{G}|$:
\begin{equation}
\begin{split}
  \mathbb{E}[\mathcal{L}_{clean}(g)]-  \sqrt{\left(\frac{1}{|A'|} + \frac{1}{|B'|}\right)\ln(2|\mathcal{G}|/\delta)/2} -\frac{2m}{n} < \mathcal{L}_{clean}(g) -\frac{2m}{n}  &\leq \mathcal{L}(g)\\ \leq  \mathcal{L}_{clean}(g)  +\frac{2m}{n} < \mathbb{E}[\mathcal{L}_{clean}(g)]+  \sqrt{\left(\frac{1}{|A'|} + \frac{1}{|B'|}\right)\ln(2|\mathcal{G}|/\delta)/2} +\frac{2m}{n}&
  \end{split}
\end{equation}
Because $|A'|, |B'| \geq n-m$, and applying union bound over all $g \in \mathcal{G}$, we have, with probability $1-\delta$:
\begin{equation}
\forall g \in \mathcal{G} ,\,\, \mathbb{E}[\mathcal{L}_{clean}(g)]-  \sqrt{\frac{\ln(2|\mathcal{G}|/\delta)}{n-m}} -\frac{2m}{n}    < \mathcal{L}(g) <  \mathbb{E}[\mathcal{L}_{clean}(g)]+  \sqrt{\frac{\ln(2|\mathcal{G}|/\delta)}{n-m}} +\frac{2m}{n}. \label{eq:binary_test_ineq}
\end{equation}
Note that:
\begin{equation}
   \forall g\in \mathcal{G},\, \mathbb{E}[\mathcal{L}_{clean}(g)] =  1 - \mathbb{E}_{x \in \mathcal{D}_A}[g(x)] + \mathbb{E}_{x \in \mathcal{D}_B}[g(x)]. 
\end{equation}

If $\mathcal{D}_A = \mathcal{D}_B$, then $\forall g\in \mathcal{G},\,\mathbb{E}[\mathcal{L}_{clean}(g)] =1$, so by Equation \ref{eq:binary_test_ineq}, we have, with probability $1-\delta$:
\begin{equation}
\forall g\in \mathcal{G},\,  1-  \sqrt{\frac{\ln(2|\mathcal{G}|/\delta)}{n-m}} -\frac{2m}{n}    \leq \mathcal{L}(g) , 
\end{equation}
and in particular:
\begin{equation}
  1-  \sqrt{\frac{\ln(2|\mathcal{G}|/\delta)}{n-m}} -\frac{2m}{n}    < \mathcal{L}(g'). 
\end{equation}
Conversely, if $\mathcal{D}_A$ and $\mathcal{D}_B$ have disjoint support, such that some $g^*\in \mathcal{G}$ maps all elements in the support of $\mathcal{D}_A$ to 1 and all elements in the support of $\mathcal{D}_B$ to 0, then we have:
\begin{equation}
    \mathbb{E}[\mathcal{L}_{clean}(g^*)] =  1 - \mathbb{E}_{x \in \mathcal{D}_A}[g^*(x)] + \mathbb{E}_{x \in \mathcal{D}_B}[g^*(x)] = 1-1-0 = 0. 
\end{equation}
Then, with probability at least $1-\delta$:
\begin{equation}
    \mathcal{L}(g') \leq \mathcal{L}(g^*) < \sqrt{\frac{\ln(2|\mathcal{G}|/\delta)}{n-m}} +\frac{2m}{n}.
\end{equation}
To complete the proof, we only need to show that 
\begin{equation}
\sqrt{\frac{\ln(2|\mathcal{G}|/\delta)}{n-m}} +\frac{2m}{n} \leq \frac{1}{2}.
\end{equation}
With $m \leq n/8$, this condition becomes:
\begin{equation}
\sqrt{\frac{8\ln(2|\mathcal{G}|/\delta)}{7n}}  \leq \frac{1}{4}.
\end{equation}
or 
\begin{equation}
n \geq \frac{128}{7} \ln(2|\mathcal{G}|/\delta)
\end{equation}
\end{proof}
\begin{lemma}
Given a finite hypothesis class $\Phi\subset\mathcal{X} \rightarrow \mathbb{N}$, and N datasets (multisets) $D_1,D_2,...D_N \subset \mathcal{X}$, let:
\begin{equation}
\begin{split}
       \phi' &:= \arg\min_{\phi\in \Phi} \mathcal{L}(\phi)\\
    \mathcal{L}(\phi)&:= \sum_{i=1}^N\left[\frac{1}{|D_i|}\sum_{x\in D_i} (1-\mathbbm{1}_{(\phi(x)=i)}) \right]
\end{split}
\end{equation}
Let $n = \min_i(|D_i|)$, and assume that each $D_i$ is constructed as follows:
\begin{itemize}
    \item $\forall i,\,\,\,D_i' \sim \mathcal{D}_i^{|D_i'|}$
    \item At most a total of $m$ arbitrary (non-i.i.d.) samples are arbitrarily moved between the datasets $D_i'$, to create the datasets $D_i$.
\end{itemize}
Additionally, assume that $\exists \phi^*\in \Phi: \forall i,\,x\sim \mathcal{D}_i \implies \phi^*(x)= i$. Then, if
\begin{equation}
    n \geq \frac{8m}{\epsilon}\text{ and } n\geq \frac{64N\ln(2|\Phi|/\delta)}{7\epsilon^2} 
\end{equation}
then with probability at least $1-\delta$,
\begin{equation}
    \forall i\in [N],\,\,\,\Pr_{x\sim \mathcal{D}_i} (\phi'(x) = i) \geq 1-\epsilon.  \label{eq:multi_class_lemma_target}
\end{equation} 
\label{lem:multi_class}
\end{lemma}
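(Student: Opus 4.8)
The plan is to mirror the structure of the binary classification result (Lemma \ref{lemma:binary_classification}), decoupling the effect of the corruption from the sampling noise. First I would introduce the clean loss
\begin{equation*}
\mathcal{L}_{clean}(\phi) := \sum_{i=1}^N \frac{1}{|D_i'|}\sum_{x\in D_i'}(1-\mathbbm{1}_{(\phi(x)=i)}),
\end{equation*}
computed on the uncorrupted i.i.d. datasets $D_i'$, and observe that its expectation is exactly $\mathbb{E}[\mathcal{L}_{clean}(\phi)] = \sum_{i=1}^N \Pr_{x\sim\mathcal{D}_i}(\phi(x)\neq i)$, a sum of the per-index error rates we ultimately want to control. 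The strategy is then: bound $\mathcal{L}$ against $\mathcal{L}_{clean}$ (corruption), bound $\mathcal{L}_{clean}$ against its expectation (sampling), and convert the resulting aggregate bound on $\mathbb{E}[\mathcal{L}_{clean}(\phi')]$ into a per-index guarantee.

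For the corruption step, I would note that since the $m$ corrupting samples are \emph{moved} between datasets, each move removes one element from some $D_i'$ and inserts one into some $D_j'$, so the total symmetric difference satisfies $\sum_{i} |D_i \triangle D_i'| \le 2m$. Applying Proposition \ref{prop:algebra} term-by-term (with each summand $1-\mathbbm{1}_{(\phi(x)=i)}\in[0,1]$ and, taking the corrupted set $D_i$ as the base multiset, denominator $|D_i|\ge n$) yields $|\mathcal{L}(\phi)-\mathcal{L}_{clean}(\phi)| \le \sum_i |D_i\triangle D_i'|/n \le 2m/n$, simultaneously for every $\phi\in\Phi$.

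For the sampling step I would apply Hoeffding's inequality to $\mathcal{L}_{clean}(\phi)$ for each fixed $\phi$ and union-bound over $\Phi$. The crucial point, and the part requiring the most care, is the heterogeneous per-group scaling: $\mathcal{L}_{clean}(\phi)$ is a sum of $\sum_i |D_i'|$ independent variables, where the $|D_i'|$ variables contributed by index $i$ each lie in $[0,1/|D_i'|]$, so the sum of squared ranges is $\sum_i |D_i'|\cdot(1/|D_i'|)^2 = \sum_i 1/|D_i'| \le N/(n-m)$ (using $|D_i'|\ge n-m$). One must resist treating $\mathcal{L}_{clean}$ as a single empirical average; tracking the individual weights $1/|D_i'|$ is what produces the factor $N/(n-m)$ rather than something worse. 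Hoeffding then gives, with probability at least $1-\delta$, $|\mathcal{L}_{clean}(\phi)-\mathbb{E}[\mathcal{L}_{clean}(\phi)]| \le \sqrt{N\ln(2|\Phi|/\delta)/(2(n-m))}$ for all $\phi\in\Phi$ at once.

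Finally I would combine everything. Writing $\Delta := \sqrt{N\ln(2|\Phi|/\delta)/(2(n-m))} + 2m/n$, the two steps give $|\mathcal{L}(\phi)-\mathbb{E}[\mathcal{L}_{clean}(\phi)]| \le \Delta$ uniformly. Realizability forces $\mathbb{E}[\mathcal{L}_{clean}(\phi^*)]=0$, hence $\mathcal{L}(\phi^*)\le\Delta$; since $\phi'$ is the empirical minimizer, $\mathcal{L}(\phi')\le\mathcal{L}(\phi^*)\le\Delta$, and therefore $\sum_{i=1}^N \Pr_{x\sim\mathcal{D}_i}(\phi'(x)\neq i) = \mathbb{E}[\mathcal{L}_{clean}(\phi')] \le \mathcal{L}(\phi')+\Delta \le 2\Delta$. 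Because every term in this sum is nonnegative, each individual index obeys $\Pr_{x\sim\mathcal{D}_i}(\phi'(x)\neq i)\le 2\Delta$; this nonnegativity-to-per-index passage is what upgrades the aggregate bound into the uniform guarantee of Equation \ref{eq:multi_class_lemma_target}. It then remains to check the arithmetic: $n\ge 8m/\epsilon$ makes the corruption contribution $4m/n\le\epsilon/2$, while $n\ge 64N\ln(2|\Phi|/\delta)/(7\epsilon^2)$ together with $n-m\ge 7n/8$ makes the Hoeffding contribution at most $\epsilon/2$, so $2\Delta\le\epsilon$ as required. The main obstacle is the concentration step with its per-dataset weighting; the remainder is bookkeeping inherited from the binary case.
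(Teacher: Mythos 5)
Your proposal is correct and follows essentially the same argument as the paper's own proof: the same decomposition into a clean loss on the $D_i'$, the $2m/n$ corruption bound via Proposition \ref{prop:algebra}, Hoeffding with per-dataset weights $1/|D_i'|$ yielding the $\sqrt{N\ln(2|\Phi|/\delta)/(2(n-m))}$ term after a union bound, ERM comparison against the realizable $\phi^*$, and the nonnegativity step to pass from the aggregate bound on $\mathbb{E}[\mathcal{L}_{clean}(\phi')]$ to the per-index guarantee. Your final arithmetic (splitting $2\Delta$ into $\epsilon/2 + \epsilon/2$ using $n-m \geq 7n/8$) matches the paper's closing computation exactly.
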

\begin{proof}
    Define 
\begin{equation}
    \mathcal{L}_{clean}(\phi):=\sum_{i=1}^N\left[\frac{1}{|D_i'|}\sum_{x\in D_i'} (1-\mathbbm{1}_{(\phi(x)=i)}) \right].
\end{equation}
    Then, fix any $\phi \in \Phi$. From Proposition \ref{prop:algebra} (regarding each transfer of a sample as removing a sample into one multiset $D'_i$, and inserting a new sample into another)  we see that:
    \begin{equation}
    \mathcal{L}_{clean}(\phi) -\frac{2m}{n} \leq  \mathcal{L}(\phi) \leq  \mathcal{L}_{clean}(\phi)  +\frac{2m}{n} \label{eq:lemma_dirty}
    \end{equation}
Now, note that   $\mathcal{L}_{clean}$ is the sum of $|D_1'|$ random variables bounded on $[0,1/|D_1'|]$, and $|D_2'|$ random variables bounded on $[0,1/|D_2'|]$, et cetera, all of which are i.i.d. Then, by Hoeffding's Lemma and Equation \ref{eq:lemma_dirty}, with probability $1-\delta/|\Phi|$:
\begin{equation}
\begin{split}
  \mathbb{E}[\mathcal{L}_{clean}(\phi)]-  \sqrt{\left( \sum_{i\in [N]}\frac{1}{|D_i'|}\right)\ln(2|\Phi|/\delta)/2} -\frac{2m}{n} < \mathcal{L}_{clean}(\phi) -\frac{2m}{n}  &\leq \mathcal{L}(\phi)\\ \leq  \mathcal{L}_{clean}(\phi)  +\frac{2m}{n} < \mathbb{E}[\mathcal{L}_{clean}(\phi)]+  \sqrt{\left( \sum_{i\in [N]}\frac{1}{|D_i'|}\right)\ln(2|\Phi|/\delta)/2} +\frac{2m}{n}&
  \end{split}
\end{equation}
Because $\forall i, |D_i'| \geq n-m$, and applying union bound over all $\phi \in \Phi$, we have, with probability $1-\delta$, $\forall \phi \in \Phi,\,\,$:
\begin{equation}
 \mathbb{E}[\mathcal{L}_{clean}(\phi)]-  \sqrt{\frac{N\ln(2|\Phi|/\delta)}{2(n-m)}} -\frac{2m}{n}    < \mathcal{L}(\phi) <  \mathbb{E}[\mathcal{L}_{clean}(\phi)]+  \sqrt{\frac{N\ln(2|\Phi|/\delta)}{2(n-m)}} +\frac{2m}{n}. \label{eq:multi_test_ineq}
\end{equation}
Then we have:
\begin{equation}
\begin{split}
     \mathbb{E}[\mathcal{L}_{clean}(\phi')] < \mathcal{L} (\phi') + \sqrt{\frac{N\ln(2|\Phi|/\delta)}{2(n-m)}} +\frac{2m}{n}   \leq\\  \mathcal{L} (\phi^*) +  \sqrt{\frac{N\ln(2|\Phi|/\delta)}{2(n-m)}} +\frac{2m}{n}  <\mathbb{E}[\mathcal{L}_{clean}(\phi^*)]+  \sqrt{\frac{2N\ln(2|\Phi|/\delta)}{(n-m)}} +\frac{4m}{n} = \\\sqrt{\frac{2N\ln(2|\Phi|/\delta)}{(n-m)}} +\frac{4m}{n}.
     \label{eq:multi_lemma_bound}
\end{split}
\end{equation}
Where we use the fact that, by the definition of $\phi'$ as a minimizer, $\mathcal{L}(\phi') \leq \mathcal{L}(\phi^*) $, as well as the fact that, by definition, $\mathcal{L}_{clean}(\phi^*) = 0$.

Also, note that by the definition of $\mathcal{L}_{clean}$, we have that, for any $\phi$, 
\begin{equation}
  \mathbb{E}[\mathcal{L}_{clean}(\phi)] = \sum_{i\in [N]}   1 - \Pr_{x\sim \mathcal{D}_i}(\phi(x) = i) \
\end{equation}
Then, for any particular $i\in [N],$ we have that $ 1 - \Pr_{x\sim \mathcal{D}_i}(\phi(x) = i) \leq \mathbb{E}[\mathcal{L}_{clean}(\phi)]$. Then, by Equation \ref{eq:multi_lemma_bound}, we have, $\forall i \in [N]$:
\begin{equation}
    1 - \Pr_{x\sim \mathcal{D}_i}(\phi'(x) = i)  < \sqrt{\frac{2N\ln(2|\Phi|/\delta)}{(n-m)}} +\frac{4m}{n}.
\end{equation}

By algebra, our desired result (Equation \ref{eq:multi_class_lemma_target}) holds as long as:
\begin{equation} \sqrt{\frac{2N\ln(2|\Phi|/\delta)}{(n-m)}} +\frac{4m}{n} \leq \epsilon
\end{equation}

Which follows from the given conditions on $n$.
\end{proof}

\subsection{Main Proof of Theorem \ref{thm:main_thm} }
Here, we present the proof of Theorem \ref{thm:main_thm}. We first split out correctness proof of the main recursive step of the algorithm as a lemma:
\begin{lemma}
     In Algorithm \ref{alg:main_alg}, 
    suppose that the ground-truth data coverage assumptions given in Equations \ref{eq:pair_coverage},\ref{eq:alpha_seperation}, and \ref{eq:eta_coverage} all hold. Additionally, assume that the relative coverage lower-bound $\eta$ can be written in the form 
    \begin{equation}
        \eta = \frac{e^{-n_{\Xi}\alpha/8}}{1 + e^{-n_{\Xi}\alpha/8}}
    \end{equation}
    for some non-negative integer $n_{\Xi}$.  Further, assume that for each $s^* \in \mathcal{S}^*_{h}$, there exists some $s \in \mathcal{S}_{h}$ that represents approximately the same set of observations. In particular, each index in $[|\tau_A|]$ appears in at most one set $D_{A,h}(s)$ for some $s$ (and likewise for $[|\tau_B|]$ and  $D_{B,h}(s)$), and there exists some bijective mapping $\sigma_h: \mathcal{S}_{h} \rightarrow \mathcal{S}_{h}^*$, such that for most indices $j$ in $[|\tau_A|]$ 
    \begin{equation}
        j \in D_{A,h}(\sigma^{-1}_h(\phi^*_h((\tau_{A})_{h}[j])))  
    \end{equation}
    and for most indices $j$ in $[|\tau_B|]$ 
    \begin{equation}
        j \in D_{B,h}(\sigma^{-1}_h(\phi^*_h((\tau_{B})_{h}[j]))),
    \end{equation}  
    with at most a combined $\beta(|\tau_A| + |\tau_B| )$ indices in either dataset for which this does not hold.

For any $\epsilon$ such that:
\begin{equation}
        \epsilon  <\frac{\nu}{8} - \beta \label{eq:eps_nu_beta_condtion_1},
\end{equation}
and 
\begin{equation}
        \epsilon  + \beta\leq q_{thresh.}<\frac{\nu(1-\epsilon)}{2} -\beta \label{eq:q_thresh_condition},
\end{equation}
where $q_{thresh.}$ is the threshold defined on Line 11 of the algorithm,
assume that $\forall s_{h}^*, s_{h+1}^*, \text{such that } s_{h}^* \text{ can transition to }s_{h+1}^*,$
\begin{equation}
       |\mathcal{D}^*(s_{h}^*,s_{h+1}^*)| \geq \frac{12800 (\ln(4|\Phi|^2/\delta) + N_s^2 \ln(n_\Xi+1)) \ln^2(1/\eta)}{\nu\epsilon^2\eta^2 \alpha^4}. \label{eq:required_samples_recursive}
\end{equation}
Then, with high probability, for each $s^* \in \mathcal{S}^*_{h+1}$, there exists some $s \in \mathcal{S}_{h+1}$ that represents approximately the same set of observations.  In particular, each index in $[|\tau_A|]$ appears in at most one set $D_{A}(s)$ for some $s$ (and likewise for $[|\tau_B|]$ and  $D_{B}(s)$), and there exists some bijective mapping $\sigma_{h+1}: \mathcal{S}_{h+1} \rightarrow \mathcal{S}_{h+1}^*$, such that for most indices $j$ in $[|\tau_A|]$ 
    \begin{equation}
        j \in D_{A,h+1}(\sigma^{-1}_{h+1}(\phi^*_{h+1}((\tau_{A})_{h+1}[j])))  
    \end{equation}
    and for most indices $j$ in $[|\tau_A|]$ 
    \begin{equation}
        j \in D_{B,h+1}(\sigma^{-1}_{h+1}(\phi^*_{h+1}((\tau_{B})_{h+1}[j]))),
    \end{equation}  
    with at most a combined $(\beta+\epsilon)(|\tau_A| + |\tau_B| )$ indices in either dataset for which this does not hold. \label{lem:recursive_lemma}
\end{lemma}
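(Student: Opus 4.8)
The plan is to prove the inductive step by tracking, through a single iteration of the outer \texttt{for} loop over $h$, how the decoding quality degrades from a combined $\beta$-fraction of misassigned trajectory indices at timestep $h$ to a combined $(\beta+\epsilon)$-fraction at timestep $h+1$. The first step is to invoke Corollary \ref{cor:spread_bound} on the function $f_h$ learned by minimizing Eq.~\ref{eq:cluster_loss_main}. Crucially, $f_h$ is trained on \emph{all} observation-pairs, independently of the (possibly corrupted) assignments $D_{A,h},D_{B,h}$, so the corollary applies verbatim: setting its accuracy parameter to $\epsilon$ and using $\xi=\alpha/4$ (which turns the $1/\xi^4$ of the corollary into the $256/\alpha^4$ appearing in Eq.~\ref{eq:required_samples_recursive}, with the factor $4|\Phi|^2$ absorbing a union bound over this and the binary-classifier events), the sample bound Eq.~\ref{eq:required_samples_recursive} guarantees that, with high probability, for every transitionable pair $(s_h^*,s_{h+1}^*)$ all but an $\epsilon$-fraction of $\mathcal{D}^*(s_h^*,s_{h+1}^*)$ is mapped by $f_h$ into the two adjacent grid bins $\{\underline{c}_{s_h^*,s_{h+1}^*},\bar{c}_{s_h^*,s_{h+1}^*}\}$ bracketing $\ln(|\mathcal{D}_A^*|/|\mathcal{D}_B^*|)$.

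Next I would establish cluster \emph{separation}. For a fixed $s_h^*$ and two distinct successors $s_{h+1}^*,s_{h+1}'^*$, the $\alpha$-separation assumption Eq.~\ref{eq:alpha_seperation} (via the algebra displayed after the side-note box) forces the two log-odds centers to differ by at least $\alpha=4\xi$; since each cluster occupies at most its two adjacent bracketing grid bins, and since two grid points whose separation strictly exceeds $2\xi$ must in fact differ by at least $3\xi$, the occupied bins of distinct clusters are separated by at least two empty grid bins. This is the fact that lets the inner \texttt{while} loop recover exactly one cluster per genuine successor: I would argue that (i) each genuine cluster carries, by the coverage bound Eq.~\ref{eq:pair_coverage} and the $\epsilon$-concentration above, enough mass that at least one of its two bins exceeds $q_{thresh}(|\tau_A|+|\tau_B|)$ --- this uses the upper bound $q_{thresh}<\nu(1-\epsilon)/2-\beta$ of Eq.~\ref{eq:q_thresh_condition} --- while (ii) every bin strictly between clusters collects only stray mass, bounded by the $\epsilon$ misclustered pairs plus the $\beta$ trajectories misrouted into $\text{pairs}(s_{pred})$, which stays below threshold precisely because $q_{thresh}\ge\epsilon+\beta$. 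The one-bin padding together with the $j\leftarrow j'+2$ advance then collects each cluster in full and skips the $\geq2$ empty separating bins without ever splitting a cluster or merging two, so the $\mathcal{D}_{new}$ sets produced within $\text{pairs}(s_{pred})$ are in bijection with the genuine successors of $\sigma_h(s_{pred})$, up to the stated error mass.

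The third step handles the \emph{merging} of successor sets arising from different $s_{pred}$ via the binary-classifier test, using Lemma \ref{lemma:binary_classification}. By the block assumption, the realizability of $\mathcal{G}$ (Appendix \ref{sec:aaprealizability}), and the noise-independence property Eq.~\ref{eq:noise_free_policy_property}, the ``clean'' parts of each $\mathcal{D}_{new}$ and each stored $\mathcal{D}_s$ are i.i.d. draws from the emission distribution $\mathcal{Q}(s_{h+1}^*,P^e_{h+1})$ of a single latent state: two such sets coincide in distribution when the states agree, so no $g\in\mathcal{G}$ beats chance and the test yields loss $>1/2$ (``merge''), and they have disjoint support with a perfect separator $g^*\in\mathcal{G}$ when the states differ, yielding loss $\le1/2$ (``do not merge''). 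The contamination in each set is at most the $(\epsilon+\beta)$ error mass, which falls within the $m\le n/8$ robustness budget of Lemma \ref{lemma:binary_classification} precisely by the condition $\epsilon<\nu/8-\beta$ of Eq.~\ref{eq:eps_nu_beta_condtion_1} together with the clean-size lower bound Eq.~\ref{eq:pair_coverage} (the logarithmic sample condition of that lemma being dominated by Eq.~\ref{eq:required_samples_recursive}). The \texttt{merged\_already} flag ensures each existing state absorbs at most one new set, so the running assignments remain a partition and the induced map $\sigma_{h+1}$ stays bijective.

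Finally I would account for the total error. An index correctly routed at $h$ \emph{and} correctly binned by $f_h$ is always collected into the correct $\mathcal{D}_{new}$ and merged into the correct state by the above; hence the set of indices misassigned at $h+1$ is contained in the union of the indices misassigned at $h$ (at most $\beta(|\tau_A|+|\tau_B|)$) and the indices $f_h$ placed in a wrong bin (at most $\epsilon(|\tau_A|+|\tau_B|)$), giving the claimed combined bound $(\beta+\epsilon)(|\tau_A|+|\tau_B|)$. I expect this last accounting, made rigorous jointly with step two, to be the main obstacle: one must show the errors \emph{add} rather than \emph{multiply}, which is exactly the failure mode identified for ``recursive DRAFT.'' The delicate points are verifying that the window Eq.~\ref{eq:q_thresh_condition} is simultaneously loose enough to catch every true cluster and tight enough to reject every inter-cluster bin under the worst-case adversarial placement of the inherited $\beta$ errors, and that the discretization-based cluster recovery never lets an inherited error split or spuriously create a state --- it is this robustness of the bin-counting step, rather than any single concentration bound, that prevents the multiplicative blow-up and keeps the per-step error growth additive in $\epsilon$.
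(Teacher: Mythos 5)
Your proposal is correct and follows essentially the same route as the paper's proof: applying Corollary \ref{cor:spread_bound} (with $\xi=\alpha/4$ and the $\delta/2$ split, which indeed produces the $12800/\alpha^4$ and $4|\Phi|^2/\delta$ factors) to the independently-trained $f_h$, deriving the two-empty-bin cluster separation from Equation \ref{eq:alpha_seperation}, analyzing the threshold window exactly via $\epsilon+\beta\leq q_{thresh.}<\nu(1-\epsilon)/2-\beta$, invoking Lemma \ref{lemma:binary_classification} for the merge test under $\epsilon<\nu/8-\beta$, and concluding with the additive $(\beta+\epsilon)$ error accounting. No gaps beyond the deferred constant-verification that Equation \ref{eq:required_samples_recursive} dominates the classifier sample requirement, which the paper confirms by the chain through Equation \ref{eq:elim_eta_alpha}.
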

\begin{proof}
    We first show that the datasets of observation pairs $\mathcal{D}_{new\_pairs}$ defined in Line 21 of the algorithm each correspond uniquely to a pair of ground truth latent states in $\mathcal{S}_{h}^* \times \mathcal{S}_{h+1}^*$, such that no pair of observations is included in more than one such $\mathcal{D}_{new\_pairs}$ sets, and, with high probability, each pair of observations $x,x'$ is included in the correct $\mathcal{D}_{new\_pairs}$ corresponding to $(\phi^*_h(x),\phi^*_{h+1}(x'))$, with up to at most $(\beta+\epsilon)(|\tau_A| + |\tau_B| )$ exceptions. 

    For any $s_{pred}^* \in \mathcal{S}_{h}^*$, consider any two distinct $s^*,s'^* \in \mathcal{S}_{h+1}^*$, such that $| \mathcal{D}^*(s^*_{pred}, s^*)| , | \mathcal{D}^*(s^*_{pred}, s'^*)| > 0$. 

    Recall the assumption that, without loss of generality,
    \begin{equation}
       e^\alpha \cdot \frac{  \pi^{emp.}_B(s'^*| s^*_{pred})}{\pi^{emp.}_B(s^*| s^*_{pred})} \leq \frac{ \pi^{emp.}_A(s'^*| s^*_{pred})}{\pi^{emp.}_A(s^*| s^*_{pred}) },
    \end{equation}
    Multiplying both sides by $\pi^{emp.}_A(s^*| s^*_{pred})/\pi^{emp.}_B(s'^*| s^*_{pred})$ yields
    \begin{equation}
       e^\alpha \cdot \frac{ \pi^{emp.}_A(s^*| s^*_{pred})}{\pi^{emp.}_B(s^*| s^*_{pred})} \leq \frac{ \pi^{emp.}_A(s'^*| s^*_{pred})}{\pi^{emp.}_B(s'^*| s^*_{pred})},
    \end{equation}
    From the definition of $\pi^{emp.}$, this is:
    \begin{equation}
       e^\alpha \cdot \frac{ |\mathcal{D}_A^*(s^*_{pred}, s^*)| / |\mathcal{D}_A^*(s^*_{pred})| }{|\mathcal{D}_B^*(s^*_{pred}, s^*)| / |\mathcal{D}_B^*(s^*_{pred})| } \leq \frac{ |\mathcal{D}_A^*(s^*_{pred}, s'^*)| / |\mathcal{D}_A^*(s^*_{pred})| }{|\mathcal{D}_B^*(s^*_{pred}, s'^*)| / |\mathcal{D}_B^*(s^*_{pred})| }.
    \end{equation}
    Multiplying both sides by $|\mathcal{D}_A^*(s^*_{pred})|/|\mathcal{D}_B^*(s^*_{pred})|$ and taking the logarithms yields:
        \begin{equation}
       \alpha + \ln \left(\frac{ |\mathcal{D}_A^*(s^*_{pred}, s^*)|}{|\mathcal{D}_B^*(s^*_{pred}, s^*)| }\right) \leq \ln \left(\frac{ |\mathcal{D}_A^*(s^*_{pred}, s'^*)|}{|\mathcal{D}_B^*(s^*_{pred}, s'^*)| }\right).
    \end{equation}
By the definitions of $\underline{c}_{s^*_{h},s^*_{h+1}}$ and $\bar{c}_{s^*_{h},s^*_{h+1}}$ in Corollary \ref{cor:spread_bound}, and the fact that $\xi = \alpha/4$, we see that there must be at least two values in $\Xi$ between $\bar{c}_{s^*_{pred},s^*}$ and  $\underline{c}_{s^*_{pred},s'^*}$; that is to say: 
\begin{equation}
  \underline{c}_{s^*_{pred},s^*}  \leq \bar{c}_{s^*_{pred},s^*} < \bar{c}_{s^*_{pred},s^*} + \xi < \underline{c}_{s^*_{pred},s'^*} - \xi < \underline{c}_{s^*_{pred},s'^*}  \leq \bar{c}_{s^*_{pred},s'^*} \label{eq:xis_nonoverlapping}
\end{equation}

Therefore, by Corollary \ref{cor:spread_bound} we have, with probability at least $1-\delta/2$, for any $s^*$ such that  $|\mathcal{D}^*(s^*_{pred}, s^*)| > 0$:
\begin{itemize}
    \item At least $(1-\epsilon)|\mathcal{D}^*(s^*_{pred}, s^*)|$ of the samples in $\mathcal{D}^*(s^*_{pred}, s^*)$ will be  mapped by $f$ to $\underline{c}_{s^*_{pred},s^*}$ or  $\bar{c}_{s^*_{pred},s^*}$
    \item for some choice of $\hat{c}_{s^*_{pred},s^*} \in \{\underline{c}_{s^*_{pred},s^*}, \bar{c}_{s^*_{pred},s^*} \}$, at least $(1-\epsilon)/2 \cdot |\mathcal{D}^*(s^*_{pred}, s^*)|$ of the samples in $\mathcal{D}^*(s^*_{pred}, s^*)$ will be mapped to $\hat{c}_{s^*_{pred},s^*}$.
    \item By definition, $\{\underline{c}_{s^*_{pred},s^*}, \bar{c}_{s^*_{pred},s^*} \} \subset \{\hat{c}_{s^*_{pred},s^*} - \xi, \hat{c}_{s^*_{pred},s^*}, \hat{c}_{s^*_{pred},s^*} + \xi \}$.
    \item Furthermore, for no two states $s^*$, $s'^*$, with $\hat{c}_{s^*_{pred},s^*} \in \{\underline{c}_{s^*_{pred},s^*}, \bar{c}_{s^*_{pred},s^*} \}$ and $\hat{c}_{s^*_{pred},s'^*} \in \{\underline{c}_{s^*_{pred},s'^*}, \bar{c}_{s^*_{pred},s'^*} \}$ chosen arbitrarily, will the sets $\{\hat{c}_{s^*_{pred},s^*} - \xi, \hat{c}_{s^*_{pred},s^*}, \hat{c}_{s^*_{pred},s^*} + \xi \}$ and $\{\hat{c}_{s^*_{pred},s'^*} - \xi, \hat{c}_{s^*_{pred},s'^*}, \hat{c}_{s^*_{pred},s'^*} + \xi \}$ overlap (By Equation \ref{eq:xis_nonoverlapping}).
    \item Recall that by assumption, $ |\mathcal{D}^*(s^*_{pred}, s^*)|  \geq \nu (|\tau_A| + |\tau_B|)$. Therefore, at least $(\nu(1-\epsilon)/2) (|\tau_A| + |\tau_B|)$ of the samples in $\mathcal{D}^*(s^*_{pred}, s^*)$ will be mapped to $\hat{c}_{s^*_{pred},s^*} $. 
    \item The total number of samples in $\mathcal{D}^*(s^*_{pred}, s'^*)$, over \textit{all} choices of $ s'^*$, which are \textit{not} mapped by $f$ to a value in the respective set $\{\hat{c}_{s^*_{pred},s'^*} - \xi, \hat{c}_{s^*_{pred},s'^*}, \hat{c}_{s^*_{pred},s'^*} + \xi \}$, is at most $\epsilon|\mathcal{D}^*(s^*_{pred})|$.
    \item $\epsilon|\mathcal{D}^*(s^*_{pred})| \leq \epsilon (|\tau_A| + |\tau_B|).$
\end{itemize}
Therefore, as long as $(\nu(1-\epsilon)/2) > \epsilon$, then among the pairs in $\mathcal{D}^*(s^*_{pred},\cdot) := \uplus_{s'^*} \mathcal{D}^*(s^*_{pred},s'^*)$, if there is any $z\in \Xi$ such that $> \epsilon (|\tau_A| + |\tau_B|)$ of the pairs are mapped by $f$ to $z $, then we know that the set of elements in $\mathcal{D}^*(s^*_{pred},\cdot) $ which are mapped to $\{z-1,z,z+1\}$ contains at least $(1-\epsilon)$ of the elements of the set  $\mathcal{D}^*(s^*_{pred},s^*)$ for some $s^*$; furthermore, such a $z$ exists for each possible value of $s^*$ where  $|\mathcal{D}^*(s^*_{pred},s^*)| > 0$, and, for distinct $s^*$ and  $s'^*$, these values ($\{z-\xi,z,z+\xi\}$ and $\{z'-\xi,z',z'+\xi\}$) are non-overlapping. Consequently, by identifying subsets of $\mathcal{D}^*(s^*_{pred},\cdot) $ of size greater than  $\epsilon(|\tau_A| + |\tau_B|)$ that $f$ maps to the same value, and expanding these subsets to the elements in  $\mathcal{D}^*(s^*_{pred},\cdot)$ mapped to adjacent values in $\Xi$, we can partition $\mathcal{D}^*(s^*_{pred},\cdot)$ into subsets corresponding to each $\mathcal{D}^*(s^*_{pred},s^*)$, with at most $\epsilon|\mathcal{D}^*(s^*_{pred},\cdot)|$ errors.

Note however that we do not have access to $\mathcal{D}^*(s^*_{pred},\cdot)$, only to $\text{pairs}(s_{pred})$ (where $\sigma_h(s_{pred}) = s_{pred}^*$). However, by assumption, $\mathcal{D}^*(s^*_{pred},\cdot)$ and $\text{pairs}(s_{pred})$ differ (in terms of symmetric difference) by at most $\beta (|\tau_A|+|\tau_B|)$. Therefore, we claim that, if 
\begin{equation}
    \epsilon +  \beta \leq q_{thresh.} < \frac{\nu(1-\epsilon)}2  -\beta \label{eq:eps_nu_beta_condtion}
\end{equation}
then, we can identify values of  $\hat{c}_{s^*_{pred},s'^*}$ (for some $s'^*$) as those values $j\xi-\frac{n_\Xi\xi}{2}$ for which (as shown in Line 20 of Algorithm \ref{alg:main_alg}):
\begin{equation}
    \text{pred\_succ}[j]  > q_{thresh.}(|\tau_A| +|\tau_B|),
\end{equation}
and, conversely, if 
\begin{equation}
\text{pred\_succ}[j]  \leq  q_{thresh.}(|\tau_A| +|\tau_B|),
\end{equation}
then $j\xi-\frac{n_\Xi\xi}{2}$ does not correspond to some 
$\bar{c}_{s^*_{pred},s'^*}$ or $\underline{c}_{s^*_{pred},s'^*}$.

To validate this claim, note that if Equation \ref{eq:eps_nu_beta_condtion} holds, then:

\begin{equation*}
    \begin{split}
        \text{\# of samples }(x,x') \text{ in }\text{pairs}(s_{pred})\text{ such that }f(x,x') = z,\text{ if } \not \exists s^*:\, z \in \{\bar{c}_{s^*_{pred},s^*},\underline{c}_{s^*_{pred},s^*} \} &\leq \\
        \text{\# of samples }(x,x') \text{ in }\mathcal{D}^*(s^*_{pred})\text{ such that }f(x,x') = z,\text{ if } \not \exists s^*:\, z \in \{\bar{c}_{s^*_{pred},s^*},\underline{c}_{s^*_{pred},s^*} \}\,\\
        +\,|\text{pairs}(s_{pred})\setminus\mathcal{D}^*(s^*_{pred})|&\leq \\
 \epsilon(|\tau_A| + |\tau_B|) +
        \,|\text{pairs}(s_{pred})\setminus\mathcal{D}^*(s^*_{pred})|&\leq \\
              \textbf{(Note this line:)}\,\,\,\,\, \epsilon(|\tau_A| + |\tau_B|) +
        \beta(|\tau_A| + |\tau_B|)&\leq \\
        q_{thresh.}(|\tau_A| + |\tau_B|)&<\\
       (\nu(1-\epsilon)/2)(|\tau_A| + |\tau_B|) -
        \beta(|\tau_A| + |\tau_B|) &\leq\\
       (\nu(1-\epsilon)/2)(|\tau_A| + |\tau_B|) -
       |\mathcal{D}^*(s^*_{pred})\setminus\text{pairs}(s_{pred})|&\leq\\
\text{\# of samples }(x,x') \text{ in }\mathcal{D}^*(s^*_{pred})\text{ such that }f(x,x') = z,\text{ if } \exists s^*:\, z \in \{\bar{c}_{s^*_{pred},s^*},\underline{c}_{s^*_{pred},s^*} \} &\\
      -\, |\mathcal{D}^*(s^*_{pred})\setminus\text{pairs}(s_{pred})|&\leq\\
\text{\# of samples }(x,x') \text{ in }\text{pairs}(s_{pred})\text{ such that }f(x,x') = z,\text{ if } \exists s^*:\, z \in \{\bar{c}_{s^*_{pred},s^*},\underline{c}_{s^*_{pred},s^*} \} &
    \end{split}
\end{equation*}
Therefore, for any $s^*_{pred}$,
 we can define:

 \begin{equation}
\mathcal{D}^*_{new\_pairs}(s^*_{pred},j,j') := \{ (x,x') | (x,x') \in \biguplus_{k = j-1}^{j'} \text{pred\_succ}^*[k]\}
 \end{equation}
where
 \begin{equation}
     \text{pred\_succ}^*[k] := \{(x_{h},x_{h+1}) \in \mathcal{D}^*(s^*_{pred},\cdot)|f(x_{h},x_{h+1}) = k\xi - \frac{n_\Xi\xi}{2}\}.
 \end{equation}
If $j$ and $j'$ are chosen as in Line 19 and 20 of Algorithm \ref{alg:main_alg}, then for any pair $(s^*_{pred},s^*)$ there is a unique set $\mathcal{D}^*_{new\_pairs}(s^*_{pred},j,j')$ containing  at least a $(1-\epsilon)$ fraction of the samples in $\mathcal{D}(s^*_{pred},s^*)$. Furthermore, note that by assumption, summing over \textit{all pairs} $(s^*_{pred},s^*)$, the sets $\mathcal{D}^*_{new\_pairs}(s^*_{pred},j,j') $ and $\mathcal{D}_{new\_pairs} $ can differ by at most $\beta(|\tau_A| + |\tau_B|)$ members in total (because all datasets $\mathcal{D}^*(s^*_{pred},\cdot)$ and $\text{pairs}(s_{pred})$ differ by at most this many members in total). Therefore, we have shown that, with high probability, each pair of observations $x,x'$ is included in the correct $\mathcal{D}_{new\_pairs}$ corresponding to $(\phi_{h}^*(x),\phi_{h+1}^*(x'))$, with up to at most $(\beta+\epsilon)(|\tau_A| + |\tau_B| )$ exceptions. (Furthermore, different sets $\mathcal{D}_{new\_pairs}$ are non-overlapping by construction.)  Then we only need to show that, with high probability, Line 26 of Algorithm \ref{alg:main_alg} will only merge two sets $\mathcal{D}_{new}$ if these sets correspond to the same latent state. By Lemma \ref{lemma:binary_classification}, taking a union bound over all pairs of latent-state sequential latent-state pairs, we have, with probability at least $1-\delta/2$ that, if 
\begin{equation}
        \nu \geq 8(\beta + \epsilon)\text{ and } |\mathcal{D}^*(s_{h}^*,s_{h+1}^*)|  \geq \frac{128}{7} \ln(4\cdot(\text{\# of latent state pairs})\cdot|\mathcal{G}_{h+1}|/\delta)
\end{equation}
then the classifiers trained in Line 25 will have loss greater than 1/2 if and only if the two datasets being compared correspond to the same latent state. 
Also note that $(\text{\# of latent state pairs}) \leq 1/\nu$; then, under the assumption that $|\mathcal{G}_{h+1}| \leq |\Phi| (\leq |\Phi|^2)$, we have 
\begin{equation}
    \frac{128}{7} \ln(4\cdot(\text{\# of latent state pairs})\cdot|\mathcal{G}_{h+1}|/\delta) \leq \frac{128}{7} (\ln(4|\Phi^2|/\delta) + \ln(1/\nu)).
\end{equation}
Note that by Equation \ref{eq:xi_eta_1_bound} (and $\xi = \alpha/4$), we have 
\begin{equation}
 \frac{16\ln(1/\eta)}{\eta \alpha^2} \geq 1,    \label{eq:elim_eta_alpha}
\end{equation}

so we can write:
\begin{equation}
    \frac{128}{7} \ln(4\cdot(\text{\# of latent state pairs})\cdot|\mathcal{G}_{h+1}|/\delta) \leq 
\frac{128\cdot256}{7} \frac{(\ln(4|\Phi^2|/\delta) + \ln(1/\nu))\ln^2(1/\eta)}{\eta^2 \alpha^4}.
    \end{equation}
Also noting that $\epsilon \leq 1$, and  $\ln(4|\Phi|^2/\delta) \geq \ln(4) \geq 1$, and $1/\nu \geq \{\ln(1/\nu), 1\} $, we have:
\begin{equation}
    \frac{128}{7} \ln(4\cdot(\text{\# of latent state pairs})\cdot|\mathcal{G}_{h+1}|/\delta) \leq 
\frac{128\cdot256}{7} \frac{(\ln(4|\Phi^2|/\delta) + \ln(4|\Phi^2|/\delta) )\ln^2(1/\eta)}{\nu\epsilon^2\eta^2 \alpha^4}.
    \end{equation}
Then, because $N_s^2\ln(n_{\Xi}+1) > 0$, and $128\cdot256 \cdot 2/7 \leq 12800$, we have
\begin{equation}
    \frac{128}{7} \ln(4\cdot(\text{\# of latent state pairs})\cdot|\mathcal{G}_{h+1}|/\delta) \leq \frac{12800 (\ln(4|\Phi^2|/\delta) + N_s^2 \ln(n_\Xi+1)) \ln^2(1/\eta)}{\nu\epsilon^2\eta^2 \alpha^4}.
    \end{equation}
Therefore, the number of observations of each latent state pair $|\mathcal{D}^*(s_{h}^*,s_{h+1}^*)|$ assumed in Equation \ref{eq:required_samples_recursive} is sufficient to ensure that all datasets $\mathcal{D}_{new}$ will be merged correctly. Then, by union bound, with probability at least $1-\delta$, the samples corresponding to the indices in $D_{A,h+1}(s)$ and  $D_{B,h+1}(s)$ will correspond to the observations of a unique latent state $s^*$, with up to at most $(\beta + \epsilon)(|\tau_A| + |\tau_B|)$ exceptions.
\end{proof}

The following lemma is essentially Theorem \ref{thm:main_thm}, with a minor additional assumption:
\begin{lemma}
    Assume that Algorithm \ref{alg:main_alg} is given datasets $\tau_A$ and $\tau_B$ such that the assumptions given in Equations \ref{eq:noise_free_policy_property}, \ref{eq:pair_coverage},\ref{eq:alpha_seperation}, and \ref{eq:eta_coverage} all hold.  Additionally, assume that the relative coverage lower-bound $\eta$ can be written in the form 
    \begin{equation}
        \eta = \frac{e^{-n_{\Xi}\alpha/8}}{1 + e^{-n_{\Xi}\alpha/8}}
    \end{equation}
    for some non-negative integer $n_{\Xi}$.
    Then, for any given $\delta, \epsilon_0 \geq 0$, if 
$\forall s_{h}^*, s_{h+1}^*, \text{such that } s_{h}^* \text{ can transition to }s_{h+1}^*,$
\begin{equation}
       |\mathcal{D}^*(s_{h}^*,s_{h+1}^*)| \geq \frac{819200 H^2(\ln(8H|\Phi|^2/\delta) + N_s^2 \ln(n_\Xi+1)) \ln^2(1/\eta)}{\nu\eta^2 \alpha^4} \cdot \max\left(\frac{1}{\nu^2}, \frac{1}{\epsilon_0^2\nu'^2}\right), \label{eq:required_samples_n_xi}
\end{equation}
then, with probability at least $1-\delta$, the encoders $\phi'_h$ returned by the algorithm will each have accuracy on at least $1-\epsilon_0$, in the sense that, under some bijective mapping $\sigma_h : \mathcal{S}_{h} \rightarrow \mathcal{S}_{h}^*$, 
\begin{equation}
    \forall s^*\in \mathcal{S}_h^*,\,\,\,\Pr_{x\sim \mathcal{Q}(s^*,P^e_h)} (\phi'_h(x) = \sigma_h^{-1}(\phi^*_h(x))) \geq 1-\epsilon_0. \label{eq:conclusion_n_xi}
\end{equation}
\label{lem:theorem_n_xi}
\end{lemma}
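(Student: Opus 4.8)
The plan is to prove Lemma~\ref{lem:theorem_n_xi} by induction on the timestep $h$, using Lemma~\ref{lem:recursive_lemma} as the inductive step and Lemma~\ref{lem:multi_class} to convert the final labeled datasets into accurate encoders. The inductive invariant is precisely the hypothesis/conclusion structure of Lemma~\ref{lem:recursive_lemma}: after the algorithm processes timestep $h$, the constructed index sets $D_{A,h}(\cdot), D_{B,h}(\cdot)$ assign trajectories to states $\mathcal{S}_h$ in agreement with some bijection $\sigma_h$ onto $\mathcal{S}_h^*$, with at most $\beta_h(|\tau_A|+|\tau_B|)$ mislabeled indices. The base case $h=1$ is immediate: Lines~5--6 initialize $\mathcal{S}_1 = \{s_1\}$ with every trajectory assigned to the single state $s_1^*$, so $\beta_1 = 0$.

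For the inductive step I would fix the per-step error parameter of Lemma~\ref{lem:recursive_lemma} to $\epsilon := \tfrac{1}{8H}\min(\nu,\,\nu'\epsilon_0)$. Applying the lemma at timestep $h$ with incoming error $\beta_h$ yields outgoing error $\beta_{h+1}\leq \beta_h + \epsilon$, so by induction $\beta_h \leq (h-1)\epsilon \leq H\epsilon \leq \tfrac18\min(\nu,\nu'\epsilon_0)$ for every $h$. The crux is verifying that the hypotheses of Lemma~\ref{lem:recursive_lemma}, namely Equations~\ref{eq:eps_nu_beta_condtion_1} and~\ref{eq:q_thresh_condition}, continue to hold at every timestep despite the linearly growing $\beta_h$. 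This is exactly where the time-varying threshold $q_{thresh.} = h\nu/(8H)$ on Line~11 is engineered: in the binding case $\epsilon = \nu/(8H)$ with $\beta_h = (h-1)\epsilon$, one gets $\epsilon + \beta_h = h\nu/(8H) = q_{thresh.}$, matching the lower bound of Equation~\ref{eq:q_thresh_condition} with equality, while $q_{thresh.}\leq \nu/8 < \nu(1-\epsilon)/2 - \beta_h$ gives the upper bound since $\beta_h\leq\nu/8$ and $\epsilon$ is small; Equation~\ref{eq:eps_nu_beta_condtion_1} follows from $\epsilon + \beta_h \leq \nu/8$. The sample-size hypothesis of Lemma~\ref{lem:recursive_lemma}, Equation~\ref{eq:required_samples_recursive}, is then met by the assumed bound once $1/\epsilon^2$ is substituted.

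After the recursion completes at $h+1$, the datasets $\mathcal{D}_s$ assembled from the index sets are corrupted versions of the true single-state observation distributions $\mathcal{Q}(s^*,P^e_{h+1})$, with at most $\beta_{h+1}(|\tau_A|+|\tau_B|)\leq H\epsilon(|\tau_A|+|\tau_B|)$ total misassigned samples. I would invoke Lemma~\ref{lem:multi_class} on the final $\arg\min_{\phi\in\Phi_{h+1}}$ step of the loop body, with $N = |\mathcal{S}_{h+1}|\leq N_s$, corruption budget $m = H\epsilon(|\tau_A|+|\tau_B|)$, and minimum clean dataset size $n \geq \nu'(|\tau_A|+|\tau_B|)$ --- this is where the \emph{single-state} coverage $\nu'$, rather than the pair coverage $\nu$, enters. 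Its requirement $n\geq 8m/\epsilon_0$ reduces to $\nu'\geq 8H\epsilon/\epsilon_0$, which the choice $\epsilon\leq \nu'\epsilon_0/(8H)$ guarantees, and its second requirement is subsumed by the overall sample bound; the conclusion yields encoder accuracy at least $1-\epsilon_0$, which is Equation~\ref{eq:conclusion_n_xi}. The two competing lower bounds on $\epsilon$ --- one $\propto \nu/H$ from recursion feasibility, one $\propto \nu'\epsilon_0/H$ from final accuracy --- produce the $\max(1/\nu^2,\,1/(\epsilon_0^2\nu'^2))$ factor once $1/\epsilon^2 = 64H^2\max(1/\nu^2,1/(\nu'^2\epsilon_0^2))$ is substituted into Equation~\ref{eq:required_samples_recursive}, and the factor $64$ times the constant $12800$ of Equation~\ref{eq:required_samples_recursive} reproduces the $819200$ of Equation~\ref{eq:required_samples_n_xi}.

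Finally, I would allocate the failure probability as $\delta/(2H)$ per timestep, splitting each step's budget between the clustering guarantee of Corollary~\ref{cor:spread_bound} and the merging guarantee of Lemma~\ref{lemma:binary_classification}; a union bound over the $H-1$ recursive steps and the encoder-training steps then keeps the total failure probability below $\delta$, and this $\delta/(2H)$ allocation is what turns $\ln(4|\Phi|^2/\delta)$ into $\ln(8H|\Phi|^2/\delta)$. The main obstacle, and the part demanding the most care, is showing that Equation~\ref{eq:q_thresh_condition} stays satisfiable \emph{simultaneously} at its lower and upper ends for every $h$ as $\beta_h$ accumulates: the lower end must accommodate the growing total of misclustered pairs while the upper end must stay below the $\nu(1-\epsilon)/2$ coverage ceiling, and the choice $q_{thresh.} = h\nu/(8H)$ is precisely what threads this needle. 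Confirming that the linear-in-$H$ error growth never violates the coverage-based constraints uniformly in $h$ is the technical heart of the argument.
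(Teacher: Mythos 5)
Your proposal is correct and follows essentially the same route as the paper's proof: induction via Lemma~\ref{lem:recursive_lemma} with $\epsilon = \min(\nu,\nu'\epsilon_0)/(8H)$ and $\beta_h = (h-1)\epsilon$, verification of the threshold conditions in Equations~\ref{eq:eps_nu_beta_condtion_1} and~\ref{eq:q_thresh_condition} against $q_{thresh.} = h\nu/(8H)$, a final application of Lemma~\ref{lem:multi_class} with $n \geq \nu'(|\tau_A|+|\tau_B|)$ and corruption budget at most $\tfrac{\nu'\epsilon_0}{8}(|\tau_A|+|\tau_B|)$, and the $\delta/(2H)$ failure-probability allocation producing exactly the constants $64\cdot 12800 = 819200$ and the $\ln(8H|\Phi|^2/\delta)$ term. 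The one step you assert rather than verify --- that the second sample-size requirement of Lemma~\ref{lem:multi_class}, $\nu'(|\tau_A|+|\tau_B|) \geq 64\,N_s\ln(4H|\Phi|/\delta)/(7\epsilon_0^2)$, is subsumed by Equation~\ref{eq:required_samples_n_xi} --- is indeed true, and the paper checks it via a short chain of inequalities using $16\ln(1/\eta)/(\eta\alpha^2)\geq 1$ and $1/\nu' \geq \max_h|\mathcal{S}_h|$.
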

\begin{proof}
Note that the conclusion applies at timestep $h =1$ vacuously: there is only one latent state, and $\phi_1'$ returns a constant value. Further, $D_{A,1}(s_1)$ and $D_{B,1}(s_1)$ contain exactly the sets of trajectories in $\tau_A$ and $\tau_B$ which visit $s^*_1$ at step 1.

For subsequent steps, we apply Lemma \ref{lem:recursive_lemma} recursively, with:
\begin{itemize}
    \item $\epsilon = \min(\frac{\nu}{8H},\frac{\nu'\epsilon_0}{8H})$
    \item $\beta =(h-1)\epsilon$ 
    \item $\delta_{\text{Lemma \ref{lem:recursive_lemma}}} := \delta/(2H)$.
\end{itemize}
Note that the assumptions in Equations \ref{eq:eps_nu_beta_condtion_1} and \ref{eq:q_thresh_condition} are met, because:
\begin{equation}
   \epsilon = \epsilon + \beta -\beta = h\epsilon - \beta < H\epsilon - \beta = H\min(\frac{\nu}{8H},\frac{\nu'\epsilon_0}{8H}) -\beta < \frac{H\nu}{8H} - \beta < \frac\nu 8 -\beta 
\end{equation}
    
and 
\begin{equation}
    \epsilon + \beta = h\min\left(\frac{\nu}{8H},\frac{\nu'\epsilon_0}{8H}\right) \leq \frac{h\nu}{8H} = q_{thresh.}
\end{equation}
and 
\begin{equation}
\begin{split}
       q_{thresh.}  =&\\  q_{thresh.} + \beta + \frac{\epsilon\nu}{2}-\beta - \frac{\epsilon\nu}{2}=&\\
       \frac{h\nu}{8H}  +  (h-1 + \nu/2)\min\left(\frac{\nu}{8H},\frac{\nu'\epsilon_0}{8H}\right) - \beta  - \frac{\epsilon\nu}{2}\leq&\\
       \frac{h\nu}{8H} + \frac{(h-1 + \nu/2)\nu}{8H} -\beta  - \frac{\epsilon\nu}{2}<&\\ \frac{2H\nu}{8H}  - \frac{\epsilon\nu}{2}- \beta  <&\\
       \frac{\nu(1-\epsilon)}{2} - \beta.&
\end{split}
\end{equation}
Also, note that the assumption of Equation \ref{eq:required_samples_recursive} is met (by comparison to Equation \ref{eq:required_samples_n_xi}, with  $\epsilon = \min(\frac{\nu}{8H},\frac{\nu'\epsilon_0}{8H})$ and $\delta_{\text{Lemma \ref{lem:recursive_lemma}}} := \delta/(2H)$.) Finally, the inductive hypothesis, that $D_{A,h}(s)$ and $D_{B,h}(s)$  correspond to observations of some state $s^*$, with at most $\beta(|\tau_A|+|\tau_B|)$ exceptions, can be shown to hold. In particular, at iteration $h \geq 2$, we have that the input dataset has at most $\beta_h(|\tau_A|+|\tau_B|) = (\beta_{h-1} + \epsilon)(|\tau_A|+|\tau_B|)$ errors: we can confirm that $\beta_h = (h-1)\epsilon = (h-2)\epsilon + \epsilon = \beta_{h-1} + \epsilon$ for all $h \geq 2$, with $\beta_1 = 0$ (because there are no errors in $D_{A,1}(s_1)$ and $D_{B,1}(s_1)$).

Therefore, by induction and union bound, we can conclude that, with probability at least $1-\delta/2$, for each $ h \in [H]$ and each $s^* \in \mathcal{S}^*_{h}$, there exists some $s \in \mathcal{S}_{h}$ that represents approximately the same set of observations.  In particular, each index in $[|\tau_A|]$ appears in at most one set $D_{A}(s)$ for some $s$ (and likewise for $[|\tau_B|]$ and  $D_{B}(s)$), and there exists some bijective mapping $\sigma_h: \mathcal{S}_{h} \rightarrow \mathcal{S}_{h}^*$, such that for most indices $j$ in $[|\tau_A|]$ 
    \begin{equation}
        j \in D_{A,h}(\sigma^{-1}_h((\phi^*(\tau_{A})_{h}[j])))  
    \end{equation}
    and for most indices $j$ in $[|\tau_A|]$ 
    \begin{equation}
        j \in D_{B,h}(\sigma^{-1}_h(\phi^*((\tau_{B})_{h}[j]))),
    \end{equation}  
    with at most a combined $(H-1)\min(\frac{\nu}{8H},\frac{\nu'\epsilon_0}{8H})(|\tau_A| + |\tau_B| )$ indices in either dataset for which this does not hold.  Note in particular that fewer than $\frac{\nu'\epsilon_0}{8}(|\tau_A| + |\tau_B| )$ indices will be mis-categorized at any timestep. Then, by application of Lemma \ref{lem:multi_class} with $n \geq \nu'(|\tau_A| + |\tau_B| )$, $m = \frac{\nu'\epsilon_0}{8}(|\tau_A| + |\tau_B| )$, $\delta_{\text{Lemma \ref{lem:multi_class}}} = \delta/(2H)$ and $\epsilon=\epsilon_0$, we have that as long as:
    \begin{equation}
        \nu'(|\tau_A| + |\tau_B| ) \geq \frac{64\cdot \max_i|\mathcal{S}_i|\cdot \ln(4H|\Phi|/\delta)}{7\epsilon_0^2}, \label{eq:class_samples_n_xi}
    \end{equation}
    then, by union bound, with probability at least $1-\delta$, the encoders learned on line 46 of Algorithm \ref{alg:main_alg} will each have accuracy at least $1-\epsilon_0$ as in Equation \ref{eq:conclusion_n_xi}, as desired. All that remains to be shown is that Equation \ref{eq:class_samples_n_xi} holds. Note that this equation can be re-written as:
        \begin{equation}
       |\tau_A| + |\tau_B| \geq \frac{64\cdot \max_h|\mathcal{S}_h|\cdot \ln(4H|\Phi|/\delta)}{7\nu'\epsilon_0^2} .
    \end{equation}
    Then, we have:
\begin{equation}
    \begin{split}
        |\tau_A| + |\tau_B|  \geq &\\
          |\mathcal{D}^*(s_{h}^*,s_{h+1}^*)| \geq &\\
\frac{819200 H^2(\ln(8H|\Phi|^2/\delta) + N_s^2 \ln(n_\Xi+1)) \ln^2(1/\eta)}{\nu\eta^2 \alpha^4} \cdot \max\left(\frac{1}{\nu^2}, \frac{1}{\epsilon_0^2\nu'^2}\right) \geq      &\\     
\frac{819200 H^2(\ln(8H|\Phi|^2/\delta) + N_s^2 \ln(n_\Xi+1)) \ln^2(1/\eta)}{\epsilon_0^2\nu'^2\nu\eta^2 \alpha^4}  \geq      &\\ 
\text{(by Equation \ref{eq:elim_eta_alpha})\,\,\,\,\,} \frac{3200 H^2(\ln(8H|\Phi|^2/\delta) + N_s^2 \ln(n_\Xi+1)) }{\epsilon_0^2\nu'^2\nu} \geq      &\\ 
\text{(log. of integer $\geq 0$)\,\,\,\,\,} \frac{3200 H^2\ln(8H|\Phi|^2/\delta)  }{\epsilon_0^2\nu'^2\nu} \geq      &\\ 
\text{(By definition, ($1/\nu' \geq  \max_h|\mathcal{S}_h|$)\,\,\,\,\,} \frac{3200 H^2 \max_h|\mathcal{S}_h| \ln(8H|\Phi|^2/\delta)  }{\epsilon_0^2\nu'\nu} \geq      &\\ 
   \frac{64\cdot \max_h|\mathcal{S}_h|\cdot \ln(4H|\Phi|/\delta)}{7\nu'\epsilon_0^2} &    \end{split}
\end{equation}
completing the proof.
\end{proof}
Finally, we prove Theorem \ref{thm:main_thm}:

\mainthm*
\begin{proof}
This final theorem follows close-to-directly from Lemma \ref{lem:theorem_n_xi}, with the caveat that we no longer assume that $\eta = (e^{-n_{\Xi}\alpha/8})/(1 + e^{-n_{\Xi}\alpha/8})$
    for some non-negative integer $n_{\Xi}$. To do this, it is important to note that the provided $\eta$ is a \textit{lower bound}: if we replace $\eta$ in the algorithm with any arbitrary $\eta' \leq \eta$, then Lemma \ref{lem:theorem_n_xi} will still apply, with a sample-complexity in terms of $\eta'$ rather than $\eta$. Similarly, $\alpha$ is a lower-bound, and so Lemma \ref{lem:theorem_n_xi} will apply for any smaller $\alpha'$. Our task is then to replace $\eta$ and $\alpha$ with some $\eta'$ and $\alpha'$, such that the \textit{asymptotic} sample complexity as given by Equation \ref{eq:asymptotic_complexity} still applies. For simplicity, we can write the sample-complexity given in Equation \ref{eq:required_samples_n_xi} as:

\begin{equation}
           |\mathcal{D}^*(s_{h}^*,s_{h+1}^*)| \geq \frac{(C_1 + C_2 \ln(n_\Xi+1)) \ln^2(1/\eta')}{\eta'^2 \alpha'^4}, 
\end{equation}
where $C_1$ and $C_2$ are independent of $\alpha$, $\eta$, and $n_{\Xi}$.
Now, we must choose $\eta'$ such that:
    \begin{equation}
        \eta \geq \eta' =  \frac{e^{-n_{\Xi}\alpha/8}}{1 + e^{-n_{\Xi}\alpha/8}} \left( = \frac{1}{1 + e^{n_{\Xi}\alpha/8}}\right).
    \end{equation}
An obvious choice is to take (recalling that by definition, $\eta \leq 1/2$, so $\ln(\eta^{-1}-1) > 0$):
\begin{equation}
    n_{\Xi} := \lceil 8\ln(\eta^{-1}-1)/\alpha\rceil 
\end{equation}
so that:
\begin{equation}
    \eta' = \frac{1}{1 + e^{\lceil 8\ln(\eta^{-1}-1)/\alpha\rceil\alpha/8}}
\end{equation}
Then we have:
\begin{equation}
    \eta \geq \eta' \geq \frac{1}{1 + e^{( 8\ln(\eta^{-1}-1)/\alpha+1)\alpha/8}} \geq \eta \cdot e^{-\alpha/8}
\end{equation}
so that we have sufficient samples for Lemma \ref{lem:theorem_n_xi} if:
\begin{equation}
           |\mathcal{D}^*(s_{h}^*,s_{h+1}^*)| \geq \frac{(C_1 + C_2 \ln(   8\ln(\eta^{-1}-1)/\alpha'+2)) (\ln(1/\eta) + \alpha'/8 )^2  e^{\alpha'/4}}{\eta^2 \alpha'^4},  \label{eq:theor_pre_alpha}
\end{equation}
Strictly speaking, Equation \ref{eq:theor_pre_alpha}  with $\alpha' = \alpha$ satisfies the ``big-O'' asymptotic complexity given in Equation \ref{eq:asymptotic_complexity} in terms of $1/\alpha$ and $1/\eta$ as these quantities approach infinity. However, if we just take $\alpha' = \alpha$, notice that Equation \ref{eq:theor_pre_alpha} seems to require an exponentially large number of samples for large $\alpha$. Recall though that $\alpha$ is a \textit{lower bound}, so we can simply select an arbitrarily lower $\alpha'$ in the case of large $\alpha$. In particular, if we take $\alpha' = \min(1,\alpha)$, then $\alpha' \leq \alpha$ as needed, and (ignoring lower-order polynomial terms and all logarithmic factors), the dependence of our sample complexity on $\alpha$ becomes:
\begin{equation}
\min(e^{\alpha/4}/\alpha^4, e^{1/4}/1^4) = \min(e^{\alpha/4}/\alpha^4, e^{1/4}) \leq C \cdot 1/\alpha^4
\end{equation}
so that the sample complexity is bounded even for large $\alpha$. 

These modifications to $\eta$ and $\alpha$ are performed on Lines 1-3 of Algorithm \ref{alg:main_alg}, so the overall asymptotic sample complexity given in Equation \ref{eq:asymptotic_complexity} holds for the algorithm overall, with the input $\alpha$ and $\eta$.
\end{proof}

\section{Experiment Details} \label{sec:experiment_details}
For the hyperparameters $\eta, \nu$ and $\alpha$ of CRAFT, we use the ``population'' values based on the ground-truth dynamics and policy. In other words, we set:
\begin{equation}
\begin{split}
        e^\alpha= \min_{s_h^*,s_{h+1}^*,s_{h+1}'^*} \max\Bigg[&\left(\frac{\Pr_{\pi_A}(s_{h+1}'^*|s_h^*)/\Pr_{\pi_A}(s_{h+1}^*|s_h^*)}{\Pr_{\pi_B}(s_{h+1}'^*|s_h^*)/\Pr_{\pi_B}(s_{h+1}^*|s_h^*)}\right), \\
        &
        \left(\frac{\Pr_{\pi_B}(s_{h+1}'^*|s_h^*)/\Pr_{\pi_B}(s_{h+1}^*|s_h^*)}{\Pr_{\pi_A}(s_{h+1}'^*|s_h^*)/\Pr_{\pi_A}(s_{h+1}^*|s_h^*)}\right)\Bigg] = \frac{0.75/0.25}{0.5/0.5} = 3
\end{split}
\end{equation}
so $\alpha = \ln(3)$, and 
\begin{equation}
    \nu = \min_{s_h^*,s_{h+1}^*} \frac{\Pr_{\pi_A}(s_h^*,s_{h+1}^*) +\Pr_{\pi_B}(s_h^*,s_{h+1}^*)}{2} = \frac{1/4 + 1/16}{2} = \frac{5}{32},
\end{equation}
and 
\begin{equation}
    \eta = \min_{s_h^*,s_{h+1}^*} \frac{\Pr_{\pi_B}(s_h^*,s_{h+1}^*)}{\Pr_{\pi_A}(s_h^*,s_{h+1}^*) +\Pr_{\pi_B}(s_h^*,s_{h+1}^*)} = \frac{1/16}{1/4 + 1/16} = \frac{1}{5}.
\end{equation}
For the ``Single observation classification'' and ``Paired observation classification'' baselines, we select the feature $\phi_h$ (or feature-pair  $\phi_h,\phi_{h+1}$) such that the mutual information between $\phi_h(x_h)$ (respectively, $(\phi_h(x_h),\phi_{h+1}(x_{h+1}))$) and the agent's identity is maximized on the collected trajectories.

The ``Average Encoder Accuracy'' was computed based on the ``population'' behavior of the environment: that is, the accuracy of encoder $\phi'_h$ which extracts the feature $(s_h^* \text{ XOR } e^i_h)$ from $x_h$ is computed as  $\max(\Pr(e^i_h = 1), \Pr(e^i_h = 0))$, which can be determined analytically from the parameters of Markov chain $e^i$. For a given algorithm, this quantity was then averaged over timesteps for the returned encoder.

For the ``Paired observation classification'' baseline, note that for timesteps $h=2$ through $h=H-1$, the suggested baseline could refer two distinct encoders: the encoder $\phi_h$ such that $\phi_h(x_h)$ and some $\phi_{h+1}(x_{h+1})$ are together most informative at predicting the agent observing $(x_h,x_{h+1})$; \textit{or} the encoder $\phi_h'$ such that $\phi_h'(x_h)$ and some $\phi_{h-1}(x_{h-1})$ are together most informative at predicting the agent observing $(x_{h-1},x_{h})$. In reporting the final encoder accuracies, took the average accuracy of these two encoders.

\end{document}